\documentclass{article}
\usepackage[final]{neurips_2025}

\usepackage[utf8]{inputenc} 
\usepackage[T1]{fontenc}    
\usepackage{url}            
\usepackage{booktabs}       
\usepackage{amsfonts}       
\usepackage{nicefrac}       
\usepackage{microtype}      

\usepackage{microtype}
\usepackage{graphicx}
\usepackage{subfigure}
\usepackage{booktabs} 

\usepackage[dvipsnames]{xcolor}
\usepackage[colorlinks=true]{hyperref}
\definecolor{mydarkblue}{rgb}{0,0.08,0.45}
\hypersetup{%
,urlcolor=mydarkblue
,citecolor=mydarkblue
,linkcolor=red
}
\usepackage{url}

\newcommand{\gap}[1]{{\color{blue}{#1}}}

\newcommand{\GD}{\texttt{GD}}
\newcommand{\GA}{\texttt{GA}}
\newcommand{\NGD}{\texttt{NGD}}
\newcommand{\EUk}{\texttt{EU-k}}
\newcommand{\CFk}{\texttt{CF-k}}
\newcommand{\SCRUB}{\texttt{SCRUB}}
\newcommand{\NegGrad}{\texttt{NegGrad+}}

\newcommand{\SSD}{\texttt{SSD}}

\newcommand{\NTK}{\texttt{NTK}}
\newcommand{\Fisher}{\texttt{Fisher}}
\newcommand{\CR}{\texttt{CR}}

\newcommand{\Original}{\texttt{Original}}
\newcommand{\Retrain}{\texttt{Re-train}}
\newcommand{\RURK}{\texttt{RURK}}

\usepackage{amsmath,amsfonts,bm}
\usepackage{amssymb}
\usepackage{mathtools}
\usepackage{bbm}

\usepackage[capitalize,noabbrev]{cleveref}

\usepackage{amsthm}
\theoremstyle{plain}

\usepackage{amsthm}
\newtheorem{defn}{Definition}

\newtheorem{lem}{Lemma}

\newtheorem{prop}{Proposition}

\def\bc{{\mathbf{c}}}

\def\bg{{\mathbf{g}}}

\def\bn{{\mathbf{n}}}

\def\bs{{\mathbf{s}}}

\def\bw{{\mathbf{w}}}
\def\bx{{\mathbf{x}}}

\def\bz{{\mathbf{z}}}

\def\bB{{\mathbf{B}}}

\def\bF{{\mathbf{F}}}

\def\bI{{\mathbf{I}}}

\def\bM{{\mathbf{M}}}

\def\bP{{\mathbf{P}}}

\def\bR{{\mathbf{R}}}

\def\bV{{\mathbf{V}}}

\def\calB{{\mathcal{B}}}
\def\calC{{\mathcal{C}}}

\def\calE{{\mathcal{E}}}

\def\calH{{\mathcal{H}}}

\def\calN{{\mathcal{N}}}
\def\calO{{\mathcal{O}}}

\def\calR{{\mathcal{R}}}
\def\calS{{\mathcal{S}}}
\def\calT{{\mathcal{T}}}

\def\calV{{\mathcal{V}}}
\def\calW{{\mathcal{W}}}
\def\calX{{\mathcal{X}}}
\def\calY{{\mathcal{Y}}}
\def\calZ{{\mathcal{Z}}}

\newcommand{\E}{\mathbb{E}}

\newcommand{\R}{\mathbb{R}}

\DeclareMathOperator*{\argmin}{arg\,min}

\newcommand{\surf}{\mathsf{surf}}
\newcommand{\supp}{\mathsf{support}}

\def\rmI{{\mathbf{I}}}

\usepackage{makecell}
\usepackage{multirow}
\usepackage{caption}
\usepackage{adjustbox}
\usepackage{subcaption}
\usepackage{wrapfig}
\usepackage[linesnumbered,algoruled,boxed,lined]{algorithm2e}

\title{The Unseen Threat: Residual Knowledge in \\ Machine Unlearning under Perturbed Samples}

\author{%
  Hsiang Hsu${}^1$\thanks{Correspondence to: Hsiang Hsu  <\texttt{hsiang.hsu@jpmchase.com}>.}, Pradeep Niroula${}^1$, Zichang He${}^1$\\ 
  \bf{Ivan Brugere${}^2$, Freddy Lecue${}^2$, and Chun-Fu Chen${}^1$}\\
  ${}^1$JPMorganChase Global Technology Applied Research\\
  ${}^2$JPMorganChase AI Research\\
}

\begin{document}

\maketitle

\begin{abstract}
  Machine unlearning offers a practical alternative to avoid full model re-training by approximately removing the influence of specific user data. While existing methods certify unlearning via statistical indistinguishability from re-trained models, these guarantees do not naturally extend to model outputs when inputs are adversarially perturbed. In particular, slight perturbations of forget samples may still be correctly recognized by the unlearned model---even when a re-trained model fails to do so---revealing a novel privacy risk: information about the forget samples may persist in their local neighborhood. In this work, we formalize this vulnerability as residual knowledge and show that it is inevitable in high-dimensional settings. To mitigate this risk, we propose a fine-tuning strategy, named RURK, that penalizes the model’s ability to re-recognize perturbed forget samples. Experiments on vision benchmarks with deep neural networks demonstrate that residual knowledge is prevalent across existing unlearning methods and that our approach effectively prevents residual knowledge.
\end{abstract}

\section{Introduction}

The widespread use of user data in training machine learning (ML) models has raised significant privacy concerns, particularly when user data is memorized by models such as deep neural networks, and can later be extracted or reconstructed \citep{carlini2023extracting, li2024machine}. 
This memorization violates regulations such as the ``Right to be Forgotten'' in EU’s GDPR \citep{voigt2017eu}, which mandates that, upon a user's removal request, an ML service provider must not only delete the user data from databases but also ensure its removal from the ML models themselves \citep{shastri2019seven}. 
As a result, simply deleting the data is often insufficient; instead, re-training the model from scratch without the specific user data is necessary. However, this re-training process is computationally expensive and impractical in real-time or large-scale settings \citep{ginart2019making}.

To address this challenge, \emph{machine unlearning} has been proposed as a more scalable alternative that \emph{approximately} removes the influence of the specific data (referred to as forget samples) from a pre-trained model, as a substitute for avoiding full re-training \citep{cao2015towards, bourtoule2021machine}.
This approach, known as \emph{approximate unlearning}, is often certified through statistical indistinguishability between the unlearned model and one re-trained without the forget samples \citep{guo2019certified, chourasia2023forget}.

Theoretical formulations of approximate unlearning (cf.~\S~\ref{sec:related-work}) suggest that an unlearned model should behave similarly to a re-trained model on forget samples, for example, by producing similar predictions or classification accuracy. 
However, these guarantees typically apply only to the original samples and do not extend to their proximities, especially under imperceptible adversarial perturbations. 
In complex hypothesis spaces, even minor input changes can cause the unlearned and re-trained models to \emph{disagree} in their predictions, creating an additional layer of privacy risk. 
A particularly concerning case is when a slightly perturbed forget sample is still correctly classified by the unlearned model but not by the re-trained one. 
This reveals the presence of \emph{residual knowledge}---latent traces of the forget samples that still persist in the unlearned model (cf.~Figure~\ref{fig:intro}).
Residual knowledge is a prevalent issue.
For instance, on the CIFAR-10 dataset, when subjected to a small perturbation norm ($\approx 0.03$), over 7\% of the forget samples still exhibit residual knowledge (cf.~Appendix~\ref{app:prevalence-rk}).

In this paper, we study how adversarially perturbed inputs affect the indistinguishability between unlearned and re-trained models in the classification setting. 
In \S~\ref{sec:perturbed-sample}, we show that adversarial examples can reliably distinguish between the two models, even under certified approximate unlearning. 
We formalize this observation by demonstrating that adversarial attacks can induce probabilistically distinguishable outputs. 
Further, using geometric probability \citep{talagrand1995concentration}, we prove that such disagreement is inevitable in high-dimensional input spaces. 
These findings underscore the need to explicitly incorporate robustness against such local disagreement into unlearning frameworks.

To capture this risk, \S~\ref{sec:residual-knowledge-algo} introduces the notion of residual knowledge, which quantifies the likelihood that an unlearned model retains predictive traces of forget samples under perturbation. 
As a more tractable proxy for disagreement, residual knowledge exposes a vulnerability not addressed by current certification methods. 
To mitigate this, we propose \RURK{}, a fine-tuning strategy for \underline{R}obust \underline{U}nlearning that suppresses \underline{R}esidual \underline{K}nowledge while maintaining accuracy on the rest of the samples.
Our approach identifies and penalizes perturbed inputs that the unlearned model still classifies correctly, thus enhancing robustness against residual knowledge.
We empirically validate the existence of such local disagreement and residual knowledge across multiple unlearning algorithms in \S~\ref{sec:exp}. 
We further demonstrate that our fine-tuning strategy effectively reduces residual knowledge on standard vision datasets using deep neural networks. 
To the best of our knowledge, this is the first work to uncover this novel privacy risk and provide a scalable solution to address it. 
We conclude in \S~\ref{sec:conclusion} with a discussion of limitations and future research directions.

Omitted proofs, additional explanations and discussions, details on experiment setups and training, and additional experiments are included in Appendices~\ref{appendix:omitted-proofs},~\ref{appendix:exp-setup} and~\ref{appendix:additional-experiments}, respectively.

\begin{figure}[!t]
\centering 
\includegraphics[width=.9\linewidth]{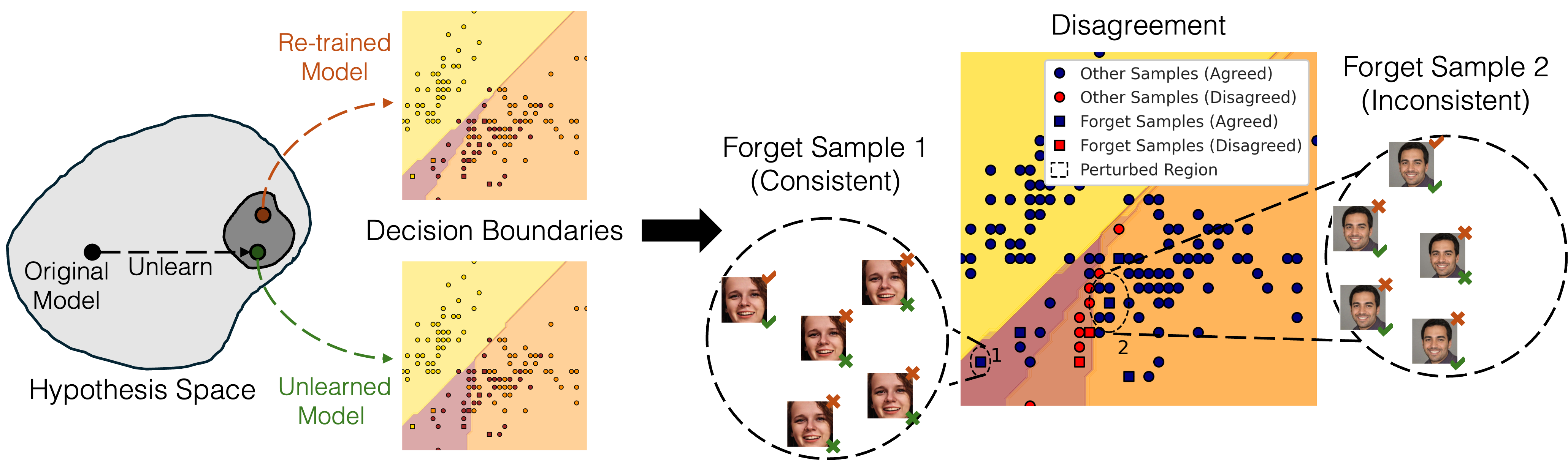}
\caption{\small 
The re-trained (brown) and unlearned (green) models are statistically similar but may have slightly different decision boundaries (\textbf{left}), leading to disagreements on forget samples (\textbf{right}). 
Checkmarks and crosses on the images indicate correct and incorrect predictions from the re-trained model (top) and the unlearned model (bottom), respectively.
Ideally---as shown with forget sample 1---both models should behave consistently across the original and all perturbed inputs.
Residual knowledge in machine unlearning is illustrated by comparing prediction correctness: for forget sample 2, both models agree on the original sample, but the unlearned model correctly predicts more of its perturbed variants.
see Appendix~\ref{app:intro-figure-setting} for experimental details.
}
\label{fig:intro}
\end{figure}

\vspace{-1em}
\section{Background and related work}\label{sec:related-work}
Let $\calS \triangleq \{\bs_i = (\bx_i, y_i)\}_{i=1}^n$ be a training dataset with $\bx_i \in \calX \subseteq \R^d$ and $y_i \in \calY$. 
The hypothesis space $\calH \triangleq \{ h_\bw: \calX \to \calY; \bw \in \calW \}$ consists of models parameterized by $\bw$.
We use $h$ and $\bw$ interchangeably to refer to a model in $\calH$.
Let $\ell: \calW \times \calS \to \bR^+$ be the loss function, and define the empirical risk as $L(\bw, \calS) = \frac{1}{n} \sum_{i=1}^n \ell(\bw, \bs_i)$. 
A (randomized) learning algorithm $A: \calS \to \calH$, such as Stochastic Gradient Descent (SGD), returns a model minimizing $L(\bw, \calS)$ and induces a distribution over $\calH$. 
We denote the $\ell_p$-norm by $\|\bz\|_p$, and the $\ell_p$ ball of radius $\tau$ centered at $\bx$ by $\calB_p(\bx, \tau) \triangleq \{\bz \in \calX : \|\bz - \bx\|_p \leq \tau\}$. 
Also, let $\mathbbm{1}(\cdot)$ be the indicator function, $\surf(\calZ)$ the surface area of a set $\calZ$ in $\R^d$, and $\calO(\cdot)$ the big-O notation.

\subsection{Machine unlearning}\label{sec:related-work-unlearning-algos}
The forget set $\calS_f \subseteq \calS$ contains training samples to be removed. Machine unlearning is modeled as a randomized mechanism $M(A(\calS), \calS, \calS_f)$ that removes the influence of $\calS_f$ from a pre-trained model $A(\calS)$ \citep{mul_survey2023}. 
Like the learning algorithm itself, the output of the unlearning mechanism can be regarded as a random variable.
A simple approach adds Gaussian noise to the model weights: $M(\bw, \calS, \calS_f) = \bw + \sigma \bn$, with $\bn \sim \mathcal{N}(0, \rmI_{|\bw|})$ \citep{golatkar2020eternal}.
However, as $\sigma$ increases, the model becomes increasingly independent of the training data, potentially degrading performance.
To preserve utility, unlearning should retain accuracy on the retain set $\calS_r \triangleq \calS \setminus \calS_f$.
A na\"ive yet effective strategy is to re-train the model from scratch on the retain set, i.e., $M(A(\calS), \calS, \calS_f) = A(\calS_r)$. 
While this achieves \emph{exact unlearning}, it is often impractical due to the high computational cost of re-training for each unlearning request.
Therefore, the core objective of machine unlearning is to efficiently eliminate the influence of $\calS_f$ while preserving utility on $\calS_r$, all without full re-training.

This objective has spurred a vast literature on machine unlearning. 
Here, we outline the broad trends and defer details of specific algorithms to \S~\ref{sec:exp}. 
For comprehensive surveys, see \citet{nguyen2022survey, mul_survey2023, wang2024machine}.
Initial efforts focused on image classifiers, aiming to remove the influence of specific training images from a pre-trained model \citep{ginart2019making, wu2020deltagrad, neel2021descent, sekhari2021remember, izzo2021approximate, fan2023salun, kurmanji2024towards, goel2022towards, zhang2024towards, kodgedeep}. 
This idea was later extended to erasing abstract concepts \citep{ravfogel2022linear, belrose2023leace}, as well as adapting unlearning to broader model classes, including image generators \citep{li2024machine, gandikota2023erasing} and large language models \citep{eldan2023s, yao2024machine, liu2025rethinking, jang2022knowledge, wang2023kga}.
Another line of work develops unlearning methods tailored to specific model families---such as linear classifiers \citep{guo2019certified}, kernel methods \citep{golatkar2020forgetting, zhang2022rethinking}, and tree-based models \citep{brophy2021machine, schelter2021hedgecut}.

\subsection{Certification of machine unlearning}\label{sec:certificate-unlearning}
A valid unlearning mechanism ensures that the unlearned model $M(A(\calS), \calS, \calS_f)$ is statistically similar to a re-trained model $A(\calS_r)$.
We denote the unlearned and re-trained models as random variables $M$ and $A$, with corresponding distributions $P_M$ and $P_A$, respectively.
This requirement is formalized via $(\epsilon, \delta)$-indistinguishability:
\begin{defn}[$(\epsilon, \delta)$-indistinguishability]\label{def:indistinguishability}
Let $X$ and $Y$ be two random variables over a domain $\Omega$. 
$X$ and $Y$ are said to be $(\epsilon, \delta)$-indistinguishable, also denoted as $X \overset{\epsilon,\delta}{\approx} Y$, if for all $\calT \subseteq \Omega$
\begin{equation}
     e^{-\epsilon}\left(\Pr[Y \in \calT]-\delta\right)\leq \Pr[X \in \calT] \leq e^\epsilon \Pr[Y \in \calT] + \delta. 
\end{equation}
\end{defn}
This notion underlies differential privacy (DP) \citep{dwork2014algorithmic} when $X$ and $Y$ are outputs on neighboring datasets. 
It also quantifies reproducibility of empirical findings when applied to models trained on independent samples from the same data distribution \citep{kalavasis2023statistical, impagliazzo2022reproducibility, bun2023stability}.

\citet{guo2019certified} were among the first to introduce certified machine unlearning using $(\epsilon, \delta)$-indistinguishability. 
An unlearning algorithm $M$ satisfies $(\epsilon, \delta)$-unlearning if $M(A(\calS), \calS, \calS_f) \overset{\epsilon,\delta}{\approx} A(\calS_r)$; this reduces to exact unlearning when $\epsilon = \delta = 0$. 
Indistinguishability can be measured through various probability divergences as well. 
For instance, \citet{chourasia2023forget} and \citet{chien2024langevin} proposed $(\alpha, \epsilon)$-R\'enyi unlearning, which holds when the $\alpha$-R\'enyi divergence\footnote{For $\alpha > 1$, the $\alpha$-R\'enyi divergence \citep{renyi1961measures} is defined as $D_\alpha(P | Q) \triangleq \frac{1}{\alpha - 1} \log \mathbb{E}_Q\left[\left(\frac{P}{Q}\right)^\alpha\right]$.} between $P_M$ and $P_A$ is bounded by $\epsilon$, and can be translated into $(\epsilon, \delta)$-unlearning.
Indeed, $(\alpha, \epsilon)$-R\'enyi unlearning can be converted to $(\epsilon+\log(1/\delta)/(\alpha-1), \delta)$-unlearning, for any $0 < \delta < 1$ \citep{mironov2017renyi}. 
Both frameworks assume uniqueness or a unique stationary distribution of the empirical minimizer, which may limit their applicability in complex model classes.

Instead of comparing full model distributions $P_M$ and $P_A$, a more practical certification framework evaluates unlearned models with readout functions $f: \calH \times \calS \to \R$ that an adversary might use to distinguish unlearned from re-trained models.
Indistinguishability is then assessed via the output distribution of $f$, using divergences such as the Kullback-Leibler (KL) divergence\footnote{The KL divergence \citep{kullback1951information} is defined as $D_{\textsf{KL}}(P | Q) \triangleq \mathbb{E}_P[\log(P/Q)]$.}. 
The certification condition requires $D_\textsf{KL}(\Pr[f(M, \calT)] \| \Pr[f(A, \calT)] \leq \epsilon$
for any subset $\calT \subseteq \calS$, such as the forget, retain, or even hold-out test sets \citep{nguyen2020variational, golatkar2020eternal}.
This formulation flexibly captures different behaviors: $f$ could represent a binary classifier (e.g., for Membership Inference Attack (MIA)) \citep{fan2023salun}, utility metrics like accuracy, or re-learning time---the training epochs needed to re-learn $\calS_f$ \citep{golatkar2020eternal}. 
Focusing on readout functions offers a more tractable and empirically grounded certification approach, especially for complex models.

\section{Model indistinguishability and disagreement over sample perturbation}\label{sec:perturbed-sample}
Statistical indistinguishability between unlearned and re-trained models can be certified theoretically---via $(\epsilon, \delta)$- or $(\alpha, \epsilon)$-R\'enyi unlearning---or empirically using a readout function (cf.~\S\ref{sec:related-work}). 
This, indeed, ensures similarity in model weights or outputs on forget/retain samples.
However, such guarantees do not readily extend to perturbed inputs, even with imperceptible adversarial perturbations.

Several studies have investigated the vulnerability of unlearning algorithms to adversarial manipulation.
\citet{marchant2022hard} show that adversarial inputs can significantly increase the computational cost of unlearning\footnote{\citet[Theorem 3]{allouah2024utility} report similar behavior when unlearning out-of-distribution samples.}, while \citet{pawelczyk2024machine} find that poisoning attacks can obstruct complete forgetting (cf.~Appendix~\ref{app-more-related-work}).
\citet{zhaorethinking} further demonstrate that even a small number of malicious unlearning requests can weaken the adversarial robustness of the resulting model.
More recently, \citet{xuan2025verifying} propose an attack that manipulates an unlearned model such that its outputs on forget samples resemble those of the original model (see Proposition~3 and~Definition 4 in their paper).
However, these works primarily study how adversarial perturbations affect the unlearned model itself, whereas our goal is to evaluate the distinguishability between the unlearned and re-trained models.

This paper intends to explore a new dimension of how adversarial examples affect unlearning---specifically, how such examples may behave differently when fed to an unlearned model versus a re-trained one, even when the two satisfy statistical indistinguishability. 
Remarkably, despite this indistinguishability, it remains possible to craft adversarial inputs that distinguish between the two, revealing a novel privacy risk. 
This phenomenon, related to the transferability of adversarial examples \citep{tramer2017space}, remains largely unexplored in the unlearning literature.
The proofs of the propositions in this section are included in Appendix~\ref{appendix:omitted-proofs}.

\subsection{Distinguishability of model output with adversarial examples}
We begin by considering adversarial examples generated against either the unlearned or the re-trained model. 
The process of finding an adversarial example for a given input $\bx \in \calX$ can be formalized as a read-out function $g_\bx: \calH \to \calX$, which may be deterministic or randomized.
For instance, the minimum $\ell_2$-norm perturbation for a binary linear classifier, given by $g_\bx(\bw) = \bx - (\bx^\top\bx)\bw/\|\bw\|_2^2$, and the Fast Gradient Sign Method (FGSM) \citep{goodfellow2016deep} are deterministic. 
In contrast, methods such as Projected Gradient Descent (PGD) \citep{mkadry2017towards} or random perturbations within $\calB_p(\bx, \tau)$ are randomized.
Since generating adversarial examples can be seen as a post-processing of the model, the indistinguishability of adversarial examples can, in principle, be derived from the indistinguishability of the models themselves.
Indeed, $(\epsilon, \delta)$-indistinguishability is preserved under arbitrary post-processing: if two random variables $X$ and $Y$ are $(\epsilon, \delta)$-indistinguishable, then so are $f(X)$ and $f(Y)$ for any (deterministic or randomized) function $f$. 
However, when considering adversarial examples of specific model realizations drawn from either the unlearning or re-training processes (cf.~Lemma~\ref{lem:prob-indistin}), the level of indistinguishability can degrade, as formalized in the following proposition.

\begin{prop}[Adversarial example on a model is less indistinguishable]\label{prop:less-indistinguishable}
    Suppose the unlearned $M(A(\calS), \calS, \calS_f)$ and re-trained $A(\calS_r)$ models are $(\epsilon, \delta)$-indistinguishable, and let $\bx$ be a fixed sample. 
    Then with probability $2\delta/(1-e^{-\epsilon})$, the adversarial example $g_\bx(\cdot)$ found against the models $m\sim M$ or $a\sim A$ satisfies, for all $\calX' \subseteq \calX$
    \begin{equation}\label{eq:prop:less-indistinguishable}
        \Pr[g_\bx(m) \in \calX'] \leq e^{-2\epsilon}\Pr[g_\bx(a) \in \calX']\;\text{or}\; e^{2\epsilon} \Pr[g_\bx(a) \in \calX']\leq \Pr[g_\bx(m) \in \calX']. 
    \end{equation}
\end{prop}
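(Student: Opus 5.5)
The plan is to reduce the statement to the probabilistic characterization of $(\epsilon,\delta)$-indistinguishability that the paper refers to as Lemma~\ref{lem:prob-indistin}. This is the standard conversion (as in Kasiviswanathan--Smith for differential privacy): if $M\overset{\epsilon,\delta}{\approx}A$, then there is a ``good'' set of realizations whose complement has mass at most $\delta' = 2\delta/(1-e^{-\epsilon})$. On the good set the pointwise likelihood ratio satisfies $e^{-2\epsilon}\le P_M(w)/P_A(w)\le e^{2\epsilon}$. Outside it, the ratio exceeds these bounds.

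The first step is to establish this lemma. Define the bad sets
\[
\calT_+=\{w: P_M(w)>e^{2\epsilon}P_A(w)\}, \qquad \calT_-=\{w: P_A(w)>e^{2\epsilon}P_M(w)\}.
\]
Apply Definition~\ref{def:indistinguishability} to $\calT_+$. This gives $\Pr[M\in\calT_+]\le e^{\epsilon}\Pr[A\in\calT_+]+\delta < e^{-\epsilon}\Pr[M\in\calT_+]+\delta$, hence $\Pr[M\in\calT_+]\le \delta/(1-e^{-\epsilon})$. The symmetric argument bounds $\Pr[A\in\calT_-]$, and hence $\Pr[M\in\calT_-]$, by $\delta/(1-e^{-\epsilon})$ as well. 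A union bound then yields total bad mass $2\delta/(1-e^{-\epsilon})$. This is exactly the probability appearing in the proposition, which confirms that the intended reading is: ``with probability (at most) $2\delta/(1-e^{-\epsilon})$ over the draw of the model realizations, the models fall in the bad region.''

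The second step transfers the bad event to adversarial examples. For a realization $m\in\calT_+$ or $a\in\calT_-$, the map $g_\bx$ is applied to that specific realization. Write $\Pr[g_\bx(m)\in\calX']=\sum_w P_M(w)\Pr[g_\bx(w)\in\calX']$, and similarly for $A$. If the realizations driving the adversarial example lie in the bad set, the pointwise inequality between the densities passes through the (possibly randomized) kernel $\Pr[g_\bx(w)\in\calX']$. The reason is that post-processing by a fixed Markov kernel preserves a pointwise density-ratio inequality whenever the inequality holds on the relevant support. Restricting to the conditional distributions on $\calT_+$ (respectively $\calT_-$) therefore gives $\Pr[g_\bx(m)\in\calX']\ge e^{2\epsilon}\Pr[g_\bx(a)\in\calX']$ (respectively the $e^{-2\epsilon}$ direction) for all $\calX'$. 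This produces the disjunction in \eqref{eq:prop:less-indistinguishable}.

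The main obstacle I expect is precisely this transfer. Post-processing an unconditional inequality is trivial. Here, however, the bad event concerns specific realizations, and the inequality must hold uniformly over all $\calX'$. I would handle this by conditioning both distributions on the bad set and using the density ratio bound, which holds everywhere on that set, so that integrating against the kernel preserves it. The statement is somewhat informal, with ``probability'' better read as ``at most'' and ``or'' covering the two bad sets. Accordingly, I would phrase the final proof as: outside an event of probability $2\delta/(1-e^{-\epsilon})$, the $e^{\pm 2\epsilon}$ bounds hold; on that event, one of the reversed inequalities in \eqref{eq:prop:less-indistinguishable} holds.
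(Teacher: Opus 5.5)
Your first step is fine. It is essentially the paper's Lemma~\ref{lem:prob-indistin}, and your union bound over $\calT_+$ and $\calT_-$ under a single law is, if anything, cleaner bookkeeping than the paper's. Your reading of ``with probability $2\delta/(1-e^{-\epsilon})$'' as the mass of the bad event also matches the paper's intent.

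The gap is in the transfer step. You correctly flag it as the crux, but the resolution you propose fails. Conditioning both laws on $\calT_+$ replaces $P_M(w)$ and $P_A(w)$ by $P_M(w)/P_M(\calT_+)$ and $P_A(w)/P_A(\calT_+)$. The conditional likelihood ratio is therefore $\frac{P_M(w)}{P_A(w)}\cdot\frac{P_A(\calT_+)}{P_M(\calT_+)}$. The second factor is $<e^{-2\epsilon}$ precisely because the first exceeds $e^{2\epsilon}$ on $\calT_+$, so the pointwise bound does not survive normalization. In fact it cannot survive: two probability measures on $\calT_+$ cannot have ratio $>e^{2\epsilon}>1$ everywhere. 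Concretely, take $\calX'=\calX$. For the conditioned models, $\Pr[g_\bx(m)\in\calX]=\Pr[g_\bx(a)\in\calX]=1$, which contradicts the claimed $\Pr[g_\bx(m)\in\calX']\ge e^{2\epsilon}\Pr[g_\bx(a)\in\calX']$ for all $\calX'$. Your kernel principle, that a pointwise inequality ``passes through'' post-processing on the relevant support, is true only for unnormalized measures. The division by $P_M(\calT_+)$ and $P_A(\calT_+)$ is exactly what breaks it.

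The fix, which is what the paper does, is to keep everything unnormalized. Interpret $\Pr[g_\bx(m)\in\calX']$ as the joint probability that the drawn model is a specific bad realization $h$ \emph{and} its adversarial example lands in $\calX'$. The attack's randomness is independent of the model draw, so $\Pr[g_\bx(h)\in\calX',\,M=h]=\Pr[g_\bx(h)\in\calX']\Pr[M=h]$. Multiplying the lemma's pointwise inequality by $\Pr[g_\bx(h)\in\calX']$ then gives $\Pr[g_\bx(h)\in\calX',\,M=h]\ge e^{2\epsilon}\Pr[g_\bx(h)\in\calX',\,A=h]$ for every $\calX'$. The $\calT_-$ case is symmetric.

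If you prefer to work with the whole bad set, integrate against the sub-probability restrictions: $\int_{\calT_+}P_M(w)\Pr[g_\bx(w)\in\calX']\,dw\ge e^{2\epsilon}\int_{\calT_+}P_A(w)\Pr[g_\bx(w)\in\calX']\,dw$. This is your kernel argument without the normalizing constants. With that change your proof goes through and agrees with the paper's.
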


Proposition~\ref{prop:less-indistinguishable} shows that even if the unlearned and re-trained models satisfy approximate unlearning certified by $(\epsilon, \delta)$-indistinguishability, the adversarial examples $g_\bx(h)$ can become less indistinguishable. 
Specifically, given a model $h$, with probability $2\delta/(1-e^{-\epsilon})$, the distinguishability of the adversarial examples increases by a factor of two. 
Moreover, the indistinguishability assumption in Proposition~\ref{prop:less-indistinguishable} can be readily generalized to $(\alpha, \epsilon)$-Rényi unlearning.

\subsection{Disagreement on adversarial examples is inevitable}\label{sec:inevitable}

In the previous section, Proposition~\ref{prop:less-indistinguishable} showed that even when the unlearned and re-trained models satisfy $(\epsilon, \delta)$-indistinguishability, there remains a nonzero probability that their likelihood ratio can still be exploited to distinguish them via adversarial examples. 
However, evaluating the bound in Eq.~\eqref{eq:prop:less-indistinguishable} requires computing probabilities over the entire hypothesis space $\calH$, which is often computationally intractable when $\calH$ is large or complex.

To address this challenge, we instead focus on model outputs at individual samples $\bx$, which may belong to the forget or retain sets. 
We introduce a more tractable binary metric called \emph{disagreement}, defined as  
$k(\bx) = \mathbbm{1}(M(\bx)\neq A(\bx))$, where $k(\bx) = 1$ indicates that the unlearned and re-trained models produce different predictions at $\bx$, and $k(\bx) = 0$ otherwise. 
Disagreement has been widely used in the machine learning literature to study model behavior \citep{krishna2022disagreement, uma2021learning}, as well as prediction stability and bias \citep{kulynych2023arbitrary}. 
Notably, the $(\epsilon, \delta)$-indistinguishability guarantee over distributions can be translated into an upper bound on empirical disagreement across samples. 
In particular, when $\epsilon = \delta = 0$, perfect indistinguishability (i.e., exact unlearning) implies $k(\bx) = 0$ for all $\bx \in \calS$.
Ideally, we seek agreement not only on the training set but across the entire sample space $\calX$, including unseen or perturbed data. 
Yet even small nonzero values of $\epsilon$ and $\delta$ can result in disagreement on certain inputs, particularly under adversarial perturbations, as we demonstrate in the following analysis.

To formalize this, we consider the sample space $\calX = \mathbb{S}^{d-1} = \{\bx \in \mathbb{R}^d \mid \|\bx\|_2 = 1\}$, corresponding to the unit sphere in $\mathbb{R}^d$, where all data points are normalized to unit norm. 
In this setting, the disagreement function $k(\bx)$ can be viewed as a binary classifier over $\mathbb{S}^{d-1}$. 
We aim to bound the probability that disagreement occurs not only at a sample $\bx$, but also within its local neighborhood under bounded perturbations, i.e., for $\bx' \in \calB_p(\bx, \tau)$ such that $\|\bx - \bx'\|_p \leq \tau$.
This motivates the need to understand how disagreement can propagate beyond the observed dataset to its local neighborhood in the broader sample space $\calX$.
A key mathematical tool for this purpose is the isoperimetric inequality (cf.~Lemma~\ref{lem:half-sphere-expansion} and \citet{talagrand1995concentration}). 
In probability theory and geometry, the isoperimetric inequality provides a lower bound on how the measure of a set expands when it is slightly ``extended.''
In high-dimensional spaces, particularly under uniform or Gaussian distributions, it formalizes the intuition that if a subset occupies a small volume, its boundary must be relatively large. 
In our context, this implies that even if two models disagree on only a small region, small perturbations can still cause disagreement to spread over a much larger region in the sample space.
Combining this geometric insight with the definition of $(\epsilon, \delta)$-unlearning (cf.~Definition~\ref{def:indistinguishability}), we establish the following result:

\begin{prop}[Inevitable disagreement]\label{prop:disagreement}
    Consider a sample $\bx \in \mathbb{S}^{d-1}$. 
    Let $M$ and $A$ denote the unlearned and re-trained models, respectively, satisfying $(\epsilon, \delta)$-unlearning. Suppose\footnote{This condition can be easily satisfied in multi-class settings, i.e., $|\calY| > 2$.} the agreement region satisfies $\surf(\{\bx\in\mathbb{S}^{d-1}|k(\bx)=0\})/\surf(\mathbb{S}^{d-1})\leq 1/2$.
    Then with probability at least 
    \begin{equation}\label{eq:prop:disagreement}
        \left(\frac{2\delta}{1-e^{-\epsilon}}\right) \left\{1 -\calO\left[ \left(\frac{\pi}{8}\right)^{1/2} \exp\left(-2\epsilon - \frac{d-1}{2}\tau^2\right) \right]\right\},
    \end{equation}
    either $k(\bx) = 1$, or there exist $\bx' \in \calB_p(\bx, \tau)$ such that $k(\bx') = 1$, where $p = 2$ or $p=\infty$.
\end{prop}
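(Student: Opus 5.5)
The plan is to combine the probabilistic decomposition behind Proposition~\ref{prop:less-indistinguishable} with a spherical isoperimetric (concentration) bound from Lemma~\ref{lem:half-sphere-expansion}. The bound in Eq.~\eqref{eq:prop:disagreement} has two factors, and I will produce them in two separate steps and then multiply.

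\textbf{Step 1 (the prefactor).} Conditioned on the realizations $m\sim M$ and $a\sim A$, I will isolate a ``bad'' event on which the pointwise likelihood ratio between $P_M$ and $P_A$ leaves the band $[e^{-\epsilon},e^{\epsilon}]$. By Lemma~\ref{lem:prob-indistin} (as used in the proof of Proposition~\ref{prop:less-indistinguishable}), $(\epsilon,\delta)$-unlearning allows this event to carry mass $2\delta/(1-e^{-\epsilon})$. On this event the two models are not constrained to agree on $\calX$. Outside it, $(\epsilon,\delta)$-indistinguishability forces the outputs to coincide up to the $e^{\pm\epsilon}$ distortion. I will take this event as the one on which disagreement can actually materialize, and condition on it for the rest of the argument.

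\textbf{Step 2 (geometric expansion).} On this event, consider the agreement set $\calA=\{\bz\in\mathbb{S}^{d-1}: k(\bz)=0\}$, which by hypothesis has normalized surface measure at most $1/2$. Equivalently, the disagreement set $\calD=\mathbb{S}^{d-1}\setminus\calA$ has measure at least $1/2$. Lemma~\ref{lem:half-sphere-expansion} then says the $\tau$-enlargement of $\calD$ in geodesic or $\ell_2$ distance has measure at least
\[
1-\left(\tfrac{\pi}{8}\right)^{1/2}\exp\!\left(-\tfrac{(d-1)}{2}\tau^2\right).
\]
So a uniformly located $\bx$ fails to have some $\bx'\in\calB_2(\bx,\tau)$ with $k(\bx')=1$ only with at most this exponentially small probability. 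Two details need handling here:
\begin{itemize}
\item For $p=\infty$ the result follows immediately, since $\calB_2(\bx,\tau)\subseteq\calB_\infty(\bx,\tau)$.
\item Geodesic and chordal distances differ only by constants, and those are absorbed into the $\calO(\cdot)$.
\end{itemize}

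\textbf{Step 3 (the $e^{-2\epsilon}$ factor and combining).} The statement concerns a \emph{fixed} $\bx$, while the isoperimetric bound is over the uniform measure on the sphere. I will transfer it by viewing the event ``$\bx$ lies outside the enlargement of $\calD$'' as a set of model pairs. I then apply the two-sided inequality of Proposition~\ref{prop:less-indistinguishable}, where each side contributes a factor $e^{-\epsilon}$; this is what produces the $e^{-2\epsilon}$ inside the exponential. Multiplying the conditional probability from Step~2 by the prefactor from Step~1 yields Eq.~\eqref{eq:prop:disagreement}.

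The main obstacle is Step~3. Justifying the passage from a measure statement over $\mathbb{S}^{d-1}$ to a fixed sample $\bx$, and producing exactly the $e^{-2\epsilon}$ factor, requires care. My first attempt will be to exploit rotational symmetry: rotating $\bx$ to a uniformly random point while rotating the models along with it. If that fails, I will state the bound for a uniformly random $\bx$ and let the $\calO$ absorb the constants.
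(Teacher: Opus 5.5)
Your Steps 1 and 2 match the paper. The prefactor $2\delta/(1-e^{-\epsilon})$ is the mass of the complement of the good event in Lemma~\ref{lem:prob-indistin}. Lemma~\ref{lem:half-sphere-expansion} is applied to the disagreement region, whose normalized area is at least $1/2$.

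\textbf{The gap is in Step~3.} That step must produce the $e^{-2\epsilon}$ factor, and Proposition~\ref{prop:less-indistinguishable} cannot supply it, for three reasons. First, that proposition concerns the law of the adversarial example $g_\bx(h)$, not whether $\bx$ lies outside the $\tau$-expansion of the disagreement set. Second, it is a disjunction: either $\Pr[g_\bx(m)\in\calX']\le e^{-2\epsilon}\Pr[g_\bx(a)\in\calX']$ holds, or the reverse inequality with $e^{2\epsilon}$ holds. Only the first disjunct is an upper bound, and you do not know which one holds. Third, nothing in it splits into ``one $e^{-\epsilon}$ per side.'' So your plan yields no upper bound of the form $\calO(e^{-2\epsilon})\times(\pi/8)^{1/2}\exp(-(d-1)\tau^2/2)$ on the failure event. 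Relatedly, the band in your Step~1 should be $[e^{-2\epsilon},e^{2\epsilon}]$; the factor $2$ there is what eventually becomes the $2\epsilon$ in the exponent.

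\textbf{The missing ingredient is Lemma~\ref{lem:lb-disagree-prob}.} Points of $\calE^c$, the complement of $\calE(\bar{\calR},p,\tau)$, lie in the agreement region. So the failure event (``$k(\bx)=0$ and no $\bx'\in\calB_p(\bx,\tau)$ has $k(\bx')=1$'') requires the two models to agree at the sample itself. On the same bad event as in your Step~1, Lemma~\ref{lem:lb-disagree-prob} gives
\[
\Pr[M(\bx)\neq A(\bx)] > 1-e^{-2\epsilon}\int_{\calH}\Pr[A=h]^2\,dh ,
\]
so $\sup_{\bx}\Pr[M(\bx)=A(\bx)]=\calO(e^{-2\epsilon})$. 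The paper bounds the probability of drawing a sample in $\calE^c$ by this agreement probability times the surface fraction $(\pi/8)^{1/2}\exp(-(d-1)\tau^2/2)$ of $\calE^c$. It then takes the complement and multiplies by $2\delta/(1-e^{-\epsilon})$.

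On fixed versus uniform $\bx$: the paper does not use a rotation argument. It speaks of the probability of ``drawing a sample'' in $\calE^c$ and takes a $\sup$ over $\bx$, which is essentially your fallback. Your rotation idea would not work anyway, because $P_M$ and $P_A$ are given and need not be rotation-invariant. Rotating the models changes the very pair whose disagreement you are trying to bound.
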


When $\epsilon = \delta = 0$, the probability in Eq.~\eqref{eq:prop:disagreement} is zero\footnote{Let $\delta = k\epsilon$ and by L'Hopital's rule $\lim\limits_{\epsilon\to 0} \frac{2 k\epsilon}{1-e^{-\epsilon}} = \lim\limits_{\epsilon\to 0} \frac{2k}{e^{-\epsilon}} =0$.}, since the unlearned and re-trained models are identical and cannot disagree.
Conversely, as $\epsilon \to \infty$ for a fixed $\delta$---reflecting maximal distinguishability between $M$ and $A$---the probability of disagreement approaches its upper bound of $2\delta$.
More generally, for any fixed $(\epsilon, \delta)$, the probability in Eq.~\eqref{eq:prop:disagreement} depends solely on the perturbation norm $\tau$: as $\tau$ increases, the probability of disagreement rises, as adversarial examples explore a broader neighborhood around the input. 
Moreover, Proposition~\ref{prop:disagreement} holds for both $p = 2$ and $p = \infty$, aligning with the common practice of measuring perturbation size using either the $\ell_2$ (Euclidean) norm or the $\ell_\infty$ (maximum) norm in prior works, e.g., \citet{goodfellow2014explaining, mkadry2017towards}.

While the assumption that samples lie on a unit sphere is admittedly strong, the core principle of the isoperimetric inequality—that small-volume sets necessarily have large boundaries—extends to more realistic domains. In particular, \citet[Proposition~2.8]{ledoux2001concentration} generalizes this property to the unit cube, a more suitable setting for image data where pixel values are normalized. Nonetheless, Proposition 2 remains insightful, as it demonstrates that disagreement on perturbed samples can arise even under the stricter unit sphere assumption, thereby strengthening its relevance under looser conditions like the unit cube.

\section{Removing residual knowledge in unlearned models}\label{sec:residual-knowledge-algo}

Proposition~\ref{prop:disagreement} demonstrates that disagreement between the unlearned and re-trained models can persist even when the unlearning algorithm satisfies the $(\epsilon, \delta)$-unlearning constraint. Although the two models may agree on the original input $(\bx, y)$, it is often possible to craft adversarial examples with imperceptible perturbations (small $\tau$) that cause them to diverge, revealing a potential privacy risk. This result arises from two sources of randomness: (1) that of the unlearning algorithm itself (e.g., model initialization) and (2) that of perturbations applied to forget samples. While the proposition establishes that such disagreement is inevitable, demonstrating existence alone is insufficient---quantifying its extent in practice is equally essential.

To that end, we fix specific realizations of the unlearned and re-trained models and define as \emph{adversarial disagreement}\footnote{The randomness in $k(\bx)$ arises from the stochasticity of the models, whereas that in $k_\tau(\bx')$ arises from sampling perturbed inputs $\bx' \sim \mathcal{B}_p(\bx, \tau)$. Defining disagreement over both sources of randomness would require characterizing the distributions of $M$ and $A$, an intractable task without strong assumptions \citep{guo2019certified, chourasia2023forget, chien2024langevin}.} as $k_\tau(\bx') = \mathbbm{1}(m(\bx') \neq a(\bx'))$ for $\bx' \sim \calB_p(\bx, \tau)$. The expected value of this indicator gives the probability of disagreement under perturbation: $\E\left[k_\tau(\bx')\right] = \Pr\left[m(\bx') \neq a(\bx')\right]$.
Adversarial disagreement is challenging to control, especially in multi-class settings, as it involves evaluating probabilities over all possible output combinations of $m(\bx')$ and $a(\bx')$. 
To address this complexity, we next introduce a special case of adversarial disagreement---called \emph{residual knowledge}---which offers a more tractable measure with operational meanings.

\subsection{Connecting disagreement with residual knowledge}
We focus on a particularly concerning form of adversarial disagreement, i.e., $\mathbbm{1}(m(\bx') = y, a(\bx') \neq y)$ for a forget sample\footnote{Access to forget samples is standard in literature (Appendix~\ref{app:existing-unlearning}) and necessary in classification unlearning, as users must provide or allow retrieval of the data for deletion.} $(\bx, y)\in\calS_f$. 
It implies that a forget sample, intended to be fully erased from the model, can be slightly perturbed such that the unlearned model still correctly classifies it, while the re-trained model does not. 
This discrepancy suggests that traces of the forget samples may remain embedded in the unlearned model’s decision boundary, even when formal indistinguishability guarantees are satisfied.
This scenario motivates our definition of \emph{residual knowledge}---the persistence of information about the forget set in the predictive behavior of the unlearned model. 
The presence of residual knowledge reveals a subtle yet significant vulnerability: it demonstrates how adversarial examples can compromise the effectiveness of unlearning, and underscores the need for stronger robustness guarantees that go beyond conventional $(\epsilon, \delta)$-unlearning.

Consider a forget sample $(\bx, y) \in \calS_f$, and let $m \sim M(A(\calS), \calS, \calS_f)$ and $a \sim A(\calS_r)$ be independently sampled unlearned and re-trained models, respectively. 
We formally define the residual knowledge around $\bx$ as the following non-negative ratio:
\begin{equation}\label{eq:residual-knowledge-ratio}
\begin{aligned}
    r_\tau((\bx, y)) \triangleq \frac{\Pr[m(\bx') = y]}{\Pr[a(\bx') = y]},\; \bx' \overset{i.i.d.}{\sim} \calB_p(\bx, \tau).
\end{aligned}
\end{equation}
This definition can be naturally extended to the entire forget set $\calS_f$ by averaging over all forget samples, $r_\tau(\calS_f) \triangleq \frac{1}{|\calS_f|} \sum_{(\bx, y)\in\calS_f} r_\tau((\bx, y))$.
Among the possible cases, $r_\tau(\calS_f) > 1$ is especially concerning, as it indicates that the unlearned model $m$ is more likely than the re-trained model $a$ to correctly classify perturbed variants of forget samples.
This suggests the presence of residual knowledge, which is the main privacy risk we aim to address in this paper. 
In this sense, the case where $r_\tau(\calS_f) < 1$ is more tolerable, as it implies the unlearned model has lost the ability to recognize the forget samples.
The case where $r_\tau(\calS_f) = 1$ can only occur when $m = a$, meaning the unlearned model $m$ achieves exact unlearning.
In practice, $r_\tau((\bx, y))$ can be estimated via Monte Carlo sampling.
Specifically, by drawing $c$ i.i.d. samples $\{\bx_i'\}_{i=1}^c$ from $\calB_p(\bx, \tau)$, we approximate the ratio as $\hat{r}_\tau((\bx, y)) = \frac{\sum_{i=1}^c \mathbbm{1}(m(\bx_i') = y)}{\sum_{i=1}^c \mathbbm{1}(a(\bx_i') = y)}$.

The notion of residual knowledge is closely tied to adversarial disagreement.
In particular, residual knowledge offers both upper and lower bounds on the expected adversarial disagreement between the unlearned and re-trained models (cf.~Lemma~\ref{lem:ad-rk-bounds}):
\begin{equation}\label{eq:rk-up-lb}
    r_\tau((\bx, y)) \Pr\left[a(\bx')= y\right] \left(1-\Pr\left[a(\bx')= y\right]\right)\leq \E\left[k_\tau(\bx')\right] \leq 1- r_\tau((\bx, y)) \Pr\left[a(\bx')= y\right]^2.
\end{equation}
As $r_\tau((\bx, y)) \to 1$, the expected adversarial disagreement  $\E\left[k_\tau(\bx')\right]$ approaches zero.
In this sense, residual knowledge serves as a practical proxy for estimating the distinguishability between the unlearned and re-trained models, and provides a more tractable means to quantify and control adversarial disagreement.

\subsection{\RURK{}: Robust unlearning against residual knowledge}\label{sec:robust-unlearning}
To address residual knowledge, we propose a fine-tuning objective that simultaneously enforces unlearning and regulates residual knowledge. 
Ideally, residual knowledge should be close to 1; however, accurately computing $r_\tau(\calS_f)$ requires access to a re-trained model $a \sim A(\calS_r)$, which is typically unavailable during unlearning.
Thus, our objective is to attenuate residual knowledge and ensure that $r_\tau(\calS_f) \leq 1$.
Recall that the numerator of the residual knowledge in Eq.~\eqref{eq:residual-knowledge-ratio} is $\Pr[m(\bx') = y]$; thus, directly minimizing this probability alone can effectively suppress residual knowledge, regardless of the denominator $\Pr[a(\bx') = y]$.
Based on this insight, we define the set of vulnerable perturbations as $\calV((\bx, y), \tau) \triangleq \{\bx' \in \calB_p(\bx, \tau) | m(\bx') = y\}$, which captures perturbed samples that continue to be associated with the true label by the unlearned model---indicating residual knowledge. 
In practice, we construct $v$ such samples by adapting adversarial attack methods, such as FGSM or PGD, to solve the constrained minimization problem $\min_{\bz \in \calB_p(\bx, \tau)} \ell(\bw, (\bz, y))$.

Given the vulnerable set, we formulate the following fine-tuning objective for \RURK{} as:
\begin{equation}\label{eq:new-loss}
   L_\text{\RURK{}}(\bw, \calS) = \underbrace{\frac{1}{|\calS_r|} \sum\limits_{(\bx, y)\in\calS_r} \ell(\bw, (\bx, y))}_{\text{Term (i)}} - \lambda \underbrace{\frac{1}{|\calS_f|} \sum_{(\bx, y)\in\calS_f} \kappa(\bw, (\bx, y))}_{\text{Term (ii)}},   
\end{equation}
where $\kappa(\bw, (\bx, y)) = \frac{1}{v} \sum\limits_{\{\bx'\}_{i=1}^v\in \calV((\bx, y), \tau)} \ell(\bw, (\bx', y))$, and $\lambda \geq 0$ is the regularization strength.
Term (i) preserves performance on the retain set, following prior work such as \citet{neel2021descent, kurmanji2024towards, chien2024langevin}.
Term (ii) serves as a regularization term that penalizes residual knowledge by discouraging the unlearned model from re-identifying vulnerable perturbations, thereby improving robustness with respect to residual knowledge.
Explicitly searching for vulnerable perturbations in $\calV((\bx, y), \tau)$ during each gradient step can be computationally expensive. 
To mitigate this overhead, we may adopt a more efficient approximation by setting $\calV((\bx, y), \tau) = \calB_p(\bx, \tau)$, i.e., removing all label information in the neighborhood of each forget sample.
We outline the \RURK{} algorithm in Appendix~\ref{app:rurk-algo-box}.

Note that as $\tau \to 0$, the objective in Eq.~\eqref{eq:new-loss} reduces to that used in prior works such as \citet{neel2021descent, chien2024langevin}. 
When optimized via (Projected) Noisy Gradient Descent (NGD) \citep{chourasia2021differential}, the resulting unlearned model satisfies $(\alpha, \epsilon)$-R\'enyi unlearning, where $\epsilon$ depends on the smoothness and Lipschitz continuity of the loss function $L(\bw, \calS)$.
Specifically, less smooth or less Lipschitz losses lead to larger $\epsilon$, making the unlearned model more distinguishable from the re-trained one \citep[Theorem~3.2]{chien2024langevin}.
This reveals a fundamental trade-off between model indistinguishability and robustness against residual knowledge: increasing $\tau$ in term (ii) of Eq.~\eqref{eq:new-loss} enlarges the adversarial perturbation space, which makes the adversarial loss $\kappa(\bw, (\bx, y))$ less smooth and with a larger Lipschitz constant---ultimately increasing $\epsilon$.

\vspace{-1em}
\section{Empirical Studies}\label{sec:exp}
\vspace{-1em}

We now evaluate residual knowledge of state-of-the-art unlearning algorithms and its mitigation via \RURK{} on three vision benchmarks.
We denote \Original{} as the model $A(\calS)$ trained on the full dataset $\calS$, and \Retrain{} as the ideal (but not viable in real world) model $A(\calS_r)$ re-trained from scratch without the forget set $\calS_f$, used here as a reference.
For experimental details, including dataset settings and hyper-parameter choices, refer to Appendix~\ref{app:exp-settings}; additional results are provided in Appendix~\ref{appendix:additional-experiments}.

\noindent
\textbf{Data and unlearning settings.}
We evaluate unlearning methods on image classification  under three random \emph{sample unlearning} scenarios. 
The first scenario, small CIFAR-5, follows \citet{golatkar2020eternal, golatkar2020forgetting} and uses a reduced CIFAR-10 subset \citep{krizhevsky2009learning} containing 200 training and 200 test images per class from the first five classes, while the second scenario uses the full CIFAR-10 dataset; both employ ResNet-18 \citep{he2016deep}. 
The third scenario is based on a larger-scale ImageNet-100, a 100-class subset of ImageNet-1k \citep{deng2009imagenet} with 1,300 images per class, trained with ResNet-50.
In all cases, only 50\% of class 0 is unlearned, in contrast to the \emph{class unlearning} setting \citep{kodgedeep}, which aims to remove all samples from a class. 
To avoid any external knowledge, both \Original{} and \Retrain{} models are trained from scratch without using any pre-trained weights. 
See Appendix~\ref{app:class-unlearning} for class-unlearning results and Appendix~\ref{app:ablation} for ablations on \RURK{} hyper-parameters and architectures (e.g., VGG~\citep{simonyan2014very}).

\noindent
\textbf{Baseline algorithms.}
We evaluate three machine unlearning algorithms suited for small CIFAR-5 settings \{\CR{}, \Fisher{}, \NTK{}\}, and eight methods for the full CIFAR-10 \{\GD{}, \NGD{}, \GA{}, \NegGrad{}, \EUk{}, \CFk{}, \SCRUB{}, \SSD{}\}.
Certified Removal (\CR{}) uses influence functions and a one-step Newton update to estimate and remove the effect of forget samples \citep{guo2019certified}. 
This approach was extended by \citet{golatkar2020eternal}, who incorporate the Fisher information matrix (\Fisher{}) computed on the retain set, and by \citet{golatkar2020forgetting}, who apply Neural Tangent Kernel (\NTK{}) linearization of neural networks.
These pioneering methods, while foundational, are not scalable to large models due to the computational cost of Hessian-based operations.
Gradient Descent (\GD{}) fine-tunes \Original{} on the retain set $\calS_r$ using standard SGD \citep{neel2021descent}.
Noisy Gradient Descent (\NGD{}) simply modifies \GD{} by adding Gaussian noise in the gradient update steps for better privacy guarantees \citep{chourasia2023forget, chien2024langevin}.
On the other hand, Gradient Ascent (\GA{}) removes the influence of $\calS_f$ by reversing gradient updates\footnote{\GA{} can also be viewed as learning on randomized labels for forget samples.} on the forget set \citep{graves2021amnesiac, jang2022knowledge}. 
\NegGrad{} combines both strategies by applying \GD{} to $\calS_r$ and \GA{} to $\calS_f$ simultaneously \citep{kurmanji2024towards}.
To improve parameter efficiency, \citet{goel2022towards} propose two layer-wise methods: Exact Unlearning the last $k$ layers (\EUk{}), which re-trains them from scratch, and Catastrophically Forgetting the last $k$ layers (\CFk{}), which only fine-tunes them on $\calS_r$.
\SCRUB{} casts unlearning as a teacher-student distillation process, where the student selectively learns retain-set knowledge from the teacher \citep{kurmanji2024towards}.
The final method, \SSD{}, selectively dampens model weights by uses Fisher information to estimate the influence of $\calS_f$ \citep{foster2024fast}, a scalable version of \Fisher{}.
Further details on these baselines are in Appendix~\ref{app:existing-unlearning}.

\noindent
\textbf{Evaluation metrics.}
We adopt five evaluation metrics, as described in \S~\ref{sec:certificate-unlearning}, to comprehensively assess both the unlearning efficacy and the utility of the resulting models.
To capture different dimensions of unlearning effectiveness, we first report Unlearning Accuracy\footnote{Unlearning accuracy is defined as $1 -$ forget accuracy, i.e., the classification error on the forget set $\calS_f$.}, following \citet{fan2023salun}.
Second, we conduct a MIA using a support vector classifier (SVC) trained to distinguish training samples from test samples based on the model’s output likelihoods; we report the SVC's attack failure rate (MIA Accuracy) as a measure of privacy protection.
Third, Re-learn Time quantifies how easily a model can re-acquire the forget-set information: it is defined as the number of fine-tuning epochs needed for the model $m$ to satisfy $L(m, \calS_f) \leq (1+\eta) L(\Original{}, \calS_f)$, with $\eta = 0.05$.
To assess model utility, we report classification accuracy on the retain set and the hold-out test set, referred to as Retain Accuracy and Test Accuracy, capturing both performance preservation and generalization.
As discussed in \S~\ref{sec:related-work-unlearning-algos}, a desirable unlearning method should minimize the deviation of performance from \Retrain{} in both sample and class unlearning settings.
To this end, we compute the Average Gap (Avg. Gap)---the mean absolute gap from \Retrain{} across Retain, Unlearn, Test, and MIA accuracies---where a smaller Avg. Gap indicates better unlearning.
Notably, methods such as \CR{}, \Fisher{}, \NTK{}, and \NGD{} already offer certified unlearning guarantees under frameworks like $(\epsilon, \delta)$-unlearning, R\'enyi unlearning, or bounded KL divergence; these guarantees do not necessarily imply a small performance gap on accuracy-based metrics.

\begin{table}[t]
  \caption{\small 
  Performance summary of various unlearning methods on image classification, including the proposed \RURK{}, \Original{}, \Retrain{}, and 11 baseline approaches, evaluated under two unlearning scenarios: small CIFAR-5 and full CIFAR-10, both using ResNet-18.
  Results are reported in the format $a_{\pm b}$, indicating the mean $a$ and standard deviation $b$ over 3 independent trials. The absolute performance gap relative to \Retrain{} is shown in (\gap{blue}).
  For methods that fail to recover the forget-set knowledge within 30 training epochs, the re-learn time is reported as ``>30''.
  See Appendix~\ref{app:complete-small-cifar5} for a complete table for Small CIFAR-5.
  }
  \label{tab:acc}
  \begin{adjustbox}{max width=\linewidth}
  \centering
  \small
  \begin{tabular}{clcccccc}
    \toprule
    \multirow{2}{*}{Datasets} & \multicolumn{1}{c}{\multirow{2}{*}{Methods}} & \multicolumn{6}{c}{Evaluation Metrics}                   \\
    \cmidrule(r){3-8}
     &     & Retain Acc. (\%) & Unlearn Acc. (\%) & Test Acc. (\%) & MIA Acc. (\%) & Avg. Gap & \makecell{Re-learn\\ Time (\# Epoch)}\\
    \midrule
    \multirow{6}{*}{\rotatebox[origin=c]{90}{\makecell{Small\\ CIFAR-5}}} & \Original{} & $99.93_{\pm0.10}(\gap{0.03})$ & $0.00_{\pm0.00}(\gap{8.33})$ & $95.37_{\pm0.80}(\gap{0.57})$ & $4.67_{\pm3.30}(\gap{22.33})$ & $\gap{7.82}$ & -\\
    & \Retrain{}  & $99.96_{\pm0.05}(\gap{0.00})$ & $8.33_{\pm3.30}(\gap{0.00})$ & $94.80_{\pm0.85}(\gap{0.00})$ & $27.00_{\pm5.66}(\gap{0.00})$ & $\gap{0.00}$ & $3.33_{\pm0.47}$ \\
    \cmidrule(r){2-8}
    & \CR{}       & $99.56_{\pm0.47}(\gap{0.40})$ & $14.00_{\pm5.66}(\gap{5.67})$ & $91.80_{\pm0.99}(\gap{3.00})$  & $58.17_{\pm0.79}(\gap{31.17})$  & $\gap{10.06}$ & - \\
    & \Fisher{}   & $92.67_{\pm0.63}(\gap{7.29})$ & $12.67_{\pm0.94}(\gap{4.34})$ & $88.80_{\pm1.98}(\gap{6.00})$ & $47.33_{\pm6.13}(\gap{20.33})$ & $\gap{9.49}$ & $3.00_{\pm1.41}$\\
    & \NTK{}      & $99.93_{\pm0.10}(\gap{0.03})$ & $7.00_{\pm0.00}(\gap{1.33})$ & $95.37_{\pm0.80}(\gap{0.57})$ & $16.00_{\pm4.24}(\gap{11.00})$ & ${\gap{3.23}}$ & $4.67_{\pm0.47}$ \\
    & \RURK{}     & $99.52_{\pm0.37}(\gap{0.44})$ & $5.67_{\pm2.36}(\gap{2.66})$ & $93.83_{\pm0.90}(\gap{0.97})$ & $33.33_{\pm12.26}(\gap{6.33})$ & ${\gap{2.60}}$ & $2.00_{\pm0.00}$ \\
    \midrule\midrule
    \multirow{12}{*}{\rotatebox[origin=c]{90}{CIFAR-10}} & \Original{} & $100.00_{\pm0.00}(\gap{0.00})$ & $0.00_{\pm0.00}(\gap{9.47})$ & $94.76_{\pm0.05}(\gap{1.46})$ & $0.07_{\pm0.02}(\gap{22.43})$ & $\gap{8.34}$ & - \\
    & \Retrain{}  & $100.00_{\pm0.00}(\gap{0.00})$ & $9.47_{\pm0.61}(\gap{0.00})$ & $93.30_{\pm0.20}(\gap{0.00})$ & $22.50_{\pm0.60}(\gap{0.00})$ & $\gap{0.00}$ & $17.33_{\pm6.65}$ \\
    \cmidrule(r){2-8}
    & \GD{}       & $99.98_{\pm0.01}(\gap{0.02})$ & $0.00_{\pm0.00}(\gap{9.47})$ & $94.29_{\pm0.07}(\gap{0.99})$ & $0.10_{\pm0.00}(\gap{22.4})$    & $\gap{8.22}$ & $0.20_{\pm0.04}$ \\
    & \NGD{}      & $97.53_{\pm0.03}(\gap{2.47})$ & $10.67_{\pm0.61}(\gap{1.20})$ & $90.70_{\pm0.11}(\gap{2.60})$ & $3.70_{\pm0.53}(\gap{18.80})$  & $\gap{6.27}$ & $20.67_{\pm14.61}$ \\
    & \GA{}       & $95.41_{\pm0.04}(\gap{4.59})$ & $61.37_{\pm0.17}(\gap{51.91})$ & $85.98_{\pm0.11}(\gap{7.32})$ & $0.00_{\pm0.00}(\gap{22.5})$  & $\gap{21.25}$ & $1.00_{\pm0.00}$\\
    & \NegGrad{}  & $99.28_{\pm0.01}(\gap{0.72})$ & $14.00_{\pm0.44}(\gap{4.53})$ & $92.02_{\pm0.02}(\gap{1.28})$ & $18.18_{\pm0.43}(\gap{4.32})$  & ${\gap{2.71}} $ & $1.00_{\pm0.00}$\\
    & \EUk{}      & $99.34_{\pm0.03}(\gap{0.66})$ & $1.12_{\pm0.08}(\gap{8.35})$ & $92.97_{\pm0.08}(\gap{0.33})$ & $0.87_{\pm0.10}(\gap{21.63})$   & $\gap{7.74}$ & $4.33_{\pm3.40}$ \\
    & \CFk{}      & $100.00_{\pm0.00}(\gap{0.00})$ & $0.00_{\pm0.00}(\gap{9.47})$ & $94.38_{\pm0.04}(\gap{1.08})$ & $0.42_{\pm0.10}(\gap{22.08})$  & $\gap{8.16}$ & $1.00_{\pm0.00}$ \\
    & \SCRUB{}    & $99.61_{\pm0.02}(\gap{0.39})$ & $12.45_{\pm0.29}(\gap{2.98})$ & $92.70_{\pm0.03}(\gap{0.60})$ & $7.10_{\pm0.14}(\gap{15.40})$  & $\gap{4.84}$  &  $>30$ \\
    & \SSD{}  &     $96.49_{\pm0.02}(\gap{3.51})$ & $66.45_{\pm0.82}(\gap{56.98})$ & $88.59_{\pm0.06}(\gap{4.71})$ & $5.12_{\pm0.44}(\gap{17.38})$ & $\gap{20.65}$ & $7.33_{\pm0.47}$ \\ 
    & \RURK{}     & $99.55_{\pm0.07}(\gap{0.45})$ & $14.63_{\pm2.17}(\gap{5.16})$ & $92.60_{\pm0.24}(\gap{0.70})$ & $18.20_{\pm2.47}(\gap{4.30})$ & ${\gap{2.65}}$ & $>30$ \\

    \bottomrule
  \end{tabular}
  \end{adjustbox}
\end{table}

The performance of all unlearning methods, including \RURK{}, is best understood by jointly examining Table~\ref{tab:acc} and Figure~\ref{fig:RK-ratio}. Table~\ref{tab:acc} presents standard unlearning metrics to ensure each method maintains reasonable test accuracy without excessive forgetting, while Figure~\ref{fig:RK-ratio} evaluates residual knowledge, measuring a model’s ability to resist re-identifying perturbed forget samples. Ideally, a properly unlearned model should match a re-trained one—preserving performance on standard metrics while failing to recognize small perturbations of forgotten data.

\noindent
\textbf{Performance comparison.}
Table~\ref{tab:acc} compares the performance of \Original{}, the ideal reference \Retrain{}, our proposed \RURK{}, and 11 baseline unlearning methods across two unlearning scenarios.
\RURK{} achieves the smallest average performance gap to \Retrain{} in both settings. We apply SGD with $\tau = 0.03$ and $v = 1$ in Term (ii) of Eq.~\eqref{eq:new-loss}, making \RURK{}'s time complexity comparable to other gradient-based approaches such as \GD{}, \NGD{}, and \NegGrad{}.
In the small CIFAR-5 scenario, \NTK{} shows a competitive Avg. Gap but suffers from the highest MIA accuracy, indicating vulnerability to privacy attacks.
\Fisher{} over-forgets the target samples, significantly degrading utility, as seen in its low retain and test accuracies.
For CIFAR-10, \NegGrad{} performs similarly to \RURK{} in Avg. Gap but shows a much shorter re-learn time than \Retrain{}, implying an implicit retention of forget-set knowledge.
\GA{} and \SSD{} aggressively erase forget-set information, achieving high unlearn accuracy but at the cost of test accuracy dropping below 90\%.
Both \EUk{} and \CFk{} (with $k=5$) re-train or fine-tune only the linear and last two residual blocks, but the remaining layers of ResNet-18 still retain substantial information about the forget set.
\SCRUB{} has the closest overall performance to \RURK{} when including re-learn time, but unlike \RURK{}, it requires extra memory to store a student model, whereas \RURK{} supports in-place updates.
We defer the performance of other unlearning baselines on small CIFAR-5, and an ablation study on the hyper-parameters (e.g., $\tau$, $v$ and $\lambda$) in \RURK{} to Appendix~\ref{appendix:additional-experiments}.

\noindent
\textbf{Residual knowledge.}
Figure~\ref{fig:RK-ratio} presents\footnote{By Eq.~\eqref{eq:residual-knowledge-ratio}, the residual knowledge of \Retrain{} is exactly 1 for all $\tau$, so it is omitted from the figure.} estimates of residual knowledge $\hat{r}_\tau((\bx, y))$ under varying perturbation radii $\tau$, computed using Gaussian noise ($p=2$) with $c=100$ samples from $\mathcal{B}_p(\bx, \tau)$ (cf.~Eq.~\eqref{eq:residual-knowledge-ratio}). As expected, \Original{} consistently exhibits residual knowledge greater than 1. In small CIFAR-5, although \NTK{} performs comparably to \RURK{} in Avg. Gap and re-learn time (Table~\ref{tab:acc}), its residual knowledge remains above 1 due to linearization under the neural tangent kernel, which ignores higher-order terms. This highlights residual knowledge as a necessary complement to unlearn accuracy, MIA, and re-learn time. In contrast, \RURK{} maintains values near 1 for $\tau < 0.01$ and effectively suppresses them at larger $\tau$. For CIFAR-10, \GD{} and \CFk{} retain high residual knowledge similar to \Original{}, underscoring the need for access to forget samples, as done by \RURK{}. Conversely, \GA{} and \SSD{} yield residual knowledge below 1 even at $\tau = 0$, indicating over-unlearning. \EUk{} updates only the final layers, leaving residuals in earlier representations. Methods such as \NGD{}, \NegGrad{}, \EUk{}, and \SCRUB{} follow trends similar to \RURK{}, though \RURK{} achieves more stable control near $\tau = 0.01$ and stronger suppression beyond that. While \NGD{} reduces residual knowledge more effectively than GD or CF-k through controlled weight noise, it remains less effective than \RURK{}. \NegGrad{}, with a similar objective, also retains more residual knowledge, validating the role of Term (ii) in Eq.~\eqref{eq:new-loss}.
For residual knowledge under other attacks, see Appendix~\ref{app:complete-small-cifar5}; for adversarial disagreement and unlearn accuracy of the perturbed forget samples, see Appendix~\ref{app-additional-exp-disa}.
\begin{figure}[t!]
    \centering
    \includegraphics[width=.28\linewidth]{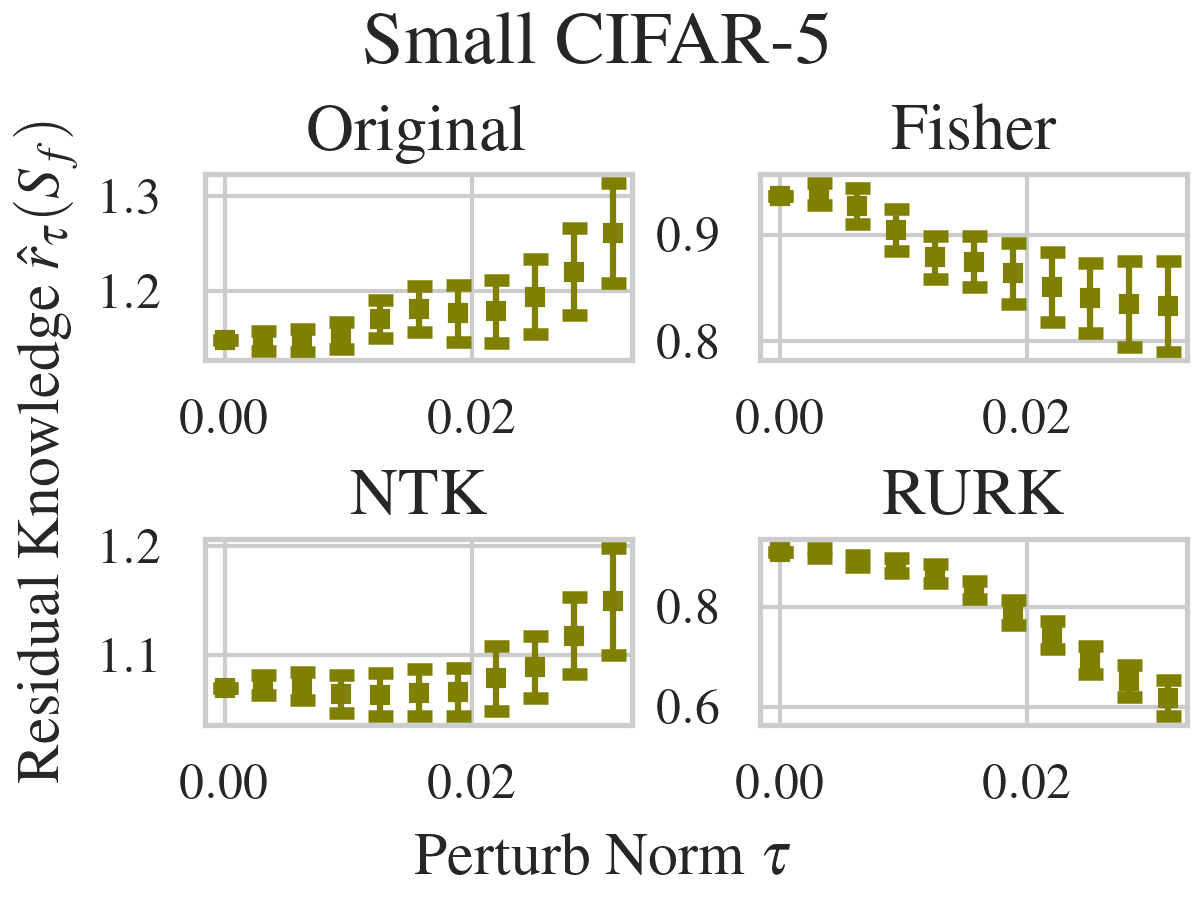}
    \includegraphics[width=.71\linewidth]{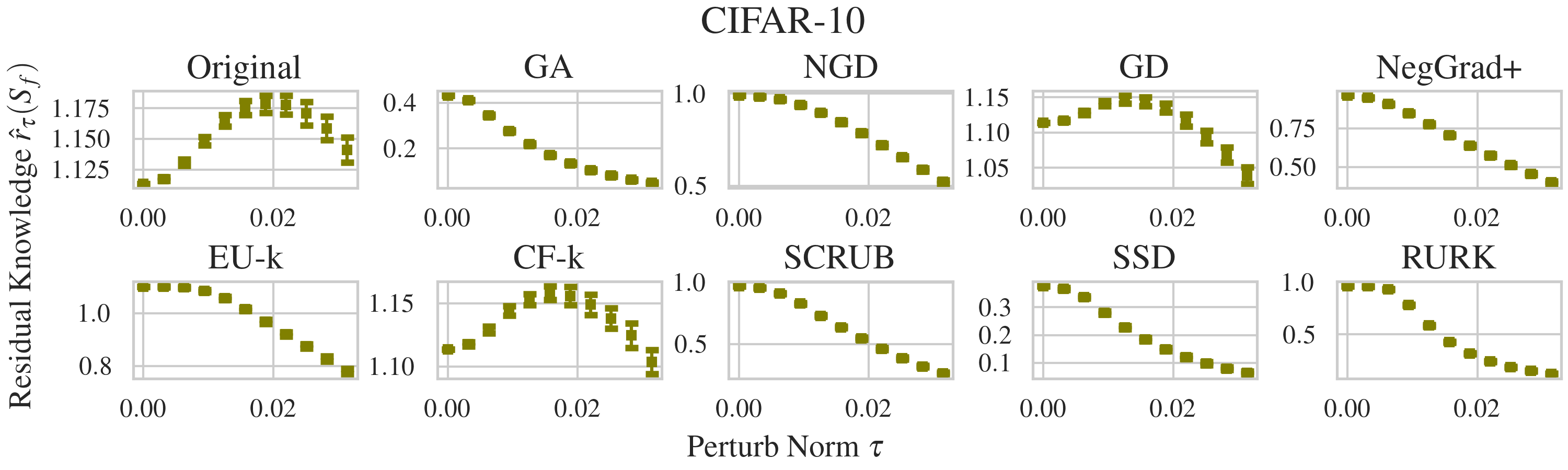}
    \caption{\small
    Residual knowledge $\hat{r}_\tau(\calS_f)$ of the proposed \RURK{}, \Original{}, and other unlearning methods across two unlearning scenarios, evaluated under varying perturbation norms $\tau$.}
    \label{fig:RK-ratio}
\end{figure}

\noindent
\textbf{Large-Scale ImageNet-100.}
In Figure~\ref{fig:imagenet100-acc-rk}, \RURK{} consistently achieves the smallest average performance gap compared to \GD{} and \NegGrad{}, demonstrating strong scalability. 
Residual knowledge analysis shows that \GD{} still retains forgotten information due to its lack of explicit control over the forget set, while \SSD{} fails to fully suppress forget-related neurons in large architectures like ResNet-50. 
Although \NegGrad{} mitigates residual knowledge more effectively, its alignment with the re-trained model remains weaker than \RURK{}. 
Overall, residual knowledge persists beyond small-scale settings, and \RURK{} remains the most effective method for mitigating it in larger, more complex models.

\begin{figure}[t!]
    \centering
    \includegraphics[width=\linewidth]{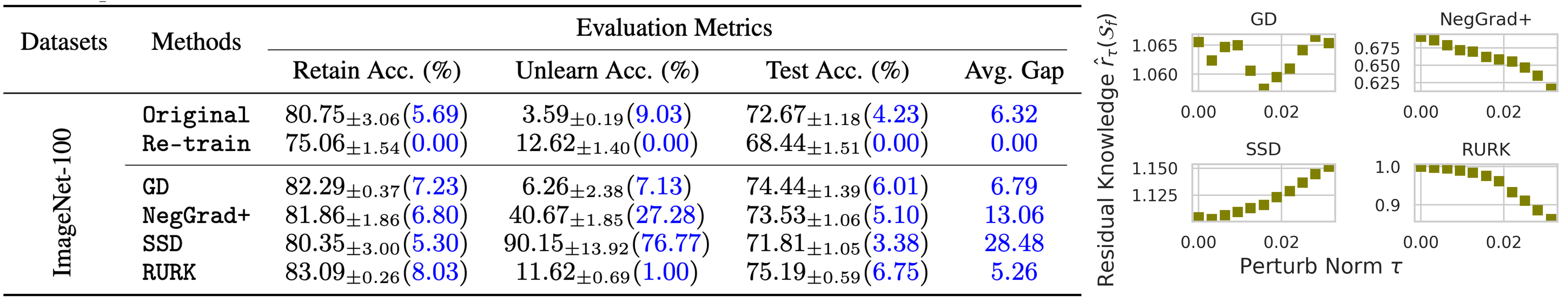}
    \caption{\small
    Performance summary and residual knowledge (following Table~\ref{tab:acc} and Figure~\ref{fig:RK-ratio}) of selected unlearning methods on ImageNet-100.}
    \label{fig:imagenet100-acc-rk}
\end{figure}


\vspace{-1.2em}
\section{Final remark}\label{sec:conclusion}
\vspace{-1em}
Our study reveals a key limitation of existing unlearning methods: although they erase direct memorization of forget samples, they often fail to remove implicit generalization around them, causing unlearned models to recognize perturbed variants more often than re-trained ones.
RURK mitigates this by incorporating perturbed forget samples, disrupting such generalization and reducing residual knowledge while preserving retain-set accuracy.

\noindent
\textbf{Limitations.}
Ideally, unlearned and re-trained models should be statistically indistinguishable not only on original inputs but also on all perturbations within $\calB_p(\bx, \tau)$. 
However, controlling adversarial disagreement across such perturbations is computationally infeasible---it is as hard as achieving perfect adversarial robustness. 
Moreover, since the re-trained model is typically unavailable during unlearning, we can only bound one side of the residual knowledge (i.e., $r_\tau(\calS_f) \leq 1$; see~\S~\ref{sec:robust-unlearning}).

\noindent
\textbf{Future directions.}
First, our probabilistic analysis in \S~\ref{sec:perturbed-sample} could be extended to account for hypothesis class complexity and linked to the transferability of adversarial examples \citep{tramer2017space}.
Second, the indistinguishability–robustness trade-off introduced in \S~\ref{sec:robust-unlearning} opens up new directions in both unlearning and adversarial robustness, warranting deeper investigation.
Third, our mitigation objective (cf. Eq.~\eqref{eq:new-loss}) resembles a reverse of adversarial training, suggesting an open question of whether adversarial training or distributionally robust optimization could in fact impede unlearning or amplify residual knowledge.
Moreover, in the context of MIA against unlearned models, adversaries could exploit side information---such as the minimal perturbation required for a model to re-recognize a forgotten sample. 
A smaller perturbation norm may indicate prior inclusion in training, increasing inference success. 
This perspective opens a promising research direction for future MIA studies in unlearning, as recent work has begun leveraging model variation near training points or perturbation dynamics to strengthen attacks \citep{jalalzai2022membership, del2022leveraging, xue2025privacy}.
Finally, extending residual knowledge beyond classification to generative tasks—such as image or text generation—remains an open challenge. 
In these settings, unlearning often targets concepts rather than samples, yet tasks like image-to-image generation \citep{fan2023salun, li2024machine} show that perturbing partial inputs can still regenerate forgotten content, suggesting analogous residual behaviors.
In large language models, the compositional nature of text and differing notions of “forget” and “retain” complicate formalization, though phenomena like relearning and jailbreaking indicate conceptual parallels \citep{hu2024jogging, shumailov2024ununlearning, liu2025rethinking}.

\clearpage
\noindent
\textbf{Broader impact.}
Residual knowledge introduces new privacy risks. For instance, if a user opts out of a biometric-based payment system (e.g., palm or facial recognition), residual traces may still allow adversaries to craft perturbed inputs that the system accepts—potentially enabling unauthorized access. 
Such vulnerabilities undermine trust in ML systems and challenge current interpretations of the ``Right to be Forgotten.''

\noindent
\textbf{Disclaimer.}
This paper was prepared for informational purposes by the Global Technology Applied
Research center and Artificial Intelligence Research group of JPMorgan Chase \& Co. This paper
is not a product of the Research Department of JPMorgan Chase \& Co. or its affiliates. Neither
JPMorgan Chase \& Co. nor any of its affiliates makes any explicit or implied representation or
warranty and none of them accept any liability in connection with this paper, including, without
limitation, with respect to the completeness, accuracy, or reliability of the information contained
herein and the potential legal, compliance, tax, or accounting effects thereof. This document is not
intended as investment research or investment advice, or as a recommendation, offer, or solicitation
for the purchase or sale of any security, financial instrument, financial product or service, or to be
used in any way for evaluating the merits of participating in any transaction.

\bibliographystyle{apalike}
\bibliography{reference.bib}

\clearpage
\appendix
\setcounter{equation}{0}
The appendix is divided into the following parts. 
Appendix~\ref{appendix:omitted-proofs}: Omitted proofs and theoretical results; 
Appendix~\ref{appendix:exp-setup}: Details on the experimental setup; and Appendix~\ref{appendix:additional-experiments}: Additional empirical results and ablation studies.

\section{Omitted proofs and theoretical results}\label{appendix:omitted-proofs}
\def\theequation{A.\arabic{equation}}
\def\thetable{A.\arabic{table}}
\def\thefigure{A.\arabic{figure}}
\def\thelem{A.\arabic{lem}}
\def\thedefn{A.\arabic{defn}}
\def\theprop{A.\arabic{prop}}
We first introduce (and prove) the following useful lemmas to facilitate the proofs of the propositions.
The first lemma is a variant of the $(\epsilon, \delta)$-indistinguishability in Definition~\ref{def:indistinguishability}.
\begin{lem}[Probabilistic indistinguishability]\label{lem:prob-indistin}
    If $X$ and $Y$ are $(\epsilon, \delta)$-indistinguishable, then with probability at least $1-2\delta/(1-e^{-\epsilon})$ over $z$ drawn from $\supp(X)\cup \supp(Y)$, we have
    \begin{equation}\label{eq:lem:prob-indistin-1}
        e^{-2\epsilon} \Pr[Y = z] \leq \Pr[X = z] \leq e^{2\epsilon} \Pr[Y = z].
    \end{equation}
\end{lem}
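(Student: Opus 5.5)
The plan is the standard ``bad set'' argument from differential privacy, which turns the set-level guarantee of Definition~\ref{def:indistinguishability} into a pointwise likelihood-ratio guarantee. I take ``probability over $z$'' to mean probability under either $X$ or $Y$ (in the discrete case, or with densities otherwise) and bound the mass of the violating points under both distributions. Define
\[
\calB_+ \triangleq \{z : \Pr[X = z] > e^{2\epsilon}\Pr[Y = z]\},\qquad
\calB_- \triangleq \{z : \Pr[Y = z] > e^{2\epsilon}\Pr[X = z]\}.
\]
Outside $\calB_+\cup\calB_-$, Eq.~\eqref{eq:lem:prob-indistin-1} holds. So it suffices to show that each of $\calB_+$ and $\calB_-$ has probability at most $\delta/(1-e^{-\epsilon})$, and then take a union bound.

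\textbf{Key step: bounding $\calB_+$.} Apply the upper inequality of Definition~\ref{def:indistinguishability} with $\calT=\calB_+$:
\[
\Pr[X\in\calB_+]\le e^{\epsilon}\Pr[Y\in\calB_+]+\delta.
\]
Summing the defining inequality of $\calB_+$ over its points gives $\Pr[Y\in\calB_+] < e^{-2\epsilon}\Pr[X\in\calB_+]$. Substituting,
\[
\Pr[X\in\calB_+]\le e^{-\epsilon}\Pr[X\in\calB_+]+\delta,
\qquad\text{hence}\qquad
\Pr[X\in\calB_+]\le \frac{\delta}{1-e^{-\epsilon}}.
\]
The mass under $Y$ is even smaller, since $\Pr[Y\in\calB_+]\le e^{-2\epsilon}\Pr[X\in\calB_+]$.

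\textbf{Bounding $\calB_-$.} By symmetry, the lower inequality $e^{-\epsilon}(\Pr[Y\in\calT]-\delta)\le\Pr[X\in\calT]$ rearranges to $\Pr[Y\in\calT]\le e^{\epsilon}\Pr[X\in\calT]+\delta$. Applying it with $\calT=\calB_-$ and repeating the same computation gives $\Pr[Y\in\calB_-]\le \delta/(1-e^{-\epsilon})$. The mass of $\calB_-$ under $X$ is smaller still.

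\textbf{Conclusion.} A union bound gives total bad mass at most $2\delta/(1-e^{-\epsilon})$, under whichever of $X$ or $Y$ generates $z$. On the complement, $z\in\supp(X)\cup\supp(Y)$ satisfies the two-sided bound. Points where both probabilities are zero lie outside the union of supports, so they do not matter.

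\textbf{Main obstacle.} The substance is only the self-bounding trick above. The delicate part is interpreting ``probability over $z$ drawn from $\supp(X)\cup\supp(Y)$.'' I would state explicitly that the bound holds for $z\sim X$ and for $z\sim Y$, and therefore for any mixture of the two. In the continuous case I would replace point probabilities by densities with respect to a common dominating measure, such as $(P_X+P_Y)/2$, and run the identical argument with integrals in place of sums.
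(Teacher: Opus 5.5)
Your proof is correct and is essentially the paper's argument: your $\calB_+$ and $\calB_-$ are exactly its sets $\calZ$ and $\calZ'$, and each is bounded by applying Definition~\ref{def:indistinguishability} to the bad set itself, combined with the pointwise likelihood-ratio inequality summed over that set. The only difference is how the factor $2$ is accounted for: you union-bound over $\calB_+$ and $\calB_-$ under a single distribution, which gives $(1+e^{-2\epsilon})\delta/(1-e^{-\epsilon})$ for both $z\sim X$ and $z\sim Y$. This is cleaner than the paper's combination step, which obtains $\Pr[Y\in\calZ\cup\calZ']\le\delta/(1-e^{-\epsilon})$ by taking the larger of the two bounds for these disjoint sets rather than their sum.
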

\begin{proof}
    We first consider the complement event of the right inequality in Eq.~\eqref{eq:lem:prob-indistin-1}.
    Define $\calZ = \{z| \Pr[X=z] > e^{2\epsilon} \Pr[Y=z]\}$, we directly have $ \Pr[X\in\calZ] > e^{2\epsilon}\Pr[Y\in\calZ]$.
    By $(\epsilon, \delta)$-indistinguishable, we have 
    \begin{equation}\label{eq:lem:prob-indistin-2}
    \begin{aligned}
        & \delta \geq \Pr[X\in\calZ] - e^\epsilon \Pr[Y\in\calZ] > (e^{2\epsilon}-e^\epsilon)\Pr[Y\in\calZ]\\
        \Rightarrow& \Pr[Y\in\calZ] < \frac{\delta}{e^{2\epsilon}-e^\epsilon} = \frac{\delta}{e^{2\epsilon}(1-e^{-\epsilon})}.
    \end{aligned}
    \end{equation}
    Now consider the complement event of the left inequality in  Eq.~\eqref{eq:lem:prob-indistin-1} and define $\calZ' = \{z| \Pr[X=z] < e^{-2\epsilon} \Pr[Y=z]\}$, by similar algebra, we have 
    \begin{equation}\label{eq:lem:prob-indistin-3}
    \begin{aligned}
        & \delta \geq \Pr[Y\in\calZ'] - e^\epsilon\Pr[X\in\calZ'] > (e^{2\epsilon}-e^\epsilon)\Pr[X\in\calZ']\\
        \Rightarrow& \Pr[X\in\calZ'] < \frac{\delta}{e^{2\epsilon}-e^\epsilon}\\
        \Rightarrow& \Pr[Y\in\calZ'] \leq e^\epsilon\Pr[X\in\calZ'] + \delta < e^\epsilon \frac{\delta}{e^{2\epsilon}-e^\epsilon} + \delta = \frac{\delta}{1-e^{-\epsilon}}.
    \end{aligned}
    \end{equation}
    Since $\frac{\delta}{1-e^{-\epsilon}}$ is always larger than $\frac{\delta}{e^{2\epsilon}(1-e^{-\epsilon})}$ (by a factor of $e^{2\epsilon}$), by combining Eq.~\eqref{eq:lem:prob-indistin-2} and Eq.~\eqref{eq:lem:prob-indistin-3}, we have
    \begin{equation}
    \begin{aligned}
         \Pr[Y\in\calZ\cup\calZ'] &= \Pr[\{\Pr[X=z] > e^{2\epsilon} \Pr[Y=z]\}\;\text{or}\;\{\Pr[X=z] < e^{-2\epsilon} \Pr[Y=z]\}]\\
         &\leq \frac{\delta}{1-e^{-\epsilon}}.
    \end{aligned}
    \end{equation}
    With the same analysis on $\Pr[X\in\calZ]$ and $\Pr[X\in\calZ']$, we have $\Pr[X\in\calZ\cup\calZ'] \leq \frac{\delta}{1-e^{-\epsilon}}$.
    Finally, putting together the the probability bounds on $\Pr[Y\in\calZ\cup\calZ']$ and $\Pr[X\in\calZ\cup\calZ']$, we have
    \begin{equation}
        \Pr[e^{-2\epsilon} \Pr[Y = z] \leq \Pr[X = z] \leq e^{2\epsilon} \Pr[Y = z]] \leq 1 - \frac{\delta}{1-e^{-\epsilon}},
    \end{equation}
    as desired.
\end{proof}

The second lemma relates to the isoperimetric inequality \citep{talagrand1995concentration}. 
Before stating it, we define the notion of a $\tau$-expansion of a set. 
Given a set $\calC \subset \R^d$, its $\tau$-expansion with respect to the $\ell_p$-norm is defined as
\begin{equation}
    \calE(\calC, p, \tau) \triangleq \{\bc \in \calR^d|\|\bc-\bg\|_p \leq \tau\;\text{for some $\bg \in \calC$}\}.
\end{equation}
The isoperimetric inequality is used to characterize the normalized surface area of the $\tau$-expansion of a half unit sphere, showing how it grows with the expansion radius $\tau$ and the dimension $d$. 
This property is leveraged in the condition of Proposition~\ref{prop:disagreement}, and is formally stated below.
\begin{lem}[\citet{milman1986asymptotic}]\label{lem:half-sphere-expansion}
    Let $\calC$ be the half unit sphere in $\R^d$, i.e., $\calC\in\mathbb{S}^{d-1}$ and $\surf(\calC)/\surf(\mathbb{S}^{d-1}) \geq 1/2$, then with $p = 2$ or $p=\infty$, the $\tau$-expansion $\calC$ has the surface area that satisfies
    \begin{equation}
        \frac{\surf(\calE(\calC, p, \tau))}{\surf(\mathbb{S}^{d-1})}\geq 1-\left(\frac{\pi}{8}\right)^{1/2} \exp\left(-\frac{d-1}{2}\tau^2\right).
    \end{equation}
\end{lem}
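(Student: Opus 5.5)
This is a citation of a classical concentration-of-measure result (Milman–Schechtman, via Lévy's spherical isoperimetric inequality), so the plan is to reconstruct the standard argument rather than build anything new. The proof has three steps: reduce to a hemisphere, compute the cap measure exactly, and bound the resulting one-dimensional integral. All surface areas are normalized, so $\mu(\cdot)=\surf(\cdot)/\surf(\mathbb{S}^{d-1})$ is the uniform probability measure on the sphere.

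\emph{Reduction to a hemisphere.} I first treat $p=2$ and intersect the expansion $\calE(\calC,2,\tau)$ with $\mathbb{S}^{d-1}$, since only its surface measure on the sphere matters. Lévy's isoperimetric inequality states the following: among all sets $\calC\subseteq\mathbb{S}^{d-1}$ with $\mu(\calC)\ge 1/2$, the $\tau$-neighbourhood is smallest when $\calC$ is a closed hemisphere $H=\{\bx:x_1\le 0\}$. Because Euclidean and geodesic distance are monotonically related, it does not matter which metric defines the neighbourhood. Hence it suffices to show
\[
\mu(\calE(H,2,\tau)\cap\mathbb{S}^{d-1}) \ge 1-(\pi/8)^{1/2}e^{-(d-1)\tau^2/2}.
\]
The $\ell_\infty$ case then follows for free. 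Since $\|\cdot\|_\infty\le\|\cdot\|_2$, the $\ell_2$ ball is contained in the $\ell_\infty$ ball, so $\calE(\calC,2,\tau)\subseteq\calE(\calC,\infty,\tau)$ and the same lower bound holds.

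\emph{Cap computation.} The complement of the expansion of $H$ is a spherical cap $\{x_1> t(\tau)\}$. For the geodesic radius-$\tau$ expansion, $t(\tau)=\sin\tau$. For the Euclidean expansion, the geodesic angle is $2\arcsin(\tau/2)\ge\tau$, so the Euclidean expansion contains the geodesic one and it is enough to bound the cap at angle $\tau$. Writing the uniform measure in polar angle $\theta$ from the pole gives
\[
\mu(\{\text{angle}<\pi/2-\tau\}) = \frac{\int_{\tau}^{\pi/2}\cos^{d-2}\theta\,d\theta}{2\int_0^{\pi/2}\cos^{d-2}\theta\,d\theta}.
\]

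\emph{Main obstacle: the one-dimensional estimate.} Everything reduces to bounding this ratio by $(\pi/8)^{1/2}e^{-(d-1)\tau^2/2}$.
\begin{itemize}
\item[] \emph{Numerator.} Substitute $\theta=\tau+s$ and use $\cos(\tau+s)\le \cos\tau\cos s$ together with $\cos\tau\le e^{-\tau^2/2}$. This pulls out a factor $e^{-(d-2)\tau^2/2}$ multiplied by $\int_0^{\pi/2}\cos^{d-2}s\,ds$.
\item[] \emph{Denominator.} The Wallis integral $I_{d-2}=\int_0^{\pi/2}\cos^{d-2}$ cancels against that factor.
\end{itemize}
This naive route gives only $\tfrac12 e^{-(d-2)\tau^2/2}$, which has the wrong exponent $d-2$ and the wrong constant $1/2$ instead of $(\pi/8)^{1/2}$. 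To obtain the sharp form I would instead follow Milman–Schechtman exactly:
\begin{itemize}
\item[] bound the numerator via $\cos\theta\le e^{-\theta^2/2}$, giving a Gaussian tail $\int_\tau^\infty e^{-(d-2)\theta^2/2}\,d\theta$;
\item[] use Wallis-type bounds $I_{d-2}\ge \sqrt{\pi/(2(d-1))}$-ish;
\item[] apply a Gaussian tail estimate, handling small $d$ separately.
\end{itemize}
Tracking the constants to reach exactly $(\pi/8)^{1/2}$ and the exponent $(d-1)/2$ is the delicate part. If that bookkeeping proves stubborn, I would simply cite the result, since the lemma is attributed to Milman–Schechtman.
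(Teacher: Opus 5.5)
Your reductions are sound. L\'evy's inequality lets you replace $\calC$ by a hemisphere. The chordal ($\ell_2$) expansion contains the geodesic one. The inclusion $\calE(\calC,2,\tau)\subseteq\calE(\calC,\infty,\tau)$ settles $p=\infty$. The uncovered set is a cap of normalized measure $G(\tau)/(2I_{d-2})$, with $G(\tau)=\int_\tau^{\pi/2}\cos^{d-2}\theta\,d\theta$.

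The gap is the final estimate, which you leave as a plan, and the plan cannot deliver the stated exponent. Both routes you name terminate at $e^{-(d-2)\tau^2/2}$:
\begin{itemize}
\item[] the shift bound $\cos(\tau+s)\le\cos\tau\cos s$;
\item[] $\cos\theta\le e^{-\theta^2/2}$ followed by a Wallis bound and a Gaussian tail.
\end{itemize}
The second route is exactly the Milman--Schechtman argument. Carried out as they do it, it ends at their theorem, which on $S^{n+1}\subset\R^{n+2}$ reads $1-(\pi/8)^{1/2}e^{-n\tau^2/2}$, i.e.\ $e^{-(d-2)\tau^2/2}$ on $\mathbb{S}^{d-1}$. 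Reaching $d-1$ needs an extra factor $e^{-\tau^2/2}$, and none of the estimates you list produces it. For the same reason, your fallback of citing Milman--Schechtman does not literally give the stated bound. That citation is all the paper itself offers; it contains no proof of this lemma. Separately, your concern about the constant is misplaced: $1/2<(\pi/8)^{1/2}\approx 0.63$, so $1/2$ is the stronger constant, and only the exponent is deficient.

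The missing ingredient is a cap estimate that gains one power of $\cos\tau$. You get it by inserting the factor $\sin\theta/\sin\tau\ge 1$ on $[\tau,\pi/2]$. For $0<\tau<\pi/2$,
\[
G(\tau)\le\frac{1}{\sin\tau}\int_\tau^{\pi/2}\cos^{d-2}\theta\,\sin\theta\,d\theta=\frac{\cos^{d-1}\tau}{(d-1)\sin\tau}\le\frac{\cos^{d-2}\tau}{(d-1)\,\tau},
\]
where the last step uses $\tau\cos\tau\le\sin\tau$. This inequality says exactly that $\tau\mapsto e^{(d-1)\tau^2/2}G(\tau)$ has derivative $e^{(d-1)\tau^2/2}\bigl[(d-1)\tau G(\tau)-\cos^{d-2}\tau\bigr]\le 0$. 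So the function is nonincreasing and bounded by its value $I_{d-2}$ at $\tau=0$. Hence the cap has normalized measure at most $\tfrac12 e^{-(d-1)\tau^2/2}\le(\pi/8)^{1/2}e^{-(d-1)\tau^2/2}$, and for $\tau\ge\pi/2$ the cap is empty. With this in place of your ``main obstacle'' step, the rest of your argument proves the lemma as stated for both $p=2$ and $p=\infty$.
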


The third lemma utilizes Lemma~\ref{lem:prob-indistin} to lower bound the probability of disagreement, i.e., the expected value of the disagreement indicator function $k(\bx)$.
\begin{lem}[Lower bound on disagreement probability]\label{lem:lb-disagree-prob}
    If the unlearned $M(A(\calS), \calS, \calS_f)$ and re-trained $A(\calS_r)$ models satisfies $(\epsilon, \delta)$-unlearning, then with probability $2\delta/(1-e^{-\epsilon})$, the probability of disagreement among the two models on a sample $\bx \in \calS$ is lower bounded by
    \begin{equation}
        \E[k(\bx)] = \E[\mathbbm{1}(M(\bx)\neq A(\bx))] = \Pr[M(\bx) \neq A(\bx)] > 1 - \calO\left(e^{-2\epsilon}\right).
    \end{equation}
\end{lem}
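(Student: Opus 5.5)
The plan is to reduce Lemma~\ref{lem:lb-disagree-prob} to Lemma~\ref{lem:prob-indistin}, applied to the unlearned model $M$ and the re-trained model $A$. Since $(\epsilon,\delta)$-unlearning is $(\epsilon,\delta)$-indistinguishability between $M$ and $A$, Lemma~\ref{lem:prob-indistin} says the likelihood-ratio condition $e^{-2\epsilon}\Pr[A=h]\le\Pr[M=h]\le e^{2\epsilon}\Pr[A=h]$ fails only on a bad set $\calZ\cup\calZ'$ of realizations $h$. The proof of that lemma bounds the mass of this set by $\delta/(1-e^{-\epsilon})$ under each of $P_M$ and $P_A$. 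Summing the two gives the total $2\delta/(1-e^{-\epsilon})$, which I will read as the probability of the ``bad'' event in the statement. Everything below is conditioned on the realizations lying in this bad set, where one of the two strict inequalities $\Pr[M=h]<e^{-2\epsilon}\Pr[A=h]$ or $\Pr[M=h]>e^{2\epsilon}\Pr[A=h]$ holds.

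\textbf{Step 1 (post-processing).} Push the ratio bound from models to predictions at the fixed sample $\bx\in\calS$. The map $h\mapsto h(\bx)$ is a deterministic readout into $\calY$. For each label $c$, I sum the pointwise inequality over all $h$ with $h(\bx)=c$. On the bad event this gives, for the relevant labels, $\Pr[M(\bx)=c]\le e^{-2\epsilon}\Pr[A(\bx)=c]$, or the reverse inequality with $M$ and $A$ swapped.

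\textbf{Step 2 (agreement bound).} Take $M$ and $A$ independent, as in the definition of residual knowledge. Then $\Pr[M(\bx)=A(\bx)]=\sum_{c}\Pr[M(\bx)=c]\Pr[A(\bx)=c]$. In the first case each term is at most $e^{-2\epsilon}\Pr[A(\bx)=c]^2\le e^{-2\epsilon}\Pr[A(\bx)=c]$. Summing over $c$ gives $\Pr[M(\bx)=A(\bx)]\le e^{-2\epsilon}$. The symmetric case gives the same bound with the roles exchanged. Hence $\E[k(\bx)]=1-\Pr[M(\bx)=A(\bx)]\ge 1-e^{-2\epsilon}$, which is $1-\calO(e^{-2\epsilon})$, with strictness inherited from the strict inequalities defining $\calZ$ and $\calZ'$.

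\textbf{Main obstacle.} The delicate step is the passage in Step 1 from a per-realization likelihood-ratio violation to a statement about the whole output distribution at $\bx$. Lemma~\ref{lem:prob-indistin} controls only individual points $z$, and summing over the preimage of a label requires the violation to hold uniformly on that preimage. I would handle this by working directly with the events $\calZ$ and $\calZ'$ from the proof of Lemma~\ref{lem:prob-indistin}. I would intersect them with the preimages $\{h:h(\bx)=c\}$ and use the $\calO(\cdot)$ to absorb the mass of $M$ or $A$ lying outside the violating set. That residual mass is at most a constant multiple of $e^{-2\epsilon}$ once $\delta/(1-e^{-\epsilon})$ is treated as the conditioning probability. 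If this uniformity cannot be justified, the fallback is to interpret ``with probability $2\delta/(1-e^{-\epsilon})$'' as conditioning on realizations for which $h(\bx)$ itself is a violating outcome. In that case Step 2 applies label-by-label directly.
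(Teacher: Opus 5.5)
You correctly flag Step~1 as the weak point, but this is not a uniformity detail that can be patched: the event on which Step~2 operates is empty. Step~2 needs $\Pr[M(\bx)=c]\le e^{-2\epsilon}\Pr[A(\bx)=c]$ for \emph{every} label $c$ (or the reverse inequality for every $c$). Summing over $c\in\calY$ gives $1\le e^{-2\epsilon}$ (resp.\ $1\ge e^{2\epsilon}$), which is false for $\epsilon>0$.

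The most your computation can give is a conditional statement. Let $C_1=\{c:\Pr[M(\bx)=c]<e^{-2\epsilon}\Pr[A(\bx)=c]\}$ and $C_2=\{c:\Pr[M(\bx)=c]>e^{2\epsilon}\Pr[A(\bx)=c]\}$. If every label in $\supp(M(\bx))\cup\supp(A(\bx))$ lies in $C_1\cup C_2$, then $\Pr[M(\bx)=A(\bx)]\le e^{-2\epsilon}\bigl(\Pr[A(\bx)\in C_1]+\Pr[M(\bx)\in C_2]\bigr)\le 2e^{-2\epsilon}$. But that hypothesis is incompatible with $(\epsilon,\delta)$-unlearning whenever $2\delta/(1-e^{-\epsilon})<1$. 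By post-processing, $M(\bx)$ and $A(\bx)$ are themselves $(\epsilon,\delta)$-indistinguishable. So Lemma~\ref{lem:prob-indistin} applied to them caps the mass of $C_1\cup C_2$ under $M(\bx)$ at $2\delta/(1-e^{-\epsilon})<1$, whereas your hypothesis makes that mass equal to $1$.

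Neither repair closes this.
\begin{itemize}
\item Absorbing ``the mass outside the violating set'' into $\calO(e^{-2\epsilon})$ is backwards: by the same lemma, that mass is at least $1-2\delta/(1-e^{-\epsilon})$, i.e.\ close to $1$.
\item Treating the bad event as a conditioning event does not change $P_M$ or $P_A$, and hence does not change $\Pr[M(\bx)=A(\bx)]$. Conditioning $M$ and $A$ themselves on the bad set bounds a different quantity, and that quantity is $0$ if the bad set is a single hypothesis to which both assign positive mass.
\item The fallback gives a violation only at the one label $c=h(\bx)$. That controls the single summand $\Pr[M(\bx)=c]\Pr[A(\bx)=c]$ and leaves $\sum_{c'\neq c}\Pr[M(\bx)=c']\Pr[A(\bx)=c']$ free to be near $1$.
\end{itemize}

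In fact the conclusion does not follow from the hypotheses. Take $h_0,h_1$ with $h_0(\bx)=h_1(\bx)$, set $\Pr[A=h_0]=\delta$ and $\Pr[A=h_1]=1-\delta$, and let $M\equiv h_1$. This is $(\epsilon,\delta)$-unlearning for every $\epsilon\ge 0$, and $h_0$ lies in the bad set. Yet $\Pr[M(\bx)\neq A(\bx)]=0$, while $1-\calO(e^{-2\epsilon})\to 1$.

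The paper's proof uses a different decomposition: it works over hypotheses $h$ rather than labels, via $\Pr[M\neq h]\Pr[A=h]>\Pr[A=h]-e^{-2\epsilon}\Pr[A=h]^2$ on the bad event. It has the same gap. It uses that pointwise violation at every $h$, which integrated would give $1<e^{-2\epsilon}$. It also replaces the factor $\Pr[m(\bx)\neq a(\bx)\mid\cdot]\le 1$ by $1$, which is the wrong direction for a lower bound. So the paper does not supply the idea you are missing. Any valid version needs an extra global hypothesis like the one your Step~2 isolates, and $(\epsilon,\delta)$-unlearning does not provide it.
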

\begin{proof}
    We prove the lower bound by directly decompose $ \Pr[M(\bx) \neq A(\bx)]$, i.e.,
    \begin{equation}\label{eq:lem:lb-disagree-prob-1}
    \begin{aligned}
         \Pr[M(\bx) \neq A(\bx)] &= \int\limits_{h\in\calH} \Pr\left[m(\bx)\neq a(\bx)|M=m=h, A=a\neq h\right] \Pr[M\neq h, A=h] dh\\
         &+ \int\limits_{h\in\calH} \Pr\left[m(\bx)\neq a(\bx)|M=m=h, A=a=h\right] \Pr[M= h, A=h] dh
    \end{aligned}
    \end{equation}
    The second integral in Eq.~\eqref{eq:lem:lb-disagree-prob-1} is zero since $\Pr\left[m(\bx)\neq a(\bx)|M=m=h, A=a=h\right]=0$.
    Moreover, by Lemma~\ref{lem:prob-indistin}, with probability $1-2\delta/(1-e^{-\epsilon})$, we have
    \begin{equation}
    \begin{aligned}
        \Pr[M\neq h, A=h] &= \Pr[M\neq h] \Pr[A=h] \leq \Pr[A=h] - e^{-2\epsilon}\Pr[A=h]^2.
    \end{aligned}
    \end{equation}
    In other words, with $2\delta/(1-e^{-\epsilon})$, we have
    \begin{equation}\label{eq:lem:lb-disagree-prob-2}
    \begin{aligned}
        \Pr[M\neq h, A=h] &= \Pr[M\neq h] \Pr[A=h] > \Pr[A=h] - e^{-2\epsilon}\Pr[A=h]^2.
    \end{aligned}
    \end{equation}
    The combination of Eq.~\eqref{eq:lem:lb-disagree-prob-1} and Eq.~\eqref{eq:lem:lb-disagree-prob-2} yield
    \begin{equation}
    \begin{aligned}
        \Pr[M(\bx) \neq A(\bx)] &= \int\limits_{h\in\calH} \Pr\left[m(\bx)\neq a(\bx)|M=m=h, A=a\neq h\right] \Pr[M\neq h, A=h] dh\\
        &> \int\limits_{h\in\calH} \Pr[A=h] - e^{-2\epsilon}\Pr[A=h]^2 dh\\
        &= \int\limits_{h\in\calH} \Pr[A=h] dh - e^{-2\epsilon} \int\limits_{h\in\calH}\Pr[A=h]^2 dh\\
        &= 1 - \calO\left(e^{-2\epsilon}\right),
    \end{aligned}
    \end{equation}
    the desired result, as $\int\limits_{h\in\calH} \Pr[A=h] dh = 1$ and $\int\limits_{h\in\calH} \Pr[A=h]^2 dh$ is a constant.
\end{proof}

The fourth lemma provides the relation between the adversarial disagreement $k_\tau(\bx') = \mathbbm{1}(M(\bx)\neq A(\bx))$ and the residual knowledge $r_\tau((\bx, y))$.
\begin{lem}[Bounding adversarial disagreement with the residual knowledge.]\label{lem:ad-rk-bounds}
    The expected value of $k_\tau(\bx')$ over the perturbation distribution is upper and lower bounded by
    \begin{equation}
        r_\tau((\bx, y)) \Pr\left[a(\bx')= y\right] \left(1-\Pr\left[a(\bx')= y\right]\right)\leq \E\left[k_\tau(\bx')\right] \leq 1- r_\tau((\bx, y)) \Pr\left[a(\bx')= y\right]^2.
    \end{equation}
\end{lem}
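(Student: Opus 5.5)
We treat the realizations $m$ and $a$ as fixed and independent, so that the only randomness is in $\bx'\sim\calB_p(\bx,\tau)$. We abbreviate $p_m \triangleq \Pr[m(\bx')=y]$ and $p_a \triangleq \Pr[a(\bx')=y]$. By Eq.~\eqref{eq:residual-knowledge-ratio}, $r_\tau((\bx,y)) = p_m/p_a$, so $r_\tau((\bx,y))\,p_a = p_m$. Both target inequalities therefore reduce to bounds on $\E[k_\tau(\bx')] = \Pr[m(\bx')\neq a(\bx')]$ in terms of $p_m$ and $p_a$. Following the convention the paper already uses for $\Pr[M\neq h, A=h]$ in Lemma~\ref{lem:lb-disagree-prob}, we will treat the events $\{m(\bx')=y\}$ and $\{a(\bx')=y\}$ as independent, i.e.\ factorize the joint probabilities.

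For the lower bound, we note that disagreement is implied by the event that $m$ is correct while $a$ is wrong:
\[
\E[k_\tau(\bx')] \;\geq\; \Pr[m(\bx')=y,\ a(\bx')\neq y].
\]
Under factorization, the right-hand side equals $p_m(1-p_a) = r_\tau((\bx,y))\,p_a(1-p_a)$, which is exactly the claimed lower bound.

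For the upper bound, we note that agreement is implied by both models being correct:
\[
1-\E[k_\tau(\bx')] \;=\; \Pr[m(\bx')=a(\bx')] \;\geq\; \Pr[m(\bx')=y,\ a(\bx')=y].
\]
Factorizing gives $\Pr[m(\bx')=y,\ a(\bx')=y] = p_m p_a = r_\tau((\bx,y))\,p_a^2$. Rearranging yields $\E[k_\tau(\bx')]\le 1-r_\tau((\bx,y))\,p_a^2$.

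The main obstacle is justifying the factorization, because $m(\bx')$ and $a(\bx')$ are evaluated at the same perturbed input, so independence of $m$ and $a$ as models does not directly make these events independent. The first thing to try is the product reading: draw independent perturbations for the two models (the $\overset{i.i.d.}{\sim}$ in Eq.~\eqref{eq:residual-knowledge-ratio}), or average over independently sampled $m$ and $a$ as in the paper's setup. If instead a shared $\bx'$ must be handled, we fall back on the Fr\'echet-type bounds $\Pr[E\cap F]\ge p_m+p_a-1$ and $\Pr[E\cap F^c]\ge p_m-p_a$. These give weaker but assumption-free versions, and we would then state the lemma under the independence convention used elsewhere in the appendix.
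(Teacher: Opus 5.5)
Your argument is the paper's proof: it keeps only the $l=y$ term of $\sum_{l\in\calY}\Pr[m(\bx')=l,\,a(\bx')\neq l]$ (respectively $\sum_{l\in\calY}\Pr[m(\bx')=l,\,a(\bx')=l]$), factorizes that joint probability, and substitutes $\Pr[m(\bx')=y]=r_\tau((\bx,y))\Pr[a(\bx')=y]$.

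The paper performs the factorization without comment, and you are right to suspect it. For fixed $m,a$ and a common $\bx'$, both inequalities can fail:
\begin{itemize}
\item If $m=a$ and $0<\Pr[a(\bx')=y]<1$, then $\E[k_\tau(\bx')]=0$, while the claimed lower bound is $\Pr[a(\bx')=y]\,(1-\Pr[a(\bx')=y])>0$.
\item If $\{m(\bx')=y\}$ and $\{a(\bx')=y\}$ are complementary events of probability $1/2$ each, then $\E[k_\tau(\bx')]=1$, while the claimed upper bound is $3/4$.
\end{itemize}

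Of your two proposed rescues, only evaluating the two models at independent perturbations makes the factorization valid. Even that changes the quantity being bounded, because the paper's $k_\tau$ and its experiments use the same $\bx'$ for both models. Averaging over independently drawn $m$ and $a$ does not help. The shared $\bx'$ still correlates the two correctness events, and the point-mass case $M=A$ reproduces the first counterexample.

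The Fr\'echet route is sound, but as you say it gives different, weaker bounds. So your proof, like the paper's, establishes the stated inequalities only under an explicit independence assumption on $m(\bx')$ and $a(\bx')$.
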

\begin{proof}
We prove the lemma by directly following the definition of adversarial disagreement.
First, we have
\begin{equation}
\begin{aligned}
    \E\left[k_\tau(\bx')\right] &= \Pr\left[m(\bx')\neq a(\bx')\right] = \sum\limits_{l\in\calY} \Pr\left[m(\bx')=l, a(\bx')\neq l\right]\\
    &= \sum\limits_{l\in\calY} \Pr\left[m(\bx')=l\right] \left(1-\Pr\left[a(\bx')= l\right]\right)\geq \Pr\left[m(\bx')=y\right] \left(1-\Pr\left[a(\bx')= y\right]\right),
\end{aligned}
\end{equation}
where the final inequality follows by isolating the contribution of the true label $y$.
By substituting the definition of residual knowledge from Eq.~\eqref{eq:residual-knowledge-ratio}, we obtain
\begin{equation}
    \E\left[k_\tau(\bx')\right]\geq r_\tau((\bx, y)) \Pr\left[a(\bx')= y\right] \left(1-\Pr\left[a(\bx')= y\right]\right).
\end{equation}
Similarly, by evaluating $1-\E\left[k_\tau(\bx')\right] = \Pr\left[m(\bx')= a(\bx')\right]$, we have
\begin{equation}
\begin{aligned}
    & 1-\E\left[k_\tau(\bx')\right] = \Pr\left[m(\bx')= a(\bx')\right] = \sum\limits_{l\in\calY} \Pr\left[m(\bx')=l, a(\bx') = l\right]\\
    &\geq \Pr\left[m(\bx')=y\right] \Pr\left[a(\bx')= y\right] = r_\tau((\bx, y))\Pr\left[a(\bx')= y\right]^2\\
    \Rightarrow& \E\left[k_\tau(\bx')\right] \leq 1 - r_\tau((\bx, y))\Pr\left[a(\bx')= y\right]^2.
\end{aligned}
\end{equation}
\end{proof}

\subsection{Proof of Proposition~\ref{prop:less-indistinguishable}}
Suppose the unlearned $M(A(\calS), \calS, \calS_f)$ and re-trained $A(\calS_r)$ models are $(\epsilon, \delta)$-indistinguishable, and let $\bx$ be a fixed sample. 
From Lemma~\ref{lem:prob-indistin}, we have that with probability $2\delta/(1-e^{\epsilon})$, $h$ drawn from $\supp(M)\cup \supp(A)$, we have
\begin{equation}\label{app:eq:prop:less-indistinguishable-1}
     \Pr[M = h] \leq e^{-2\epsilon}\Pr[A = h]\;\text{or}\; e^{2\epsilon} \Pr[A = h] \leq \Pr[M = h].
\end{equation}
Therefore, for all $\calX' \subseteq \calX$, by using the right inequality in Eq.~\eqref{app:eq:prop:less-indistinguishable-1},
\begin{equation}\label{app:eq:prop:less-indistinguishable-2}
\begin{aligned}
    \Pr[g_\bx(m) \in \calX'] &= \Pr[g_\bx(h) \in \calX', M=h] = \Pr[g_\bx(h) \in \calX']\Pr[M=h]\\
    &\geq e^{2\epsilon} \Pr[g_\bx(h) \in \calX'] \Pr[A = h] = e^{2\epsilon} \Pr[g_\bx(a) \in \calX'].
\end{aligned}
\end{equation}
Similarly, for the other inequality, we have
\begin{equation}\label{app:eq:prop:less-indistinguishable-3}
\begin{aligned}
    \Pr[g_\bx(m) \in \calX'] &\leq e^{-2\epsilon}\Pr[g_\bx(h) \in \calX'] \Pr[A = h] = e^{-2\epsilon}\Pr[g_\bx(a) \in \calX'].
\end{aligned}
\end{equation}
Eq.~\eqref{app:eq:prop:less-indistinguishable-2} and Eq.~\eqref{app:eq:prop:less-indistinguishable-3} together give the desired result.

\subsection{Proof of Proposition~\ref{prop:disagreement}}
Let $\calR = \{\bx\in\mathbb{S}^{d-1}|k(\bx)=0\}$ to be the region of the sphere that has agreement between the unlearned and re-trained models. 
By assumption, we have $\surf(\calR)/\surf(\mathbb{S}^{d-1}) \leq 1/2$.
Accordingly, we define $\bar{\calR} = \{\bx\in\mathbb{S}^{d-1}|k(\bx)=1\}$ to be the complement of $\calR$, denoting the  region of the sphere that has disagreement between the unlearned and re-trained models.
Since $\surf(\bar{\calR})/\surf(\mathbb{S}^{d-1}) \geq 1/2$, its $\tau$-expansion $\calE(\bar{\calR}, p, \tau)$, by Lemma~\ref{lem:half-sphere-expansion},  is at least as large as the epsilon expansion of a half sphere; that is
\begin{equation}
    \frac{\surf(\calE(\bar{\calR}, p, \tau))}{\surf(\mathbb{S}^{d-1})}\geq 1-\left(\frac{\pi}{8}\right)^{1/2} \exp\left(-\frac{d-1}{2}\tau^2\right).
\end{equation}
Note that the $\tau$-expansion $\calE(\bar{\calR}, p, \tau)$ represents the set of samples that either has $k(\bx) = 1$ or admits a $\bx' \in \calB_p(\bx, \tau)$ such that $k(\bx')=1$.
We can therefore define a set $\calE^c$ that is the complement of $\calE(\bar{\calR}, p, \tau)$ with surface area satisfies
\begin{equation}
    \frac{\surf(\calE^c)}{\surf(\mathbb{S}^{d-1})} \leq \left(\frac{\pi}{8}\right)^{1/2} \exp\left(-\frac{d-1}{2}\tau^2\right).
\end{equation}
The probability to draw a sample in $\calE^c$ is then upper bounded by 
\begin{equation}\label{app:eq:prop:disagreement-1}
\begin{aligned}
    \sup\limits_{\bx} \Pr[M(\bx) = A(\bx)] \times \left(\frac{\pi}{8}\right)^{1/2} \exp\left(-\frac{d-1}{2}\tau^2\right).
\end{aligned}
\end{equation}
Using Lemma~\ref{lem:lb-disagree-prob}, we know that with probability $2\delta/(1-e^{-\epsilon})$,
\begin{equation}\label{app:eq:prop:disagreement-2}
    \sup\limits_{\bx} \Pr[M(\bx) = A(\bx)] = \sup\limits_{\bx} 1 - \Pr[M(\bx) \neq A(\bx)] = 1 - \left(1 - \calO\left(e^{-2\epsilon}\right)\right) = \calO\left(e^{-2\epsilon}\right).
\end{equation}
Putting Eq.~\eqref{app:eq:prop:disagreement-1} and Eq.~\eqref{app:eq:prop:disagreement-2} together, we know that the probability to draw a sample in  $\calE(\bar{\calR}, p, \tau)$ is at least
\begin{equation}\label{app:eq:prop:disagreement-3}
\begin{aligned}
    & \left(\frac{2\delta}{1-e^{-\epsilon}}\right) \times \left\{1 - \sup\limits_{\bx} \Pr[M(\bx) = A(\bx)] \times \left(\frac{\pi}{8}\right)^{1/2} \exp\left(-\frac{d-1}{2}\tau^2\right) \right\}\\
    =& \left(\frac{2\delta}{1-e^{-\epsilon}}\right) \times \left\{1 - \calO\left(e^{-2\epsilon}\right) \times \left(\frac{\pi}{8}\right)^{1/2} \exp\left(-\frac{d-1}{2}\tau^2\right) \right\}\\
    =& \left(\frac{2\delta}{1-e^{-\epsilon}}\right) \times \left\{1 - \calO\left[ \left(\frac{\pi}{8}\right)^{1/2} \exp\left(-2\epsilon - \frac{d-1}{2}\tau^2\right) \right]\right\}.
\end{aligned}
\end{equation}

\clearpage
\section{Details on the experimental setup}\label{appendix:exp-setup}
\def\theequation{B.\arabic{equation}}
\def\thetable{B.\arabic{table}}
\def\thefigure{B.\arabic{figure}}
\def\thelem{B.\arabic{lem}}
\def\thedefn{B.\arabic{defn}}
\def\theprop{B.\arabic{prop}}
We provide implementation details of \RURK{} in \S~\ref{sec:robust-unlearning}, and introduce the unlearning baselines in \S~\ref{sec:exp}, including their mathematical formulations, operational interpretations, implementation specifics, and GitHub links. Finally, we summarize the dataset descriptions, training setups, and evaluation metrics.

\subsection[Comparison with]{Comparison with \citet{pawelczyk2024machine}}\label{app-more-related-work}
\citet{pawelczyk2024machine} investigate the failure of machine unlearning under data poisoning attacks. In their setup, the training set $S_{\text{train}}$ consists of two disjoint subsets: $S_{\text{clean}}$ and $S_{\text{poison}}$, where $S_{\text{poison}}$ contains maliciously corrupted samples generated via targeted or backdoor poisoning. The original model is trained on both subsets, and the goal of unlearning is to remove the influence of $S_{\text{poison}}$ so that the resulting model resembles one retrained solely on $S_{\text{clean}}$. Ideally, the unlearned model should be ``clean'' and exhibit no poisoning effects. However, \citet{pawelczyk2024machine} find that even after applying state-of-the-art unlearning algorithms, the unlearned model continues to display backdoor behavior---revealed through persistent correlations with the poison pattern---unlike the clean retrained model. They conclude that unlearning cannot fully sanitize a poisoned model, as residual effects of the attack remain embedded in its representations and decision boundary.  

Our work, in contrast, examines \emph{clean unlearning} under adversarial perturbations at test time rather than under poisoned data at training time. Both the retain and forget sets in our framework are clean; the original model $M_o$ is trained on $S_{\text{retain}} \cup S_{\text{forget}}$, and the unlearned model $M_u$ aims to forget $S_{\text{forget}}$ so as to be indistinguishable from a retrained model $M_r$ trained only on $S_{\text{retain}}$. We show that even when $M_u$ and $M_r$ are distributionally indistinguishable, they can still diverge in behavior on \emph{perturbed forget samples} $S_{\text{perturbed}}$---samples not seen during training but lying in the local neighborhood of $S_{\text{forget}}$. Specifically, $M_u$ tends to re-recognize these perturbed samples more often than $M_r$, revealing a form of \emph{residual knowledge} that persists despite formal unlearning guarantees.  

The key distinction between the two settings lies in the source and nature of the residual information. In \citet{pawelczyk2024machine}, the unlearned model retains \emph{explicit} knowledge of malicious patterns learned from $S_{\text{poison}}$ (e.g., a backdoor trigger). In our work, the residual information is \emph{implicit}, arising from the generalization structure around the clean forget samples rather than from any poisoned signal. Thus, while \citet{pawelczyk2024machine} highlight the limits of unlearning under training-time data corruption, our work reveals that even with entirely clean data, residual generalization can cause unlearned and retrained models to remain distinguishable---posing a distinct privacy risk not attributable to poisoning.

\subsection{Details of \RURK{} in \S~\ref{sec:robust-unlearning}}\label{app:rurk-algo-box}
We first provide the pseudo-code of \RURK{} in the following algorithm box Algorithm~\ref{alg:rurk}.

\begin{algorithm}[hbt]
\SetKwInOut{Input}{input}
\SetKwInOut{Output}{output}
\SetKwInOut{Parameter}{parameter}
\SetKwInOut{Return}{return}
\SetKwData{Rloader}{RLoader}
\SetKwData{Floader}{FLoader}
\SetKwData{RLoss}{RLoss}
\SetKwData{FLoss}{FLoss}
\SetKwData{Loss}{Loss}
\SetKwData{ALoss}{AdvFLoss}
\SetKwFunction{MakeLoader}{MakeDataLoader}
\SetKwFunction{Next}{nextIter}
\SetKwFunction{Find}{findAdv}
\Input{\Original{} $\bw = A(\calS)$; Forget Set $\calS_f$; Retain Set $\calS_r$; Loss Function $\ell$.}
\Output{Unlearned Model $\bw_\RURK{}$}
\Parameter{Perturbation Norm $\tau$; Regularization Strength $\lambda_f$, $\lambda_a$; Sample Size $v$; \# Epoch $N$}
\BlankLine
\Rloader $\leftarrow$ \MakeLoader{$\calS_r$}\;
\Floader $\leftarrow$ \MakeLoader{$\calS_f$}\;
$\bw_\RURK{}$ $\leftarrow$ $\bw$\;
\For{$epoch\leftarrow 1$ \KwTo $N$}{
    \For(\tcp*[f]{Scan over all Batches}){batchIdx, (retainData, retainTargets) in \Rloader}{
        retainOutput $\leftarrow$ $h_\bw(\textit{retainData})$\;
        \RLoss $\leftarrow$ \Loss{retainOutput, retainTargets}\;
        forgetData, forgetTarget = \Next{\Floader}\;
        forgetOutput $\leftarrow$ $h_\bw(\textit{forgetDataData})$\;
        \FLoss $\leftarrow$ \Loss{forgetOutput, forgetTarget}\;
        vulnerableSet = []\;
        \For(\tcp*[f]{Construct the Vulnerable Set $\calV((\bx, y), \tau)$}){$i\leftarrow 1$ \KwTo $v$}{
            vulnerableSet $\leftarrow$ vulnerableSet + \Find{forgetData, forgetTarget, $\tau$}\;
        
        }
        advForgetOutput $\leftarrow$ $h_\bw(\textit{vulnerableSet})$\;
        \ALoss $\leftarrow$ \Loss{advForgetOutput, forgetTarget}\;
        \Loss $\leftarrow$ \RLoss $-\lambda_f$\FLoss $-\lambda_a$\ALoss \tcp*[r]{Entire Loss in Eq.~\eqref{eq:new-loss}}
        $\bw_\RURK{}$ $\leftarrow$ $\bw_\RURK{}$ + $\nabla_{\bw_\RURK{}}$\Loss \tcp*[r]{SGD Optimization}
    }
}
\Return{$\bw_\RURK{}$}
\caption{\RURK{} Implementation}\label{alg:rurk}
\end{algorithm}

We implement \RURK{} using PyTorch (\texttt{torch}) \citep{paszke2019pytorch}, and ensure reproducibility by fixing three random seeds: $[131, 42, 7]$. During optimization, we use a batch size of $128$, the standard cross-entropy loss (\texttt{torch.nn.CrossEntropyLoss()}), and the SGD optimizer (\texttt{torch.optim.SGD}) with a learning rate of $0.01$, momentum of $0.90$, and weight decay of $5 \times 10^{-4}$. To stabilize training, we apply a cosine annealing learning rate scheduler (\texttt{torch.optim.lr\_scheduler.CosineAnnealingLR}) and cap the total number of iterations at $200$. Additionally, we clip the gradient norm to $1.0$ using \texttt{torch.nn.utils.clip\_grad\_norm\_}.

For both unlearning scenarios (small CIFAR-5 and CIFAR-10), we set the perturbation budget $\tau = 0.03$ and use the TorchAttacks library \citep{kim2020torchattacks} to identify the vulnerable set $\calV((\bx, y), \tau)$. To improve efficiency, we define $\calV((\bx, y), \tau)$ as the entire perturbation ball $\calB_p(\bx, \tau)$ and set $v=1$, meaning that only a single perturbed sample is drawn from a multivariate Gaussian distribution centered at $\bx$ with standard deviation $\tau$.

We configure the hyper-parameters as follows: for small CIFAR-5, we use $N=2$ and $\lambda_f = \lambda_a = 0.03$; for CIFAR-10, we set $N=2$, $\lambda_f = 0.03$, and $\lambda_a = 0.00045$. Since $v = 1$ and the perturbation is generated via Gaussian noise, the inner loop in line 12 of Algorithm~\ref{alg:rurk} executes only once, making the operations in lines 11–15 constant-time. As a result, the computational complexity of \RURK{} is comparable to fine-tuning-based methods such as \GA{}, \GD{}, \NegGrad{}, and \NGD{}.

However, if stronger adversaries like FGSM or PGD are used to identify the vulnerable set, the computational complexity increases from $\calO(N)$ to $\calO(N \times v)$. Note that unlike \NGD{}, we do not inject additional noise into the gradient update steps.

\paragraph{More discussion on \RURK{}.}
The regularization over the forget set in the loss $L(\bw, A(\calS))$ highlights two key features of our proposed unlearning method, \RURK{}. 
First, under mild conditions, the unlearning procedure satisfies certified unlearning via R\'enyi Differential Privacy (RDP), providing formal guarantees—a certificate of unlearning. 
Second, enhancing robustness against vulnerable perturbations $\calV((\bx, y), \tau)$ may increase the distinguishability of the unlearned model from the ideal re-trained model.

To support the first point, consider the case where the forget set contains a single sample (i.e., $|\calS_f| = 1$), so that the original dataset $\calS$ and the retain set $\calS_r$ are adjacent. 
Assume the original model $m$ was trained to satisfy $(\alpha, \epsilon_0)$-RDP. 
Further, suppose the unlearning procedure minimizes the loss in Eq.~\eqref{eq:new-loss} using Projected Noisy Gradient Descent (PNGD). 
As shown by \citet{chien2024langevin}, the Markov chain defined by PNGD updates of the form
\begin{equation}
\bw_{t+1} = \Pi_R\left( \bw_t - \eta \nabla L(\bw_t, A(\calS)) + \sqrt{2\eta \sigma^2} W_t \right),\;\text{with}\;\bw_1 = A(\calS)
\end{equation}
where $W_t \sim \calN(0, \mathbb{I}_d)$ and $\Pi_R$ denotes projection onto the ball of radius $R$, converges to a stationary distribution if the loss $L(\bw, A(\calS))$ from Eq.~\eqref{eq:new-loss} is continuous. 
The first term of Eq.~\eqref{eq:new-loss} is continuous by standard assumptions, as the loss $\ell(\cdot, (\bx, y))$ is typically continuous in the weights. 
To show continuity of the second term, consider $g(\bw) = \min_{\bz \in \calB_p(\bx, \tau)} \ell(\bw, (\bz, y))$.
Let $\bz^*$ be a minimizer; then $g(\bw) = \ell(\bw, (\bz^*, y))$, which is continuous in $\bw$ since $\ell$ is. 
Therefore, the overall loss is continuous, and the PNGD process converges.

Furthermore, when the loss is $L$-smooth and $M$-Lipschitz, the stationary distribution obtained via PNGD satisfies $(\alpha, \epsilon)$-RDP for some $\epsilon$ depending on $L$, $M$, and convexity properties of the loss \citep{chien2024langevin}. 
For forget sets with multiple elements, one can sequentially apply this procedure, preserving R\'enyi unlearning at each step.

To justify the second point, observe that Term (ii) in Eq.~\eqref{eq:new-loss} impacts the loss landscape's smoothness.
Specifically, the term $\kappa(\bw, (\bx, y))$ tends to increase with the perturbation radius $\tau$, as the adversarial loss over a larger ball is generally higher. 
This leads to a gradient norm that increases with $\tau$, implying that the smoothness and Lipschitz constants of the overall loss also grow with $\tau$. According to Theorem 3.2 in \citet{chien2024langevin}, the R\'enyi distinguishability parameter $\epsilon$ is inversely related to these constants, implying a trade-off: achieving higher robustness (larger $\tau$) increases model distinguishability (larger $\epsilon$).

The assumptions underlying these results are mild and standard in the literature. We assume the loss is $L$-smooth and $M$-Lipschitz, and that the learning dynamics for the original dataset satisfy the Log-Sobolev Inequality (LSI) with constant $C_{\rm LSI}$ \citep{gross1975logarithmic}. 
Moreover, the \Original{} $A(\calS)$ can be trained to satisfy $(\alpha, \epsilon_0)$-RDP under LSI using the same PNGD framework, as discussed by \citet{chien2024langevin}.

\clearpage
\subsection{Existing unlearning algorithms}\label{app:existing-unlearning}

Here, we provide detailed descriptions of the 11 unlearning baseline methods used in \S~\ref{sec:exp}, along with links to their corresponding GitHub repositories.

\paragraph{Certified Removal (\CR{}).}
\citet{guo2019certified} propose a single-step Newton–Raphson update to remove the influence of forget samples. Assuming that the empirical risk $L(\bw, \calS)$ is twice differentiable with continuous second derivatives, and letting $\bw^* = \argmin_{\bw \in \calW} L(\bw, \calS_r)$ denote the empirical risk minimizer over the retain set $\calS_r$, the Taylor expansion of the gradient $\nabla L$ around $\bw^*$ yields:

\begin{equation}
\nabla L(\bw^, \calS_r) \approx \nabla L(\bw_o, \calS_r) + H_{\bw_o} (\bw^ - \bw_o),
\end{equation}

where $\bw_o = A(\calS)$ denotes the \Original{} model parameters, and $H_{\bw_o} = \nabla^2 L(\bw_o, \calS_r)$ is the Hessian of the empirical risk over $\calS_r$ evaluated at $\bw_o$. Re-arranging the terms gives:

\begin{equation}
\bw^* \approx \bw_o - H_{\bw_o}^{-1} \nabla L(\bw_o, \calS_r),
\end{equation}

since $\nabla L(\bw^*, \calS_r) = 0$ by optimality. The certified removal method (\CR{}) then defines the unlearned model as

\begin{equation}
M(\bw_o, \calS, \calS_f) = \bw_o - \lambda H_{\bw_o}^{-1} \nabla L(\bw_o, \calS_r),
\end{equation}

where $\lambda$ is a step-size parameter. Under the assumption of convexity (ensuring the uniqueness of minimizers) and bounded approximation error, \citet[Theorem~1]{guo2019certified} show that this procedure guarantees $(\epsilon, \delta)$-certified removal, i.e., $M(\bw_o, \calS, \calS_f) \overset{\epsilon, \delta}{\approx} \bw^*$.

In our implementation, we follow the official codebase at \url{https://github.com/facebookresearch/certified-removal}. Since \CR{} is designed for binary linear models, we use a pre-trained ResNet-18 as a non-private feature extractor (excluding its final linear layer), and apply the Newton update only to the final layer. Specifically, we train $|\calY|$ one-vs-rest logistic regression classifiers on the extracted features and set $\lambda = 0.1$.

\paragraph{Fisher Unlearning (\Fisher{}).}
\Fisher{} \citep{golatkar2020eternal} is one of the simplest unlearning mechanisms, which removes the influence of forget samples by directly perturbing the model weights with Gaussian noise (cf.\S\ref{sec:related-work-unlearning-algos}). Unlike standard Gaussian perturbation, the variance of the noise is scaled according to the inverse of the Fisher information matrix. Specifically, the unlearned model is defined as:
\begin{equation}
M(\bw, \calS, \calS_f) = \bw + n,\quad n \sim \mathcal{N}(0, \alpha \bB^{-1}),
\end{equation}
where $\bB$ denotes the Hessian matrix $\nabla^2 L(\bw, \calS_r)$. However, computing the exact Hessian is computationally intractable even for moderately sized neural networks, and the matrix may not be positive definite. To address this, \citet{golatkar2020eternal} approximate the Hessian using the Levenberg–Marquardt algorithm, yielding a semi-positive-definite matrix closely related to the Fisher information matrix \citep{martens2020new}, which motivates the name of the method.
In our implementation, we follow the official codebase and adopt the same hyper-parameters as in \url{https://github.com/AdityaGolatkar/SelectiveForgetting}.

\paragraph{Neural Tangent Kernel Unlearning (\NTK{}).}
\NTK{} \citep{golatkar2020forgetting} extends the Newton update idea from \CR{} by applying the Neural Tangent Kernel (NTK) linearization of neural networks. The NTK matrix between two datasets $\calS_1$ and $\calS_2$ is defined as:
\begin{equation}
K(\calS_1, \calS_2) \triangleq \nabla_{\bw} h_\bw(\calS_1) \nabla_{\bw} f_\bw(\calS_2)^\top,
\end{equation}
where $h_\bw$ denotes the network output and $h_\bw$ may optionally denote a scalar output function (e.g., pre-activation logits). Let $\bw^* = \argmin_{\bw \in \calW} L(\bw, \calS)$ and $\bw_r = \argmin_{\bw \in \calW} L(\bw, \calS_r)$ be the minimizers of the empirical risk over the full and retain sets, respectively. By linearizing the network output around $\bw^*$ using the NTK approximation, \NTK{} enables a closed-form update that shifts the model weights from $\bw^*$ to an approximation of $\bw_r$ via a one-shot adjustment:
\begin{equation}
\bw_r \approx M(\bw^*, \calS, \calS_f) = \bw^* + \bP \nabla_{\bw^*} h_{\bw^*}(\calS_f)^\top \bM \bV,
\end{equation}
where the projection matrix $\bP = \bI - \nabla_{\bw^*} h_{\bw^*}(\calS_r)^\top K(\calS_r, \calS_r)^{-1} \nabla_{\bw^*} h_{\bw^*}(\calS_r)$ ensures that the gradient contributions from the forget set are orthogonalized against those from the retain set. The matrix $\bM$ is defined as
\begin{equation}
\bM = \big[K(\calS_f, \calS_f) - K(\calS_r, \calS_f)^\top K(\calS_r, \calS_r)^{-1} K(\calS_r, \calS_f)\big]^{-1},
\end{equation}
and the re-weighting matrix $\bV$ is given by
\begin{equation}
\bV = (y_f - h_{\bw^*}(\calS_f)) + K(\calS_r, \calS_f)^\top K(\calS_f, \calS_f)^{-1}(y_r - h_{\bw^*}(\calS_r)),
\end{equation}
where $y_f$ and $y_r$ denote the ground truth labels for the forget and retain sets, respectively.
We follow the official implementation and use the same settings and hyper-parameters as in \url{https://github.com/AdityaGolatkar/SelectiveForgetting}.

\paragraph{Gradient Descent (\GD{}).} 

\GD{} \citep{neel2021descent} is one of the simplest unlearning algorithms. It continues training the original model \Original{} on the retain set $\calS_r$ using standard gradient descent. Specifically, the unlearned model $M$ is obtained by iteratively applying the update:
\begin{align*}
\bw_{t+1} \leftarrow \bw_t - \eta, g_t(\bw_t), \quad \text{with} \quad \bw_1 = A(\calS),
\end{align*}

where $\eta$ is the step size and $g_t(\bw_t)$ is a (mini-batch) stochastic gradient of the empirical loss over $\calS_r$, i.e., $\frac{1}{|\calS_r|} \sum_{\bs \in \calS_r} \ell(\bw, \bs)$.
The key intuition behind \GD{} is that when the forget set is small (i.e., $|\calS_f| \ll |\calS|$), the minimizers of the loss functions over $\calS$ and $\calS_r$ are expected to be close. Therefore, a few steps of gradient descent starting from the original model $\bw_1 = A(\calS)$ can efficiently move the parameters toward a minimizer of the updated training objective.
Building on this intuition, \citet{neel2021descent} also provide theoretical guarantees for the effectiveness of \GD{} in both convex and certain non-convex settings.

Our implementation follows \url{https://github.com/ChrisWaites/descent-to-delete}, using the hyper-parameters :
\begin{itemize}
    \item SGD optimizer with a lr = 1e-2, momentum = 0.9, and weight decay = 1e-4. 
    \item Random seed for the data loader is 7; random seeds for repeated experiments are 131, 42, 7.
    \item Batch size = 128
    \item Number of epochs = 10
\end{itemize} 

\paragraph{Noisy Gradient Descent (\NGD{}).}
\NGD{} \citep{chourasia2023forget} is a simple extension of \GD{} in which Gaussian noise is added to each gradient update. The unlearned model is obtained by iteratively applying the update:
\begin{align*}
\bw_{t+1} \leftarrow \bw_t - \eta \left(g_t(\bw_t) + \xi_t\right), \quad \text{with} \quad \bw_1 = A(\calS),
\end{align*}
where $\eta$ is the step size, $g_t(\bw_t)$ denotes a (mini-batch) stochastic gradient of the empirical loss over the retain set $\calS_r$, i.e., $\frac{1}{|\calS_r|} \sum_{\bs \in \calS_r} \ell(\bw, \bs)$, and $\xi_t \sim \mathcal{N}(0, \sigma^2)$ is an independent Gaussian noise term added at each iteration.
The key distinction between \NGD{} and \GD{} lies in the injection of noise during optimization. This stochasticity not only increases the robustness of the unlearning process but also enables formal privacy guarantees, as demonstrated in the context of certified unlearning via Rényi differential privacy \citep{chien2024langevin}. Notably, a similar noise-injection mechanism is employed in the DP-SGD algorithm for training models with differential privacy guarantees \citep{abadi2016deep}.

Our implementation follows \url{https://github.com/Graph-COM/Langevin_unlearning} and the same hyper-parameters as \GD{} with
\begin{itemize}
    \item $(\sigma, \eta) = (0.03, 0.1)$, trained for 10 epochs for the small CIFAR-5 settings with sample unlearning.
    \item $(\sigma, \eta) = (0.03, 0.1)$, trained for 2 epochs for the CIFAR-10 settings with sample unlearning.
\end{itemize}

\paragraph{Gradient Descent (\GA{}).}

\GA{} \citep{graves2021amnesiac} aims to remove the influence of the forget set $\calS_f$ from a trained model by reversing the learning process through gradient ascent. 
It stores all gradient updates involving $\calS_f$ during the initial training phase, and unlearning is then performed by applying the exact reverse updates—i.e., gradient ascent—using the stored gradients.
However, this approach is highly memory-intensive and becomes impractical for large-scale models. To address this limitation, \citet{jang2022knowledge} proposed a more scalable variant, in which unlearning is achieved by performing mini-batch gradient ascent on the forget set. Specifically, the model is updated by minimizing the negated loss $-L(\bw, \calS_f)$, effectively implementing ascent steps that counteract the original training influence of $\calS_f$.

Our implementation follows \url{https://github.com/joeljang/knowledge-unlearning}. using the similar hyper-parameters as \GD{}:
\begin{itemize}
    \item SGD optimizer with a lr = 1e-5, momentum = 0.9, and weight decay = 1e-4, clipping gradient norm = 1.
    \item Random seed for the data loader is 7; random seeds for repeated experiments are 131, 42, 7.
    \item Batch size = 128
    \item Number of epochs = 1 for Small-CIFAR-5 (both sample and class unlearning) and CIFAR-10 with sample unlearning; while 3 for CIFAR-10 with class unlearning
\end{itemize} 

\paragraph{Negative Gradient Plus (\NegGrad{}).}

\NegGrad{} \citep{kurmanji2024towards} fine-tunes the model by simultaneously minimizing the loss on the retain set $\calS_r$ and maximizing the loss on the forget set $\calS_f$, effectively negating the gradient contributions from the latter. Specifically, the unlearned model is obtained by optimizing the following objective:
\begin{equation}
L(\bw, \calS_r) - \beta L(\bw, \calS_f),
\end{equation}
where $\beta$ is a hyper-parameter that controls the strength of unlearning by weighting the loss contribution from $\calS_f$.
\NegGrad{} shares conceptual similarity with the Gradient Ascent (\GA{}) method, as both perform loss maximization on the forget set to induce forgetting. However, \NegGrad{} is empirically more stable and yields better performance, as it simultaneously enforces loss minimization on the retain set, ensuring that useful knowledge is preserved during unlearning.

Our implementation follows \url{https://github.com/meghdadk/SCRUB}, using similar hyper-parameters as \GD{} and \GA{}:
\begin{itemize}
    \item SGD optimizer with a lr = 1e-2, momentum = 0.9, and weight decay = 1e-4, clipping gradient norm = 1.
    \item $\beta$ = 0.001
    \item Random seed for the data loader is 7; random seeds for repeated experiments are 131, 42, 7.
    \item Batch size = 128
    \item Number of epochs = 1 for Small-CIFAR-5 (both sample and class unlearning) and CIFAR-10 with sample unlearning; while 3 for CIFAR-10 with class unlearning
\end{itemize} 

\paragraph{Exact Unlearning the last $k$ layers (\EUk{}).}

\EUk{} \citep{goel2022towards} is a simple, parameter-efficient unlearning method designed for deep learning models, requiring access only to the retain set $\calS_r$. Given a parameter $k$, \EUk{} retrains from scratch the last $k$ layers of the neural network—those closest to the output layer—while keeping the earlier layers fixed. By adjusting $k$, \EUk{} provides a tunable trade-off between unlearning effectiveness and computational efficiency.

Our implementation is based on the official repository at \url{https://github.com/shash42/Evaluating-Inexact-Unlearning}, using the following hyper-parameters:
\begin{itemize}
    \item SGD optimizer with a lr = 1e-2, momentum = 0.9, and weight decay = 1e-4.
    \item Last fully-connected layer, and last two residual blocks are reinitialized, and only fine-tune those parameters.
    \item Random seed for the data loader is 7; random seeds for repeated experiments are 131, 42, 7.
    \item Batch size = 128
    \item Number of epochs = 10
\end{itemize}

\paragraph{Catastrophically Forgetting the last $k$ layers (\CFk{}).}
\CFk{} \citep{goel2022towards} builds on the observation that neural networks tend to forget information about data samples encountered early in training—a phenomenon known as catastrophic forgetting \citep{french1999catastrophic}. Similar to \EUk{}, \CFk{} focuses on the last $k$ layers of the network; however, instead of re-training these layers from scratch, it fine-tunes them starting from the original model parameters $\bw = A(\calS)$, using only the retain set $\calS_r$, while keeping all earlier layers frozen.

As with \EUk{}, our implementation follows the official repository at \url{https://github.com/shash42/Evaluating-Inexact-Unlearning}, using the following hyper-parameters:
\begin{itemize}
    \item SGD optimizer with a lr = 1e-2, momentum = 0.9, and weight decay = 1e-4.
    \item Last fully-connected layer, and last two residual blocks are fine-tuned.
    \item Random seed for the data loader is 7; random seeds for repeated experiments are 131, 42, 7.
    \item Batch size = 128
    \item Number of epochs = 10
\end{itemize} 

\paragraph{SCalable Remembering and Unlearning unBound (\SCRUB{}).}

\SCRUB{} \citep{kurmanji2024towards} is one of the state-of-the-art unlearning methods for deep learning. It formulates the unlearning task within a student–teacher framework: given a trained teacher model $\bw_T$, the goal is to train a student model $\bw$ that selectively imitates the teacher. Specifically, the student should retain the teacher's behavior on the retain set $\calS_r$ while diverging significantly from it on the forget set $\calS_f$, as measured by the KL divergence.
To achieve this, \SCRUB{} optimizes a modified knowledge distillation objective \citep{hinton2015distilling}:
\begin{equation}
\begin{aligned}
\E_{(\bx, y) \sim \calS_r} \left[ \alpha, D_\textsf{KL}(h_{\bw_T}(\bx) | h_{\bw}(\bx)) + \gamma, \ell(\bw, (\bx, y)) \right]
- \E_{(\bx, y) \sim \calS_f} \left[ D_\textsf{KL}(h_{\bw_T}(\bx) | h_{\bw}(\bx)) \right],
\end{aligned}
\end{equation}
where $\alpha$ and $\gamma$ are hyper-parameters that balance knowledge retention and task performance. The first expectation encourages the student to match the teacher on the retain set while also performing well on the classification task, and the second term enforces divergence from the teacher on the forget set.

Our implementation follows \url{https://github.com/meghdadk/SCRUB}, with the hyper-parameters:
\begin{itemize}
    \item SGD optimizer with a lr = 5e-4, momentum = 0.9, and weight decay = 5e-4.
    \item Random seed for the data loader is 7; random seeds for repeated experiments are 131, 42, 7.
    \item Batch size = 128 
    \item $\alpha$ = 0.001 
    \item $\gamma$ = 1 for small CIFAR-5 settings with both sample and class unlearning 
    \item $\gamma$ = 0.6 for CIFAR-10 setting with sample unlearning and $\gamma$ = 75 for CIFAR-10 with class unlearning 
\end{itemize}

\paragraph{Selective Synaptic Dampening (\SSD{}).} 

\SSD{} was introduced by \citet{foster2024fast} as a method to unlearn a specific forget set from a neural network without retraining it from scratch. Building on ideas similar to \Fisher{}, but with a more refined approach, \SSD{} selectively dampens weights that exhibit disproportionately high influence—measured via the Fisher information—on the forget set relative to the retain set.
Given a model with parameters $\bw$, let $\bF_r$ and $\bF_f$ denote the Fisher information matrices computed over the retain set $\calS_r$ and the forget set $\calS_f$, respectively. Unlearning is performed by scaling each parameter $\bw_i$ according to its relative Fisher sensitivity:
\begin{equation}
\bw_i =
\begin{cases}
\beta \bw_i & \text{if } \bF_{f,i} > \alpha \bF_{r,i}, \\
\bw_i & \text{otherwise},
\end{cases}
\end{equation}
where $\bF_{f,i}$ and $\bF_{r,i}$ denote the $i$-th diagonal entries of the Fisher matrices for $\calS_f$ and $\calS_r$, respectively. The hyper-parameter $\alpha$ controls the threshold for selecting influential weights, and the dampening factor $\beta$ is defined as $\beta = \min_i{\lambda \bF_{r,i} / \bF_{f,i}, 1}$ for a tunable parameter $\lambda$.

Our implementation is based on the official repository at \url{https://github.com/if-loops/selective-synaptic-dampening}, using the following hyper-parameters:
\begin{itemize}
    \item $\lambda$ = 1.0
    \item $\alpha$ = 10.0 for CIFAR-10 and 100.0 for Small-CIFAR-5.
\end{itemize}

\subsection{Training and evaluation details}\label{app:exp-settings}
Here, we provide details on dataset preparation, unlearning settings, and evaluation procedures.

\paragraph{Dataset and unlearning settings.}

We evaluate unlearning methods in the context of image classification using CIFAR-10 \citep{krizhevsky2009learning} and ResNet-18 \citep{he2016deep}, focusing on the random sample unlearning setting across two scenarios. The CIFAR-10 dataset contains 60,000 color images of size $32 \times 32$ pixels, evenly distributed across 10 classes: airplane, automobile, bird, cat, deer, dog, frog, horse, ship, and truck. Each class includes 5,000 training and 1,000 test samples. CIFAR-10 is widely used in machine unlearning research---for example, by \citet{kurmanji2024towards, chien2024langevin, golatkar2020eternal, golatkar2020forgetting, foster2024fast}.
In the first scenario---small CIFAR-5---we follow the setup of \citet{golatkar2020eternal, golatkar2020forgetting} by creating a reduced version of CIFAR-10 with 200 training and 200 test samples from each of the first five classes. From class 0, we randomly select 100 samples (50\%) as the forget set $\calS_f$.
The second scenario uses the full CIFAR-10 dataset. Here, we designate 2,000 samples (50\%\footnote{Each class has 5,000 training samples, with 20\% held out for validation.}) from class 0 as the forget set $\calS_f$.
In addition to sample unlearning, we also consider class unlearning. In this setting, the forget set contains 200 samples from class 0 in the small CIFAR-5 scenario, and 4,000 samples from class 0 in the full CIFAR-10 scenario.
All experiments are conducted on an AWS EC2 g5.24xlarge instance.

\paragraph{Training of \Original{} and \Retrain{}.}

To avoid any external knowledge (e.g., the default ImageNet pre-trained weights from \texttt{torchvision} \citep{marcel2010torchvision}), both the \Original{} and \Retrain{} models are trained entirely from scratch, without loading any pre-trained weights. We use a batch size of 128 and cross-entropy loss (\texttt{torch.nn.CrossEntropyLoss()}).
For the small CIFAR-5 setting, we first pre-train a ResNet-18 model on the full CIFAR-10 dataset using the SGD optimizer (\texttt{torch.optim.SGD(lr=0.4, momentum=0.9, weight\_decay=2e-4)}) along with a cosine annealing learning rate scheduler (\texttt{torch.optim.lr\_scheduler.CosineAnnealingLR(T\_max=200)}). We then fine-tune this model on small CIFAR-5 for 25 epochs with a learning rate of 0.01 to obtain the \Original{} model reported in Table~\ref{tab:acc}. The \Retrain{} model in Table~\ref{tab:acc} is obtained using the same setup, except the fine-tuning is performed on the retain set $\calS_r$ for 100 epochs.
For the CIFAR-10 setting, both the \Original{} and \Retrain{} models are trained from scratch on the full dataset and the retain set, respectively, for 137 epochs using a learning rate of 0.01.
For the VGG-11 ablation study, we follow the same training protocol, including the optimizer, scheduler, and learning rate.

\paragraph{Evaluation details.}

The retain, forget, and test accuracies reported in Table~\ref{tab:acc} are computed by directly evaluating each model on the corresponding retain, forget, and test sets. The unlearning accuracy is defined as $1 - \text{forget accuracy}$.
The re-learn time quantifies how easily a model can reacquire knowledge of the forget set. It is defined as the number of fine-tuning epochs required for the model $m$ to satisfy $L(m, \calS_f) \leq (1 + \eta) L(\Original{}, \calS_f)$, where we set $\eta = 0.05$. This value can be adjusted depending on the desired tolerance.

To evaluate vulnerability to membership inference attacks (MIA), we use a support vector classifier (SVC) implemented via the \texttt{scikit-learn} package, with parameters \texttt{SVC(C=3, gamma=``auto'', kernel=``rbf'')} \citep{pedregosa2011scikit}. The SVC is trained to distinguish between samples seen during training and those that were not, based on the model's output likelihood for the correct class label. In the sample unlearning setting, retain samples are labeled as 1 (seen) and test samples as 0 (unseen) during SVC training; forget samples are then evaluated by the SVC, and an ideal unlearned model should cause them to be classified as 0. In the class unlearning setting, retain samples are labeled as 1 and forget samples as 0 for SVC training; evaluation is then conducted on a held-out subset of the forget class. The reported MIA Accuracy in Table~\ref{tab:acc} corresponds to the SVC's attack failure rate, i.e., the proportion of forget samples classified as unseen.

Both the construction of the vulnerable set $\mathcal{V}((\bx, y), \tau)$ and the evaluation of residual knowledge $r_\tau(\calS_f)$ involve generating perturbed inputs within the norm ball $\mathcal{B}_p(\bx, \tau)$. We implement this using the \texttt{torchattacks} package \citep{kim2020torchattacks}. For Gaussian noise (i.e., $p = 2$), we use \texttt{torchattacks.GN(model, std=$\tau$)}; for FGSM (i.e., $p = \infty$), we use \texttt{torchattacks.FGSM(model, eps=$\tau$)}; and for PGD, we use \texttt{torchattacks.PGD(model, eps=$\tau$, alpha=2/255., steps=pgd\_epoch, random\_start=True)}. We apply targeted attacks for both FGSM and PGD.

To estimate residual knowledge, we apply these attacks to the unlearned model $m$ to generate perturbed inputs $\bx'$, and compute both $\Pr[m(\bx') = y]$ and $\Pr[a(\bx') = y]$ over $c = 100$ independent runs, using the same perturbation $\bx'$ for both models.
Constructing the vulnerable set $\mathcal{V}((\bx, y), \tau)$ involves solving the minimization problem $\min_{\bz \in \mathcal{B}_p(\bx, \tau)} \ell(\bw, (\bz, y))$. However, since the \texttt{torchattacks} package is designed to solve the adversarial objective $\max_{\bz \in \mathcal{B}_p(\bx, \tau)} \ell(\bw, (\bz, y))$, we instead define a random target label $y' \neq y$ and solve $\max_{\bz \in \mathcal{B}_p(\bx, \tau)} \ell(\bw, (\bz, y'))$. To adapt this into our residual knowledge framework, we flip the sign of the regularization term in Eq.~\eqref{eq:new-loss}, effectively encouraging the model to associate perturbed variants of forget samples with an incorrect label $y'$. This discourages the model from retaining knowledge of the true label and facilitates unlearning.
We repeat this process $v$ times for each forget sample to construct its corresponding vulnerable set.

\subsection{Details on Figure~\ref{fig:intro}}\label{app:intro-figure-setting}
Figure~\ref{fig:intro} is based on the UCI Iris dataset (3 classes, 50 samples each, 4 features). 
For both the original and re-trained models, we use a simple feedforward neural network with one hidden layer of 100 neurons. 
We select 7 forget samples in total from three class for illustration. 
While this selection differs from the random sample or class unlearning setup (Lines 297–303), the goal is purely illustrative—to convey the intuition behind the existence of disagreement and residual knowledge. The unlearned model is trained via GD.

\clearpage
\section{Additional results and experiments}\label{appendix:additional-experiments}
\def\theequation{C.\arabic{equation}}
\def\thetable{C.\arabic{table}}
\def\thefigure{C.\arabic{figure}}
\def\thelem{C.\arabic{lem}}
\def\thedefn{C.\arabic{defn}}
\def\theprop{C.\arabic{prop}}
We include additional experimental results and comparisons, including:
(i) comprehensive results on sample unlearning for both the small CIFAR-5 and CIFAR-10 scenarios, covering accuracy metrics and residual knowledge estimates under various adversarial attacks;
(ii) results on class unlearning for both small CIFAR-5 and CIFAR-10;
(iii) ablation studies on the hyper-parameters of \RURK{}; and
(iv) an analysis of the prevalence of residual knowledge across different settings.

\subsection{Complete results on sample unlearning for small CIFAR-5}\label{app:complete-small-cifar5}

We present the full version of Table~\ref{tab:acc} under the small CIFAR-5 setting in Table~\ref{app:tab:sample-unlearning-cifar-5-full}. Combined with Table~\ref{tab:acc}, the results demonstrate that \RURK{} consistently achieves superior performance---surpassing certified removal methods such as \CR{}, \Fisher{}, and \NTK{}, as well as deep-learning-compatible approaches like \SCRUB{}, \NegGrad{}, and \NGD{}—by achieving the smallest average gap from the re-trained model \Retrain{}. Notably, the re-learn time of \RURK{} is also comparable to that of \Retrain{}.

We further include the complete residual knowledge curves estimated under different adversarial attacks: Gaussian noise in Figure~\ref{app:fig:small-cifar-5-all-rk}, FGSM in Figure~\ref{app:fig:small-cifar-5-all-rk-fgsm}, and PGD in Figure~\ref{app:fig:small-cifar-5-all-rk-pgd}. For both Gaussian noise and FGSM, most methods exhibit residual knowledge greater than 1---i.e., inheriting information from the \Original{} model---whereas \RURK{} maintains residual knowledge close to 1 for small perturbation norms $\tau$, and reduces it below 1 as $\tau$ increases.

With PGD (10 steps), a stronger attack, we observe that methods such as \NTK{}, \NGD{}, and \SCRUB{} still suffer from residual knowledge. In contrast, methods like \GD{}, \GA{}, \NegGrad{}, \EUk{}, \CFk{}, and \SSD{} maintain residual knowledge near 1 across varying values of $\tau$. In particular, \SSD{} strictly preserves residual knowledge at 1, though this comes at the cost of a larger average gap from \Retrain{} (cf. Table~\ref{app:tab:sample-unlearning-cifar-5-full}).

Overall, \RURK{} effectively suppresses residual knowledge to values below 1 for small $\tau$, and keeps it close to 1 as $\tau$ increases---striking a favorable balance between preserving accuracy and mitigating residual knowledge.

\begin{table}[hbt]
  \caption{\small 
  The complete performance summary of various unlearning methods on small CIFAR-5 (cf.~Table~\ref{tab:acc}).
  Results are reported in the format $a_{\pm b}$, indicating the mean $a$ and standard deviation $b$ over 3 independent trials. The absolute performance gap relative to \Retrain{} is shown in (\gap{blue}).
  For methods that fail to recover the forget-set knowledge within 30 training epochs, the re-learn time is reported as ``>30''.
  }
  \label{app:tab:sample-unlearning-cifar-5-full}
  \begin{adjustbox}{max width=\linewidth}
  \centering
  \small
  \begin{tabular}{clcccccc}
    \toprule
    \multirow{2}{*}{Datasets} & \multicolumn{1}{c}{\multirow{2}{*}{Methods}} & \multicolumn{6}{c}{Evaluation Metrics}                   \\
    \cmidrule(r){3-8}
     &     & Retain Acc. (\%) & Unlearn Acc. (\%) & Test Acc. (\%) & MIA Acc. (\%) & Avg. Gap & \makecell{Re-learn\\ Time (\# Epoch)}\\
    \midrule
    \multirow{14}{*}{\rotatebox[origin=c]{90}{\makecell{Small\\ CIFAR-5}}} & \Original{} & $99.93_{\pm0.10}(\gap{0.03})$ & $0.00_{\pm0.00}(\gap{8.33})$ & $95.37_{\pm0.80}(\gap{0.57})$ & $4.67_{\pm3.30}(\gap{22.33})$ & $\gap{7.82}$ & -\\
    & \Retrain{}  & $99.96_{\pm0.05}(\gap{0.00})$ & $8.33_{\pm3.30}(\gap{0.00})$ & $94.80_{\pm0.85}(\gap{0.00})$ & $27.00_{\pm5.66}(\gap{0.00})$ & $\gap{0.00}$ & $3.33_{\pm0.47}$ \\
    \cmidrule(r){2-8}
    & \CR{}       & $99.56_{\pm0.47}(\gap{0.40})$ & $14.00_{\pm5.66}(\gap{5.67})$ & $91.80_{\pm0.99}(\gap{3.00})$  & $58.17_{\pm0.79}(\gap{31.17})$  & $\gap{10.06}$ & - \\
    & \Fisher{}   & $92.67_{\pm0.63}(\gap{7.29})$ & $12.67_{\pm0.94}(\gap{4.34})$ & $88.80_{\pm1.98}(\gap{6.00})$ & $47.33_{\pm6.13}(\gap{20.33})$ & $\gap{9.49}$ & $3.00_{\pm1.41}$\\
    & \NTK{}      & $99.93_{\pm0.10}(\gap{0.03})$ & $7.00_{\pm0.00}(\gap{1.33})$ & $95.37_{\pm0.80}(\gap{0.57})$ & $16.00_{\pm4.24}(\gap{11.00})$ & ${\gap{3.23}}$ & $4.67_{\pm0.47}$ \\
    & \GD{}       & $84.04_{\pm0.42}$(\gap{15.93}) & $22.67_{\pm5.19}$(\gap{14.33}) & $76.83_{\pm1.04}$(\gap{17.97}) & $84.67_{\pm10.84}$(\gap{27.33}) & $\gap{18.89}$ & $12.67_{\pm12.97}$
  \\
    & \NGD{}      & $95.07_{\pm1.36}(\gap{4.89})$ & $4.67_{\pm0.47}(\gap{3.66})$ & $89.33_{\pm2.03}(\gap{5.47})$ & $13.33_{\pm2.36}(\gap{13.67})$ & $\gap{6.92}$ & $0.67_{\pm0.47}$ \\
    & \GA{}       & $94.22_{\pm2.83}$(\gap{5.74}) & $40.67_{\pm5.19}$(\gap{32.33}) & $86.37_{\pm0.24}$(\gap{8.43}) & $84.00_{\pm11.31}$(\gap{26.67}) & $\gap{18.29}$ & $2.33_{\pm0.47}$ \\
    & \NegGrad{}  & $96.78_{\pm2.04}$(\gap{3.19}) & $25.00_{\pm1.41}$(\gap{16.67}) & $89.77_{\pm0.66}$(\gap{5.03}) & $74.67_{\pm17.91}$(\gap{17.33}) & $\gap{10.55}$ & $2.00_{\pm0.00}$\\
    & \EUk{}      & $91.15_{\pm5.92}$(\gap{8.81}) & $20.00_{\pm4.24}$(\gap{11.67}) & $76.33_{\pm4.05}$(\gap{18.47}) & $37.00_{\pm4.24}$(\gap{20.33}) & $\gap{14.82}$ & $9.33_{\pm6.85}$ \\
    & \CFk{}      & $99.96_{\pm0.05}$(\gap{0.00}) & $0.33_{\pm0.47}$(\gap{8.00}) & $94.73_{\pm0.75}$(\gap{0.07}) & $37.00_{\pm24.04}$(\gap{20.33}) & $\gap{7.10}$ & $17.00_{\pm11.43}$ \\
    & \SCRUB{}    & $99.88_{\pm0.18}(\gap{0.08})$ & $1.33_{\pm0.47}(\gap{7.00})$ & $94.67_{\pm0.79}(\gap{0.13})$ & $18.33_{\pm9.43}(\gap{8.67})$ & ${\gap{3.97}}$ & $1.00_{\pm0.00}$ \\
    & \SSD{}      & $96.85_{\pm1.20}$(\gap{3.11}) & $13.33_{\pm8.01}$(\gap{5.00}) & $89.43_{\pm2.88}$(\gap{5.37}) & $69.00_{\pm8.49}$(\gap{11.67}) & $\gap{6.29}$ & $2.33_{\pm0.47}$ \\
    & \RURK{}     & $99.52_{\pm0.37}(\gap{0.44})$ & $5.67_{\pm2.36}(\gap{2.66})$ & $93.83_{\pm0.90}(\gap{0.97})$ & $33.33_{\pm12.26}(\gap{6.33})$ & ${\gap{2.60}}$ & $2.00_{\pm0.00}$ \\
    \bottomrule
  \end{tabular}
  \end{adjustbox}
\end{table}

\clearpage
\begin{figure}[htb]
\centering 
\includegraphics[width=\linewidth]{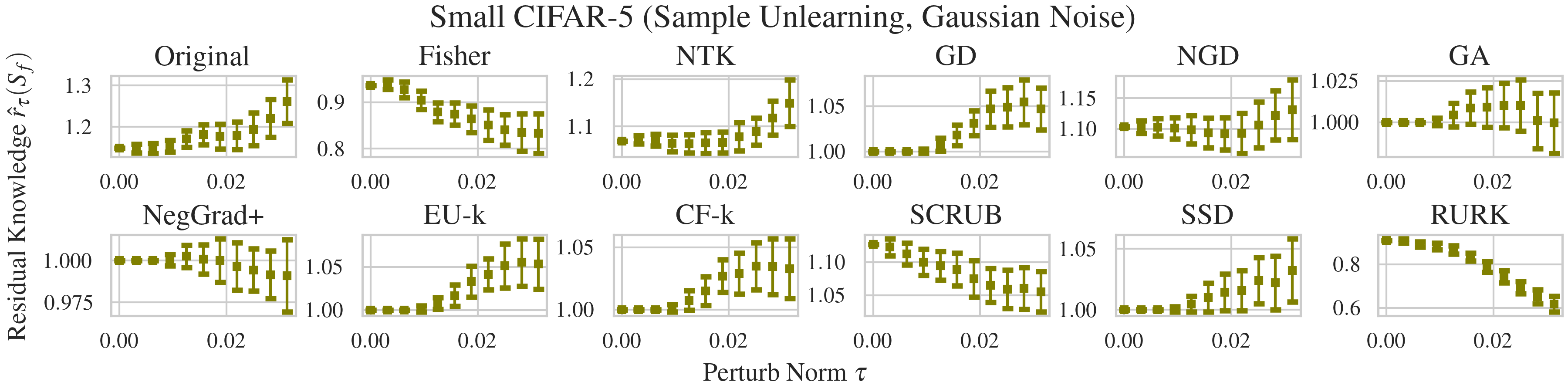}
\caption{\small 
Residual knowledge $\hat{r}_\tau(\calS_f)$ of the proposed \RURK{}, \Original{}, and other unlearning methods on small CIFAR-5 with sample unlearning, evaluated under varying perturbation norms $\tau$, using Gaussian noise ($p=2$) to draw $c=100$ samples from $\calB_p(\bx, \tau)$.
}
\label{app:fig:small-cifar-5-all-rk}
\end{figure}

\begin{figure}[htb]
\centering 
\includegraphics[width=\linewidth]{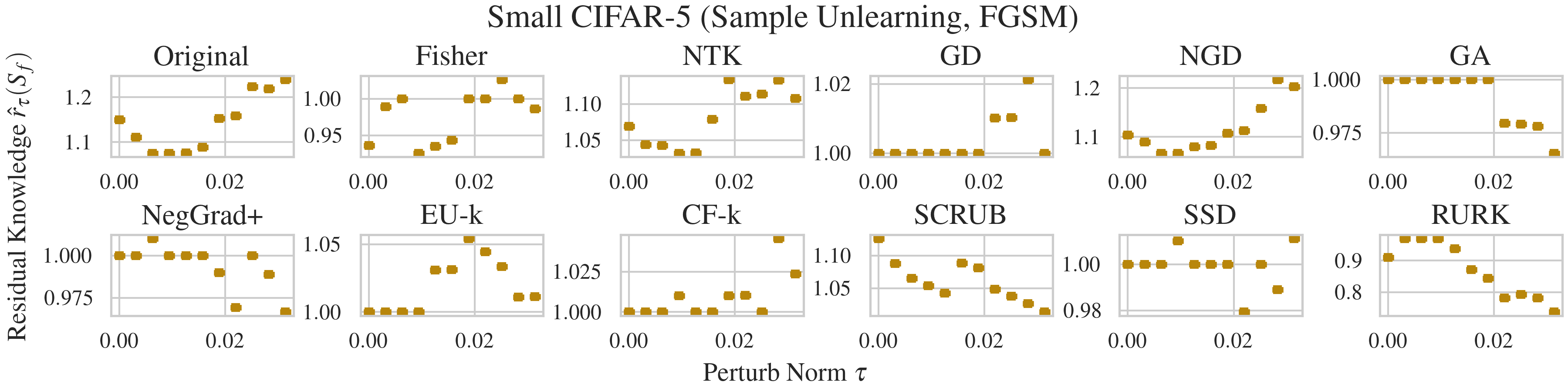}
\caption{\small 
Residual knowledge $\hat{r}_\tau(\calS_f)$ of the proposed \RURK{}, \Original{}, and other unlearning methods on small CIFAR-5 with sample unlearning, evaluated under varying perturbation norms $\tau$, using targeted FGSM ($p=\infty$) to draw $c=100$ samples from $\calB_p(\bx, \tau)$.
}
\label{app:fig:small-cifar-5-all-rk-fgsm}
\end{figure}

\begin{figure}[htb]
\centering 
\includegraphics[width=\linewidth]{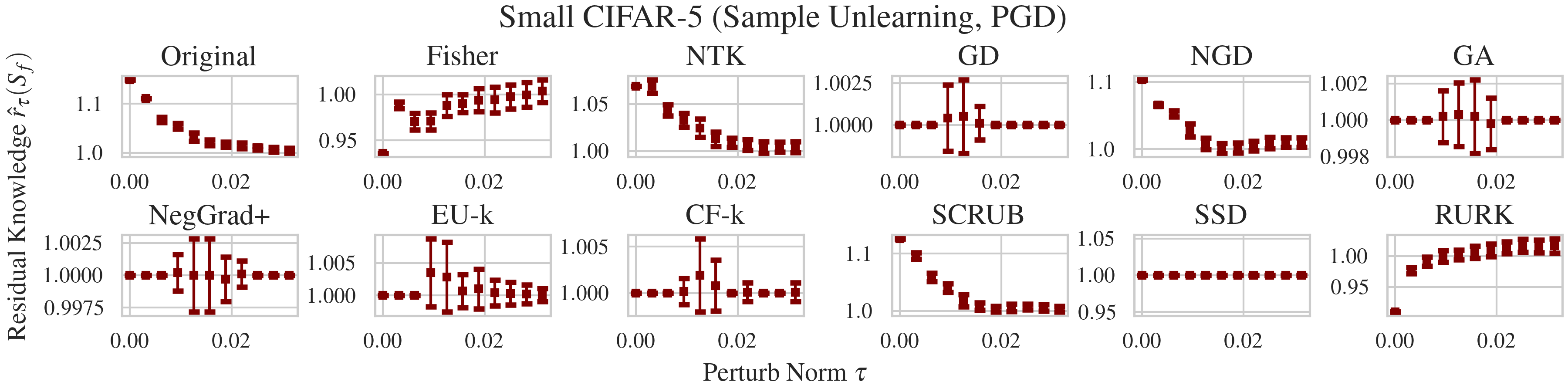}
\caption{\small 
Residual knowledge $\hat{r}_\tau(\calS_f)$ of the proposed \RURK{}, \Original{}, and other unlearning methods on small CIFAR-5 with sample unlearning, evaluated under varying perturbation norms $\tau$, using targeted PGD ($p=\infty$) to draw $c=100$ samples from $\calB_p(\bx, \tau)$.
}
\label{app:fig:small-cifar-5-all-rk-pgd}
\end{figure}

\clearpage
We provide the residual knowledge estimates under FGSM in Figure~\ref{app:fig:cifar-10-all-rk-fgsm} and under PGD in Figure~\ref{app:fig:cifar-10-all-rk-pgd}, corresponding to the results presented in Table~\ref{tab:acc} and Figure~\ref{fig:RK-ratio} of the main text. \RURK{} demonstrates consistent behavior across all adversarial attack types---Gaussian noise, FGSM, and PGD—by maintaining residual knowledge close to 1 for small perturbation radii $\tau$, and effectively suppressing it below 1 as $\tau$ increases.

\begin{figure}[htb]
\centering 
\includegraphics[width=\linewidth]{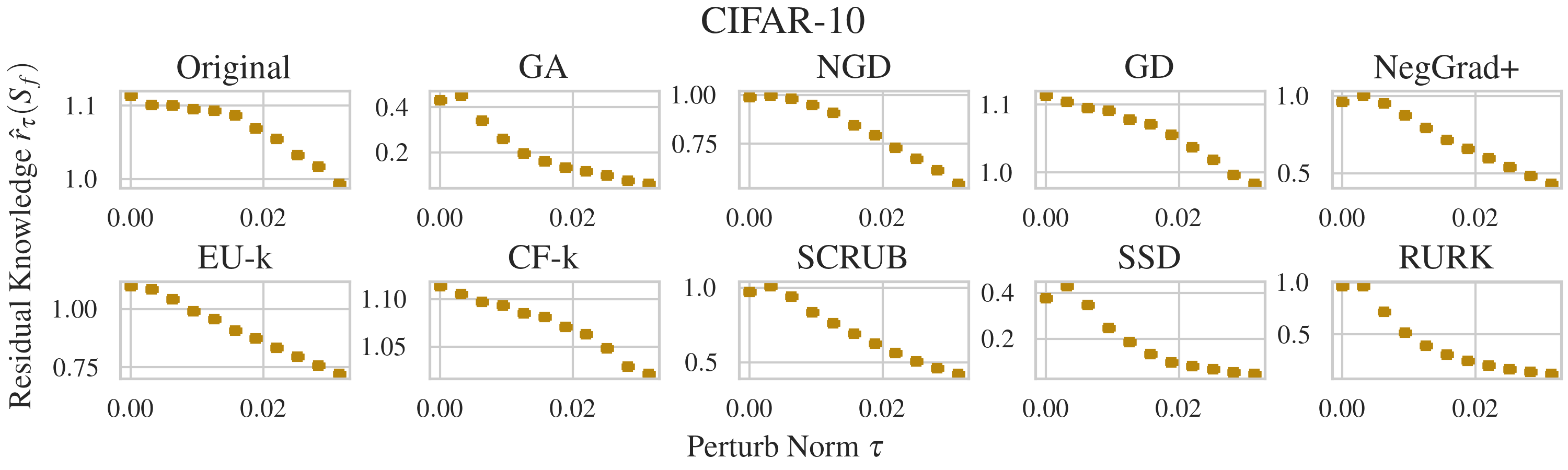}
\caption{\small 
Residual knowledge $\hat{r}_\tau(\calS_f)$ of the proposed \RURK{}, \Original{}, and other unlearning methods on CIFAR-10 with sample unlearning, evaluated under varying perturbation norms $\tau$, using targeted FGSM ($p=\infty$) to draw $c=100$ samples from $\calB_p(\bx, \tau)$.
}
\label{app:fig:cifar-10-all-rk-fgsm}
\end{figure}

\begin{figure}[htb]
\centering 
\includegraphics[width=\linewidth]{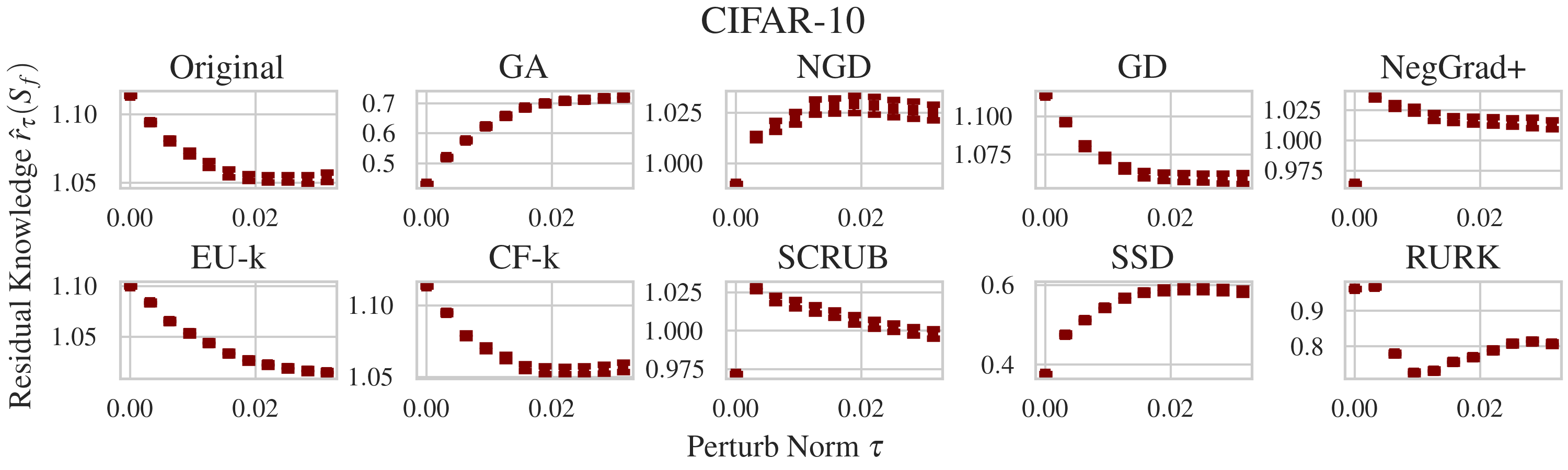}
\caption{\small 
Residual knowledge $\hat{r}_\tau(\calS_f)$ of the proposed \RURK{}, \Original{}, and other unlearning methods on CIFAR-10 with sample unlearning, evaluated under varying perturbation norms $\tau$, using targeted PGD ($p=\infty$) to draw $c=100$ samples from $\calB_p(\bx, \tau)$.
}
\label{app:fig:cifar-10-all-rk-pgd}
\end{figure}

\clearpage
\subsection{Class unlearning: Forgetting all samples in a class}\label{app:class-unlearning}

In the main text, we focus on the sample unlearning setting, where 50\% of the samples from class 0 are unlearned. Here, we provide analogous results for the class unlearning setting, where 100\% of the samples from class 0 are removed, on both small CIFAR-5 and CIFAR-10.
Note that in class unlearning, the \Retrain{} model is never exposed to any samples from class 0. As a result, the unlearning accuracy is 100\%---that is, \Retrain{} never correctly classifies any forget sample into class 0. In Table~\ref{app:tab:class-unlearning-cifar-5-full}, we report the same evaluation metrics as in Table~\ref{tab:acc}, including the average absolute gaps in retain, unlearn, test, and MIA accuracy compared to \Retrain{}.

In both small CIFAR-5 and CIFAR-10, \EUk{} achieves the smallest average gap relative to \Retrain{}. This result is expected, as class unlearning in this case resembles a transfer learning scenario: from a source domain (small CIFAR-5/CIFAR-10) to a target domain (small CIFAR-4/CIFAR-9), where the forget class is entirely absent. This creates a clear domain shift, allowing \EUk{} to perform well by fine-tuning only the top layers.
In contrast, in sample unlearning, both the source and target domains still contain samples from class 0, making transfer learning less effective and leading to poorer performance for \EUk{}.

We also present residual knowledge estimates for the small CIFAR-5 class unlearning scenario under Gaussian noise (Figure~\ref{app:fig:small-cifar-5-all-rk-class-gn}), FGSM (Figure~\ref{app:fig:small-cifar-5-all-rk-class-fgsm}), and PGD (Figure~\ref{app:fig:small-cifar-5-all-rk-class-pgd}). While \RURK{} is the second-best performer in terms of average accuracy gap in the small CIFAR-5 setting, it outperforms \EUk{} in suppressing residual knowledge.
In the CIFAR-10 class unlearning scenario, \RURK{} is the best among popular methods such as \GA{}, \SCRUB{} and \SSD{}.

\begin{table}[hbt]
  \caption{
  \small 
  Performance summary of various unlearning methods for class unlearning.
  Results are reported in the format $a_{\pm b}$, indicating the mean $a$ and standard deviation $b$ over 3 independent trials. The absolute performance gap relative to \Retrain{} is shown in (\gap{blue}).
  For methods that fail to recover the forget-set knowledge within 30 training epochs, the re-learn time is reported as ``>30''.
  }
  \label{app:tab:class-unlearning-cifar-5-full}
  \begin{adjustbox}{max width=\linewidth}
  \centering
  \small
  \begin{tabular}{clcccccc}
    \toprule
    \multirow{2}{*}{Datasets} & \multicolumn{1}{c}{\multirow{2}{*}{Methods}} & \multicolumn{6}{c}{Evaluation Metrics}                   \\
    \cmidrule(r){3-8}
     &     & Retain Acc. (\%) & Unlearn Acc. (\%) & Test Acc. (\%) & MIA Acc. (\%) & Avg. Gap & \makecell{Re-learn\\ Time (\# Epoch)}\\
    \midrule
    \multirow{14}{*}{\rotatebox[origin=c]{90}{\makecell{Small\\ CIFAR-5}}} & \Original{} & $99.92_{\pm0.12}(\gap{0.30})$ & $0.00_{\pm0.00}(\gap{100.00})$ & $95.25_{\pm0.88}(\gap{1.75})$ & $26.50_{\pm4.95}(\gap{73.50})$ & $\gap{43.89}$ & -\\
    & \Retrain{}  & $99.79_{\pm0.12}(\gap{0.17})$ & $100.00_{\pm0.00}(\gap{0.00})$ & $93.33_{\pm0.12}(\gap{0.00})$ & $100.00_{\pm0.00}(\gap{0.00})$ & $\gap{0.00}$ & $1.00_{\pm0.00}$ \\
    \cmidrule(r){2-8}
    & \Fisher{}   & $93.75_{\pm0.71}(\gap{6.04})$ & $14.33_{\pm0.47}(\gap{85.67})$ & $89.12_{\pm2.12}(\gap{4.20})$ & $54.67_{\pm6.84}(\gap{45.33})$ & $\gap{35.31}$ & $3.33_{\pm0.47}$ \\
    & \NTK{}      & $25.00_{\pm0.00}(\gap{74.79})$ & $100.00_{\pm0.00}(\gap{0.00})$ & $25.00_{\pm0.00}(\gap{68.33})$ & $100.00_{\pm0.00}(\gap{0.00})$ & $\gap{35.78}$ & $9.67_{\pm0.47}$ \\
    & \GD{}       & $67.29_{\pm0.41}$(\gap{32.50}) & $93.67_{\pm0.24}$(\gap{6.33}) & $62.75_{\pm2.65}$(\gap{30.58}) & $89.83_{\pm6.60}$(\gap{10.17}) & $\gap{19.90}$ & $31.00_{\pm0.00}$ \\
    & \NGD{}      & $91.38_{\pm0.00}(\gap{8.42})$ & $100.00_{\pm0.00}(\gap{0.00})$ & $59.75_{\pm0.00}(\gap{33.58})$ & $100.00_{\pm0.00}(\gap{0.00})$ & $\gap{10.50}$ & $12.33_{\pm1.89}$ \\
    & \GA{}       & $98.17_{\pm1.71}$(\gap{1.62}) & $33.00_{\pm5.66}$(\gap{67.00}) & $92.79_{\pm0.94}$(\gap{0.54}) & $64.33_{\pm0.94}$(\gap{35.67}) & $\gap{26.21}$ & $2.00_{\pm0.00}$ \\
    & \NegGrad{}  & $99.50_{\pm0.53}$(\gap{0.29}) & $19.83_{\pm5.42}$(\gap{80.17}) & $93.83_{\pm0.65}$(\gap{0.50}) & $58.67_{\pm2.59}$(\gap{41.33}) & $\gap{30.57}$ & $2.00_{\pm0.00}$ \\
    & \EUk{}      & $99.46_{\pm0.06}$(\gap{0.33}) & $100.00_{\pm0.00}$(\gap{0.00}) & $84.50_{\pm0.35}$(\gap{8.83}) & $100.00_{\pm0.00}$(\gap{0.00}) & $\gap{2.29}$ & $31.00_{\pm0.00}$ \\
    & \CFk{}      & $99.96_{\pm0.06}$(\gap{0.17}) & $42.33_{\pm9.66}$(\gap{57.67}) & $95.79_{\pm0.41}$(\gap{2.46}) & $97.83_{\pm2.36}$(\gap{2.17}) & $\gap{15.61}$ & $31.00_{\pm0.00}$ \\
    & \SCRUB{}    & $99.96_{\pm0.06}(\gap{0.17})$ & $7.33_{\pm5.42}(\gap{92.67})$ & $95.5_{\pm0.35}(\gap{2.2})$ & $43.67_{\pm1.18}(\gap{56.33})$ & $\gap{37.84}$ & $1.00_{\pm0.00}$ \\
    & \SSD{}      &  $98.46_{\pm0.24}$(\gap{1.33}) & $7.00_{\pm9.90}$(\gap{93.00}) & $93.29_{\pm0.77}$(\gap{0.04}) & $42.17_{\pm4.48}$(\gap{57.83}) & $\gap{38.05}$ & $2.00_{\pm0.00}$ \\
    & \RURK{}     & $98.58_{\pm1.65}(\gap{1.21})$ & $87.17_{\pm2.59}(\gap{12.83})$ & $93.71_{\pm0.65}(\gap{0.38})$ & $96.33_{\pm5.19}(\gap{3.67})$ & $\gap{4.52}$ & $1.33_{\pm0.47}$ \\
    \midrule\midrule
    \multirow{14}{*}{\rotatebox[origin=c]{90}{\makecell{CIFAR-10}}} & \Original{} & $100.00_{\pm0.00}(\gap{0.00})$ & $0.00_{\pm0.00}(\gap{100.00})$ & $94.71_{\pm0.06}(\gap{0.82})$ & $7.87_{\pm0.29}(\gap{92.13})$ & $\gap{48.24}$ & - \\
    & \Retrain{}  & $100.00_{\pm0.00}(\gap{0.00})$ & $100.00_{\pm0.00}(\gap{0.00})$ & $93.89_{\pm0.14}(\gap{0.00})$ & $100.00_{\pm0.00}(\gap{0.00})$ & $\gap{0.00}$ & $12.67_{\pm 4.64}$ \\
    \cmidrule(r){2-8}
    & \GD{}       & $99.98_{\pm0.02}$(\gap{0.02}) & $0.09_{\pm0.06}$(\gap{99.91}) & $94.33_{\pm0.09}$(\gap{0.44}) & $29.27_{\pm0.79}$(\gap{70.73}) & $\gap{42.78}$ & $0.67_{\pm0.47}$ \\
    & \NGD{}      & $99.91_{\pm0.01}(\gap{0.09})$ & $20.84_{\pm0.35}(\gap{79.16})$ & $94.40_{\pm0.02}(\gap{0.51})$ & $96.43_{\pm0.12}(\gap{3.57})$ & $\gap{20.83}$ & $8.00_{\pm3.27}$ \\
    & \GA{}       & $95.51_{\pm0.16}$(\gap{4.49}) & $93.60_{\pm0.07}$(\gap{6.40}) & $88.60_{\pm0.21}$(\gap{5.29}) & $96.47_{\pm0.31}$(\gap{3.53}) & $\gap{4.93}$ & $1.00_{\pm0.82}$ \\
    & \NegGrad{}  & $99.93_{\pm0.00}$(\gap{0.07}) & $57.19_{\pm0.58}$(\gap{42.81}) & $94.30_{\pm0.06}$(\gap{0.41}) & $86.80_{\pm0.37}$(\gap{13.20}) & $\gap{14.12}$ & $0.67_{\pm0.47}$ \\
    & \EUk{}      & $98.38_{\pm0.03}$(\gap{1.62}) & $100.00_{\pm0.00}$(\gap{0.00}) & $92.15_{\pm0.01}$(\gap{1.74}) & $100.00_{\pm0.00}$(\gap{0.00}) & $\gap{0.84}$ & $2.67_{\pm0.94}$ \\
    & \CFk{}      & $100.00_{\pm0.00}$(\gap{0.00}) & $0.06_{\pm0.03}$(\gap{99.94}) & $94.57_{\pm0.07}$(\gap{0.69}) & $38.43_{\pm1.05}$(\gap{61.57}) & $\gap{40.55}$ & $0.00_{\pm0.00}$ \\
    & \SCRUB{}    & $93.15_{\pm9.26}$(\gap{6.85}) &  $100.00_{\pm0.00}$(\gap{0.00}) & $86.16_{\pm7.88}$(\gap{7.17}) & $100.00_{\pm0.00}$(\gap{0.00}) &  
    $\gap{3.51}$ & $18.67_{\pm7.41}$ \\
    & \SSD{}      & $100.00_{\pm0.00}$(\gap{0.00}) & $65.27_{\pm1.43}$(\gap{34.73}) & $95.09_{\pm0.02}$(\gap{1.20}) & $100.00_{\pm0.00}$(\gap{0.00}) & $\gap{8.98}$ & $5.67_{\pm1.25}$ \\
    & \RURK{}     & $99.80_{\pm0.04}(\gap{0.20})$ & $95.52_{\pm0.86}(\gap{4.48})$ & $93.93_{\pm0.06}(\gap{0.04})$ & $97.90_{\pm0.36}(\gap{2.10})$ & $\gap{1.70}$ & $1.00_{\pm0.00}$ \\
    \bottomrule
  \end{tabular}
  \end{adjustbox}
\end{table}

\clearpage
\begin{figure}[htb]
\centering 
\includegraphics[width=\linewidth]{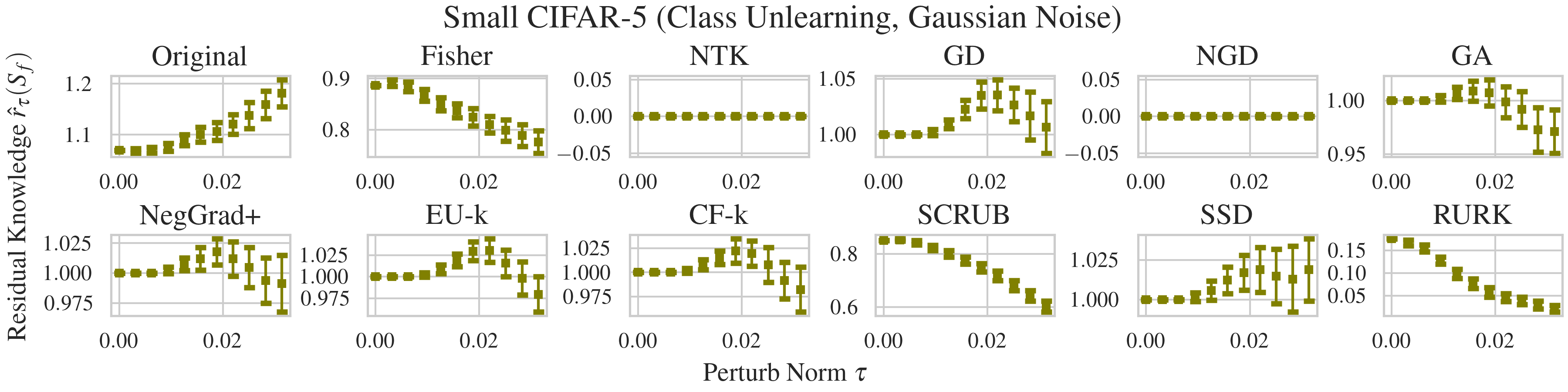}
\caption{\small 
Residual knowledge $\hat{r}_\tau(\calS_f)$ of the proposed \RURK{}, \Original{}, and other unlearning methods on small CIFAR-5 with class unlearning, evaluated under varying perturbation norms $\tau$, using Gaussian noise ($p=2$) to draw $c=100$ samples from $\calB_p(\bx, \tau)$.
}
\label{app:fig:small-cifar-5-all-rk-class-gn}
\end{figure}

\begin{figure}[htb]
\centering 
\includegraphics[width=\linewidth]{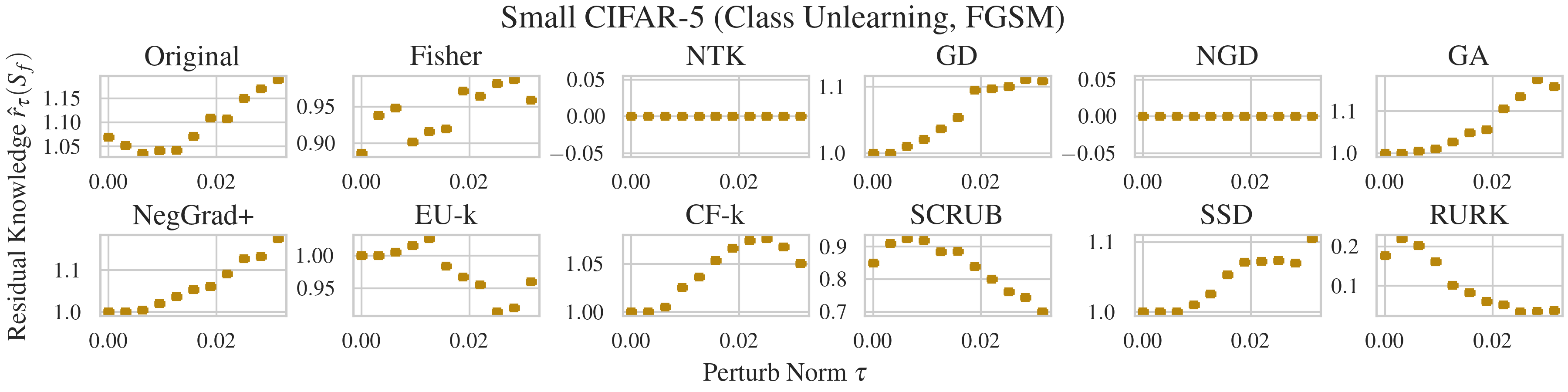}
\caption{\small 
Residual knowledge $\hat{r}_\tau(\calS_f)$ of the proposed \RURK{}, \Original{}, and other unlearning methods on small CIFAR-5 with class unlearning, evaluated under varying perturbation norms $\tau$, using targeted FGSM ($p=\infty$) to draw $c=100$ samples from $\calB_p(\bx, \tau)$.
}
\label{app:fig:small-cifar-5-all-rk-class-fgsm}
\end{figure}

\begin{figure}[htb]
\centering 
\includegraphics[width=\linewidth]{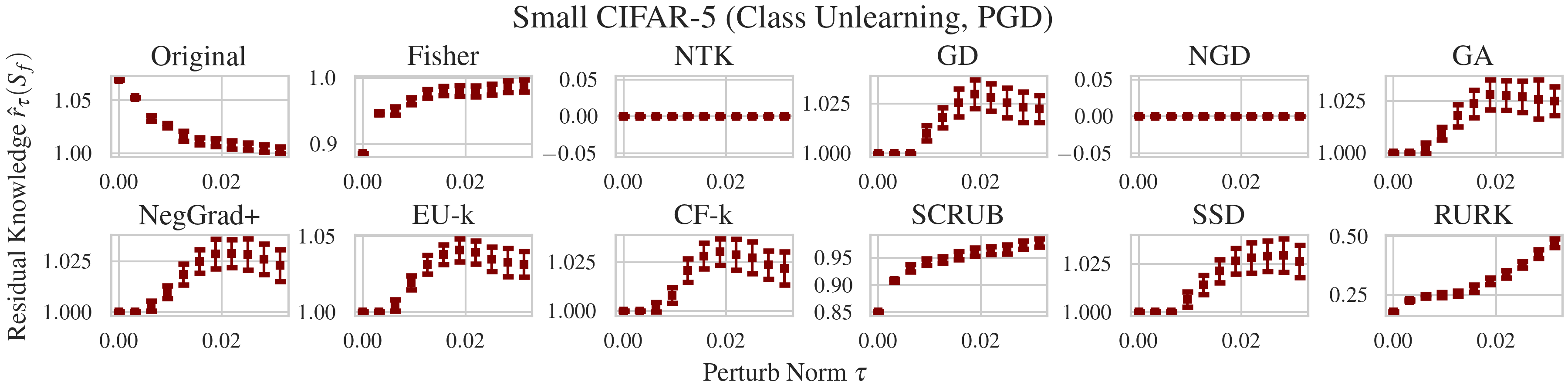}
\caption{\small 
Residual knowledge $\hat{r}_\tau(\calS_f)$ of the proposed \RURK{}, \Original{}, and other unlearning methods on small CIFAR-5 with class unlearning, evaluated under varying perturbation norms $\tau$, using targeted PGD ($p=\infty$) to draw $c=100$ samples from $\calB_p(\bx, \tau)$.
}
\label{app:fig:small-cifar-5-all-rk-class-pgd}
\end{figure}

\clearpage
\subsection{Ablation studies on \RURK{}}\label{app:ablation}

In this section, we conduct ablation studies on the key hyper-parameters of \RURK{}, including the perturbation radius $\tau$, regularization strength $\lambda$, sample size $v$, types of adversarial attacks, and alternative model architectures. We focus on the sample unlearning setting in the small CIFAR-5 scenario, with all results summarized in Table~\ref{app:tab:sample-unlearning-cifar-5-ablation}, Table~\ref{app:tab:sample-unlearning-cifar-5-full-vgg}, Figure~\ref{app:fig:small-cifar-5-all-rk-tau}, and  Figure~\ref{app:fig:small-cifar-5-rk-vgg}.

\paragraph{Different perturbation norms $\tau$.} 

The perturbation norm $\tau$ determines the radius of the perturbation ball used to evaluate residual knowledge. We set $\tau = 0.03 \approx 8/255$, which aligns with common practice for the maximum adversarial perturbation. To assess the sensitivity of \RURK{} to this parameter, we conduct an ablation study over $\tau \in \{0.00, 0.01, 0.02, 0.03, 0.04, 0.10\}$. As shown in Table~\ref{app:tab:sample-unlearning-cifar-5-ablation}, when $\tau = 0.00$ or $0.01$, \RURK{} fails to unlearn the forget samples from \Original{}. However, as $\tau$ increases, the unlearning accuracy improves monotonically, while the test accuracy declines—as expected due to the increased regularization. Notably, \RURK{} achieves the best average gap when $\tau = 0.03$. Since $\tau$ is the key hyper-parameter controlling the trade-off between unlearning efficacy and utility, we further visualize the estimated residual knowledge under varying $\tau$ in Figure~\ref{app:fig:small-cifar-5-all-rk-tau}. The figure shows that increasing $\tau$ improves robustness against residual knowledge under stronger attacks. For instance, under Gaussian noise and FGSM, the residual knowledge remains near 1 for small $\tau$. In contrast, PGD—a stronger adversary—can still uncover residual knowledge at low $\tau$ values, but its effectiveness diminishes significantly when $\tau = 0.1$, indicating improved robustness.

\paragraph{Different regularization strengths $\lambda$.}

The regularization strength $\lambda$ controls the weighting of samples in the vulnerable set within the loss function defined in Eq.~\eqref{eq:new-loss}. We sweep over $\lambda \in \{0.00, 0.01, 0.02, 0.03, 0.04, 0.10\}$ and observe that the trend in the average gap closely mirrors that of $\tau$. This similarity is expected, as both $\tau$ and $\lambda$ jointly influence the extent to which residual knowledge is removed during unlearning.

\paragraph{Other methods to search for the vulnerable set.}

In the main text, we search for samples in the vulnerable set $\calV((\bx, y), \tau)$ using Gaussian noise. Here, we extend the evaluation to include stronger adversarial methods, such as FGSM and PGD, with varying numbers of attack steps (indicated by the number following PGD in Table~\ref{app:tab:sample-unlearning-cifar-5-ablation}). We observe that FGSM---a targeted attack---achieves MIA accuracy comparable to that of \Retrain{}. Likewise, PGD with 5 steps also yields MIA accuracy close to \Retrain{}, indicating effective removal of forget-set information. These results suggest that targeted attacks (FGSM, PGD) are more effective than untargeted ones like Gaussian noise in eliminating residual knowledge, albeit at the cost of increased unlearning time.

\paragraph{Different samples size $v$.}

The sample size $v$ in Eq.\eqref{eq:new-loss} determines how extensively the vulnerable set $\calV((\bx, y), \tau)$ is explored through sampling. In high-dimensional settings, $\calV((\bx, y), \tau)$ may contain a large number of perturbed variants. However, due to computational constraints, it is impractical to include all possible samples during training. Instead, we randomly draw $v$ samples from $\calV((\bx, y), \tau)$ for each forget sample. We experiment with $v \in \{1, 2, 3, 4\}$ and observe that the average gap achieved by \RURK{} remains relatively stable across these values. This robustness is expected, as the loss term $\kappa(\bw, (\bx, y))$ in Eq.\eqref{eq:new-loss} is computed as an average over the $v$ sampled perturbations.

\paragraph{Other learning structures.}

Thus far, our evaluation of \RURK{} has focused on models trained with ResNet-18. To assess its generality across architectures, we present results on small CIFAR-5 using VGG-11 \citep{simonyan2014very}, as shown in Table~\ref{app:tab:sample-unlearning-cifar-5-full-vgg}. Compared to \NGD{}, \RURK{} continues to achieve the smallest average gap. Figure~\ref{app:fig:small-cifar-5-rk-vgg} further illustrates the residual knowledge comparison between \RURK{} and \NGD{}, demonstrating that \RURK{} maintains lower and more stable residual knowledge across perturbations. These results confirm that \RURK{} effectively removes forget-set information and controls residual knowledge across different network architectures, including both plain convolutional networks (VGG-11) and those with residual connections (ResNet-18).

\clearpage
\begin{table}[hbt]
  \caption{\small 
  Ablation study of \RURK{} on small CIFAR-5 with sample unlearning.
  Results are reported in the format $a_{\pm b}$, indicating the mean $a$ and standard deviation $b$ over 3 independent trials. The absolute performance gap relative to \Retrain{} is shown in (\gap{blue}).
  For methods that fail to recover the forget-set knowledge within 30 training epochs, the re-learn time is reported as ``>30''. We bold the results with hyper-parameters the same in the main text.
  }
  \label{app:tab:sample-unlearning-cifar-5-ablation}
  \begin{adjustbox}{max width=\linewidth}
  \centering
  \small
  \begin{tabular}{clcccccc}
    \toprule
    \multirow{3}{*}{\makecell{Other\\ Parameters}} & \multicolumn{1}{c}{\multirow{3}{*}{Methods}} & \multicolumn{6}{c}{Evaluation Metrics}                   \\
    \cmidrule(r){3-8}
     &     & Retain Acc. (\%) & Unlearn Acc. (\%) & Test Acc. (\%) & MIA Acc. (\%) & Avg. Gap & \makecell{Re-learn\\ Time (\# Epoch)}\\
    \midrule
    \multirow{2}{*}{-} & \Original{} & $99.93_{\pm0.10}(\gap{0.03})$ & $0.00_{\pm0.00}(\gap{8.33})$ & $95.37_{\pm0.80}(\gap{0.57})$ & $4.67_{\pm3.30}(\gap{22.33})$ & $\gap{7.82}$ & -\\
    & \Retrain{}  & $99.96_{\pm0.05}(\gap{0.00})$ & $8.33_{\pm3.30}(\gap{0.00})$ & $94.80_{\pm0.85}(\gap{0.00})$ & $27.00_{\pm5.66}(\gap{0.00})$ & $\gap{0.00}$ & $3.33_{\pm0.47}$ \\
    \cmidrule(r){1-8}
    \multirow{6}{*}{\makecell{Gaussian \\ $\lambda=0.03$ \\ $v=1$ \\ \# epoch = 1}}
    & \RURK{} ($\tau=0.00$)    & $99.93_{\pm0.10}(\gap{0.03})$ & $0.00_{\pm0.00}(\gap{8.33})$ & $95.10_{\pm0.99}(\gap{0.30})$ & $14.33_{\pm8.01}(\gap{12.67})$ & $\gap{5.33}$ & $1.00_{\pm0.00}$ \\
    & \RURK{} ($\tau=0.01$)    & $99.93_{\pm0.10}(\gap{0.03})$ & $0.00_{\pm0.00}(\gap{8.33})$ & $94.63_{\pm1.04}(\gap{0.17})$ & $15.67_{\pm8.96}(\gap{11.33})$ & $\gap{4.97}$ & $1.00_{\pm0.00}$ \\
    & \RURK{} ($\tau=0.02$)    & $99.89_{\pm0.16}(\gap{0.07})$ & $2.33_{\pm0.47}(\gap{6.00})$ & $94.27_{\pm1.08}(\gap{0.53})$ & $22.00_{\pm9.90}(\gap{5.00})$ & $\gap{2.90}$ & $1.00_{\pm0.00}$ \\
    & \RURK{} ($\mathbf{\tau=0.03}$)    & $\mathbf{99.52_{\pm0.37}(\gap{0.44})}$ & $\mathbf{5.67_{\pm2.36}(\gap{2.66})}$ & $\mathbf{93.83_{\pm0.90}(\gap{0.97})}$ & $\mathbf{33.33_{\pm12.26}(\gap{6.33})}$ & $\mathbf{\gap{2.60}}$ & $\mathbf{2.00_{\pm0.00}}$ \\
    & \RURK{} ($\tau=0.04$)    & $98.96_{\pm0.37}(\gap{1.00})$ & $13.33_{\pm0.47}(\gap{5.00})$ & $92.37_{\pm1.23}(\gap{2.43})$ & $38.67_{\pm14.61}(\gap{11.67})$ & $\gap{5.03}$ & $1.67_{\pm0.47}$ \\
    & \RURK{} ($\tau=0.10$)    & $96.89_{\pm0.16}(\gap{3.07})$ & $29.67_{\pm4.71}(\gap{21.34})$ & $87.80_{\pm2.55}(\gap{7.00})$ & $54.00_{\pm15.56}(\gap{27.00})$ & $\gap{14.60}$ & $1.67_{\pm0.47}$ \\
    \cmidrule(r){1-8}
    \multirow{6}{*}{\makecell{Gaussian \\ $\tau=0.03$ \\ $v=1$ \\ \# epoch = 1}}
    & \RURK{} ($\lambda=0.00$)    & $99.96_{\pm0.05}(\gap{0.00})$ & $0.00_{\pm0.00}(\gap{8.33})$ & $95.00_{\pm1.13}(\gap{0.20})$ & $11.33_{\pm6.60}(\gap{15.67})$ & $\gap{6.05}$ & $1.00_{\pm0.00}$ \\
    & \RURK{} ($\lambda=0.01$)    & $99.93_{\pm0.10}(\gap{0.03})$ & $0.00_{\pm0.00}(\gap{8.33})$ & $94.60_{\pm0.99}(\gap{0.20})$ & $15.67_{\pm8.96}(\gap{11.33})$ & $\gap{4.97}$ & $1.00_{\pm0.00}$ \\
    & \RURK{} ($\lambda=0.02$)    & $99.89_{\pm0.16}(\gap{0.07})$ & $2.67_{\pm0.94}(\gap{5.66})$ & $94.17_{\pm1.08}(\gap{0.63})$ & $22.67_{\pm10.37}(\gap{4.33})$ & $\gap{2.68}$ & $1.00_{\pm0.00}$ \\
    & \RURK{} ($\mathbf{\lambda=0.03}$)    & $\mathbf{99.52_{\pm0.37}(\gap{0.44})}$ & $\mathbf{5.67_{\pm2.36}(\gap{2.66})}$ & $\mathbf{93.83_{\pm0.90}(\gap{0.97})}$ & $\mathbf{33.33_{\pm12.26}(\gap{6.33})}$ & $\mathbf{\gap{2.60}}$ & $\mathbf{2.00_{\pm0.00}}$ \\
    & \RURK{} ($\lambda=0.04$)    & $98.89_{\pm0.47}(\gap{1.07})$ & $13.33_{\pm0.47}(\gap{5.00})$ & $92.50_{\pm0.85}(\gap{2.30})$ & $39.67_{\pm13.20}(\gap{12.67})$ & $\gap{5.26}$ & $1.67_{\pm0.47}$ \\
    & \RURK{} ($\lambda=0.10$)    & $85.04_{\pm2.62}(\gap{14.92})$ & $76.67_{\pm6.60}(\gap{68.34})$ & $76.87_{\pm0.05}(\gap{17.93})$ & $87.00_{\pm4.24}(\gap{60.00})$ & $\gap{40.30}$ & $2.00_{\pm0.00}$ \\
    \cmidrule(r){1-8}
    \multirow{5}{*}{\makecell{$\lambda=0.03$ \\ $\tau=0.03$ \\ $v=1$ \\ \# epoch = 1}}
    & \RURK{} (\bf{Gaussian})    & $\mathbf{99.52_{\pm0.37}(\gap{0.44})}$ & $\mathbf{5.67_{\pm2.36}(\gap{2.66})}$ & $\mathbf{93.83_{\pm0.90}(\gap{0.97})}$ & $\mathbf{33.33_{\pm12.26}(\gap{6.33})}$ & $\mathbf{\gap{2.60}}$ & $\mathbf{2.00_{\pm0.00}}$ \\
    & \RURK{} (FGSM)    & $99.70_{\pm0.10}(\gap{0.26})$ & $4.33_{\pm0.47}(\gap{4.00})$ & $94.17_{\pm1.08}(\gap{0.63})$ & $28.67_{\pm11.79}(\gap{1.67})$ & $\gap{1.64}$ & $1.00_{\pm0.00}$ \\
    & \RURK{} (PGD $1$)    & $98.81_{\pm0.10}(\gap{1.15})$ & $12.33_{\pm0.47}(\gap{4.00})$ & $92.53_{\pm1.32}(\gap{2.27})$ & $46.67_{\pm14.61}(\gap{19.67})$ & $\gap{6.77}$ & $1.67_{\pm0.47}$ \\
    & \RURK{} (PGD $5$)    & $99.85_{\pm0.05}(\gap{0.11})$ & $2.00_{\pm0.00}(\gap{6.33})$ & $94.07_{\pm1.23}(\gap{0.73})$ & $27.33_{\pm13.67}(\gap{0.33})$ & $\gap{1.88}$ & $1.00_{\pm0.00}$ \\
    & \RURK{} (PGD $10$)    & $99.93_{\pm0.10}(\gap{0.03})$ & $0.67_{\pm0.47}(\gap{7.66})$ & $94.57_{\pm1.08}(\gap{0.23})$ & $20.00_{\pm11.31}(\gap{7.00})$ & $\gap{3.73}$ & $1.00_{\pm0.00}$ \\
    \cmidrule(r){1-8}
    \multirow{4}{*}{\makecell{Gaussian \\ $\tau=0.03$ \\ $\lambda=0.03$ \\ \# epoch = 1}}
    & \RURK{} ($\mathbf{v=1}$)    & $\mathbf{99.52_{\pm0.37}(\gap{0.44})}$ & $\mathbf{5.67_{\pm2.36}(\gap{2.66})}$ & $\mathbf{93.83_{\pm0.90}(\gap{0.97})}$ & $\mathbf{33.33_{\pm12.26}(\gap{6.33})}$ & $\mathbf{\gap{2.60}}$ & $\mathbf{2.00_{\pm0.00}}$ \\
    & \RURK{} ($v=2$)    & $99.52_{\pm0.37}(\gap{0.44})$ & $4.67_{\pm0.94}(\gap{3.66})$ & $93.87_{\pm0.94}(\gap{0.93})$ & $33.33_{\pm12.26}(\gap{6.33})$ & $\gap{2.84}$ & $1.33_{\pm0.47}$ \\
    & \RURK{} ($v=3$)    & $99.52_{\pm0.37}(\gap{0.44})$ & $5.00_{\pm1.41}(\gap{3.33})$ & $93.87_{\pm0.94}(\gap{0.93})$ & $33.33_{\pm12.26}(\gap{6.33})$ & $\gap{2.76}$ & $1.33_{\pm0.47}$ \\
    & \RURK{} ($v=4$)    & $99.52_{\pm0.37}(\gap{0.44})$ & $5.67_{\pm2.36}(\gap{2.66})$ & $93.83_{\pm0.90}(\gap{0.97})$ & $33.33_{\pm12.26}(\gap{6.33})$ & $\gap{2.60}$ & $1.00_{\pm0.00}$ \\
    \bottomrule
  \end{tabular}
  \end{adjustbox}
\end{table}

\begin{figure}[htb]
\centering 
\includegraphics[width=\linewidth]{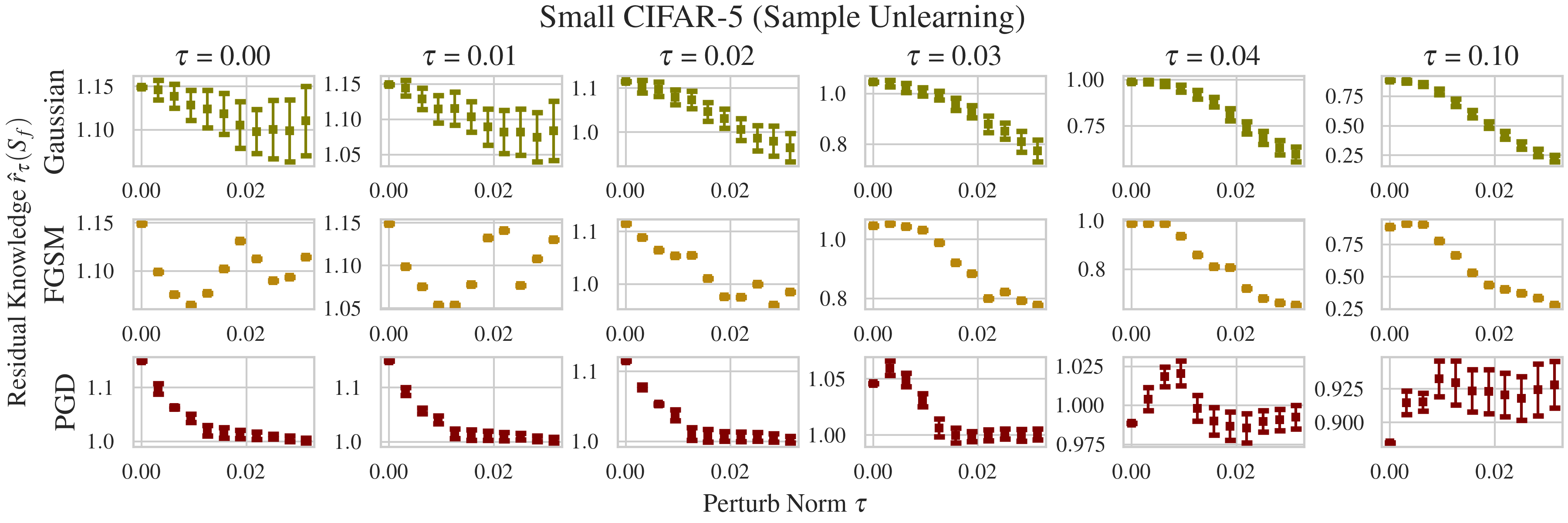}
\caption{\small 
Residual knowledge $\hat{r}_\tau(\calS_f)$ of the proposed \RURK{}, \Original{}, and other unlearning methods on small CIFAR-5 with sample unlearning, evaluated under varying perturbation norms $\tau$, using untargeted Gaussian noise ($p=2$), targeted FGSM ($p=\infty$), and targeted PGD ($p=\infty$) to draw $c=100$ samples from $\calB_p(\bx, \tau)$.
}
\label{app:fig:small-cifar-5-all-rk-tau}
\end{figure}

\clearpage
\begin{table}[hbt]
  \caption{\small 
  The performance summary of various unlearning methods on small CIFAR-5 with VGG-11.
  Results are reported in the format $a_{\pm b}$, indicating the mean $a$ and standard deviation $b$ over 3 independent trials. The absolute performance gap relative to \Retrain{} is shown in (\gap{blue}).
  For methods that fail to recover the forget-set knowledge within 30 training epochs, the re-learn time is reported as ``>30''.
  }
  \label{app:tab:sample-unlearning-cifar-5-full-vgg}
  \begin{adjustbox}{max width=\linewidth}
  \centering
  \small
  \begin{tabular}{clcccccc}
    \toprule
    \multirow{2}{*}{Datasets} & \multicolumn{1}{c}{\multirow{2}{*}{Methods}} & \multicolumn{6}{c}{Evaluation Metrics}                   \\
    \cmidrule(r){3-8}
     &     & Retain Acc. (\%) & Unlearn Acc. (\%) & Test Acc. (\%) & MIA Acc. (\%) & Avg. Gap & \makecell{Re-learn\\ Time (\# Epoch)}\\
    \midrule
    \multirow{6}{*}{\rotatebox[origin=c]{90}{\makecell{Small\\ CIFAR-5}}} & \Original{}
    & $100.00_{\pm0.00}(\gap{2.56})$ & $0.00_{\pm0.00}(\gap{15.00})$ & $94.10_{\pm0.00}(\gap{2.50})$ & $0.00_{\pm0.00}(\gap{16.00})$ & $\gap{9.01}$ & -\\
    & \Retrain{}  & $97.44_{\pm0.00}(\gap{0.00})$ & $15.00_{\pm0.00}(\gap{0.00})$ & $91.60_{\pm0.00}(\gap{0.00})$ & $16.00_{\pm0.00}(\gap{0.00})$ & $\gap{0.00}$ & $4.33_{\pm0.47}$ \\
    \cmidrule(r){2-8}
    & \NGD{}      & $95.00_{\pm0.00}(\gap{2.44})$ & $3.00_{\pm0.00}(\gap{12.00})$ & $91.40_{\pm0.00}(\gap{0.20})$ & $1.00_{\pm0.00}(\gap{15.00})$ & $\gap{7.41}$ & $1.00_{\pm0.00}$ \\
    & \RURK{}     & $99.33_{\pm0.00}(\gap{1.89})$ & $7.00_{\pm0.00}(\gap{8.00})$ & $92.70_{\pm0.00}(\gap{1.10})$ & $22.00_{\pm0.00}(\gap{6.00})$ & $\gap{4.25}$ & $2.00_{\pm0.00}$ \\
    \bottomrule
  \end{tabular}
  \end{adjustbox}
\end{table}

\begin{figure}[htb]
\centering 
\includegraphics[width=.5\linewidth]{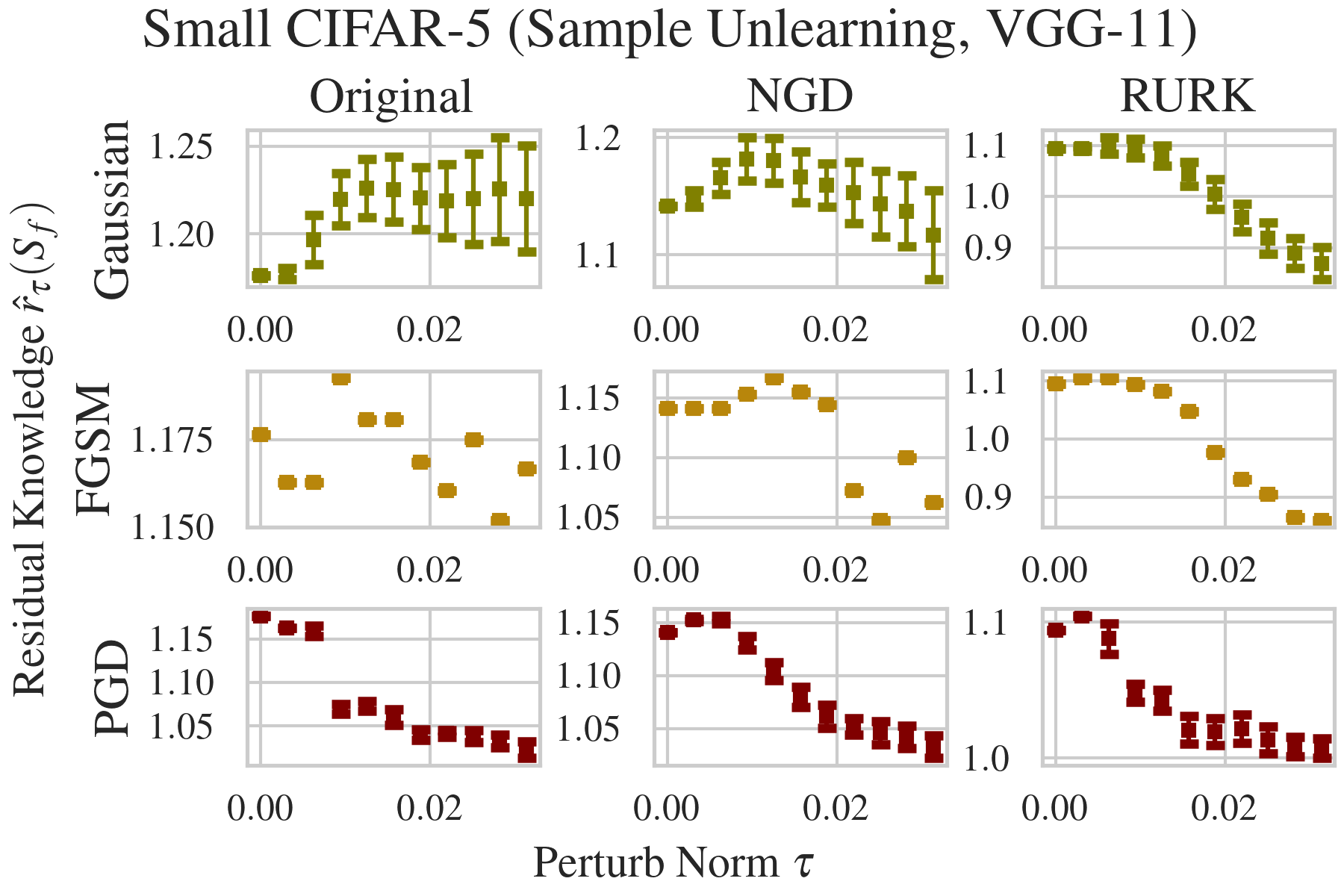}
\caption{\small 
Residual knowledge $\hat{r}_\tau(\calS_f)$ of the proposed \RURK{}, \Original{}, and \NGD{} on small CIFAR-5 with sample unlearning and VGG-11 structure, evaluated under varying perturbation norms $\tau$, using untargeted Gaussian noise ($p=2$), targeted FGSM ($p=\infty$), and targeted PGD ($p=\infty$) to draw $c=100$ samples from $\calB_p(\bx, \tau)$.
}
\label{app:fig:small-cifar-5-rk-vgg}
\end{figure}

\subsection{The prevalence of residual knowledge}\label{app:prevalence-rk}
In Figure~\ref{fig:RK-ratio}, we report the estimated residual knowledge over the entire forget set $\hat{r}_\tau(\calS_f)$. We also provide per-sample estimates $\hat{r}_\tau((\bx, y))$ for each $(\bx, y) \in \calS_f$, as defined in Eq.~\eqref{eq:residual-knowledge-ratio}. Table~\ref{app:tab:residual-knowledge-quantile} summarizes the proportion of forget samples exhibiting residual knowledge greater than one—i.e., those still recognizable after unlearning—under different perturbation norms $\tau$, evaluated on small CIFAR-5 (with \NTK{}) and CIFAR-10 (with \NGD{}). Remarkably, even under imperceptibly small perturbations (e.g., $\tau = 0.013$), about 11\% of the forget samples in small CIFAR-5 and more than 8\% (approximately 160 samples) in CIFAR-10 still exhibit residual knowledge above one. These findings highlight that residual knowledge is not only prevalent but also poses a significant privacy risk, emphasizing the need for stronger certification and mitigation techniques in machine unlearning.

\begin{table}[hbt]
  \caption{\small 
  Percentage of forget samples that have residual knowledge $\hat{r}_\tau((\bx, y))$ large than 1.
  }
  \label{app:tab:residual-knowledge-quantile}
  \begin{adjustbox}{max width=\linewidth}
  \centering
  \small
  \begin{tabular}{clccccccccccc}
    \toprule
    \multirow{2}{*}{Datasets} & \multicolumn{1}{c}{\multirow{2}{*}{Methods}} & \multicolumn{11}{c}{Gaussian Perturbation Norm $\tau$}                   \\
    \cmidrule(r){3-13}
     &     & $0.000$ & $0.003$ & $0.006$ & $0.009$ & $0.013$ & $0.016$ & $0.019$ & $0.022$ & $0.025$ & $0.028$ & $0.031$\\
    \midrule
    Small CIFAR-5 & \NTK{} & $0.00_{\pm0.00}$ & $3.00_{\pm1.71}$ & $5.00_{\pm2.18}$ & $6.00_{\pm2.37}$ & $11.00_{\pm3.13}$ & $14.00_{\pm3.47}$ & $26.00_{\pm4.39}$ & $34.00_{\pm4.74}$ & $39.00_{\pm4.88}$ & $45.00_{\pm4.97}$ & $48.00_{\pm5.00}$ \\
    \midrule\midrule
    CIFAR-10 & \NGD{}    & $0.00_{\pm0.00}$ & $1.85_{\pm1.35}$ & $5.25_{\pm2.23}$ & $7.25_{\pm2.59}$ & $8.50_{\pm2.79}$ & $8.55_{\pm2.80}$ & $9.30_{\pm2.90}$ & $8.90_{\pm2.85}$ & $8.10_{\pm2.73}$ & $7.10_{\pm2.57}$ & $6.95_{\pm2.54}$\\

    \bottomrule
  \end{tabular}
  \end{adjustbox}
\end{table}

\clearpage
\subsection{Disagreement and unlearn accuracy on perturbed forget samples}\label{app-additional-exp-disa}
We provide the adversarial disagreement (\S~\ref{sec:residual-knowledge-algo}) and the unlearn accuracy on perturbed forget samples for selected baselines on both Small CIFAR-5 and CIFAR-10 (cf.~Table~\ref{tab:acc}) in the following four tables.

In Table~\ref{app:tab:disagreement}, \RURK{} achieves the lowest disagreement among all baselines when the perturbation norm is small (e.g., below 0.0125), consistent with the stable residual knowledge curves shown in Figure~2. As the perturbation magnitude increases, however, \RURK{} reduces residual knowledge more aggressively, leading to higher disagreement—a trade-off that arises naturally in multi-class settings, where low disagreement does not necessarily imply residual knowledge close to 1 as in binary cases. Similarly, in Table~\ref{app:tab:ua}, \RURK{} attains unlearn accuracy on perturbed forget samples closest to that of the re-trained model, particularly for small perturbations, indicating reduced distinguishability between the two models around the forget region. It is worth noting that \RURK{} is designed to control residual knowledge rather than disagreement directly, as managing disagreement across multiple classes is inherently more complex.

\begin{table}[h]
  \caption{\small 
  Disagreement of the perturbed forget samples on the Small CIFAR-5 and CIFAR-10 datasets over varying perturbation norms $\tau$.
  }
  \label{app:tab:disagreement}
  \begin{adjustbox}{max width=\linewidth}
  \centering
  \small
  \begin{tabular}{clccccccccccc}
    \toprule
    \multirow{2}{*}{Datasets} & \multicolumn{1}{c}{\multirow{2}{*}{Methods}} & \multicolumn{11}{c}{Gaussian Perturbation Norm $\tau$}                   \\
    \cmidrule(r){3-13}
     &     & $0.0000$ &$0.0031$ & $0.0063$ &	$0.0094$ &$0.0125$ & $0.0157$ &	$0.0188$ &	$0.0220$ &	$0.0251$ &	$0.0282$ & $0.0314$\\
    \midrule
    \multirow{5}{*}{Small CIFAR-5} &\Original{} & $0.1300$ & $0.1273$ & $0.1277$ & $0.1332$ & $0.1450$ & $0.1471$ & $0.1450$ & $0.1472$ & $0.1573$ & $0.1665$ & $0.1836$ \\
                                   & \Retrain{} & $0.0000$ & $0.0000$ & $0.0000$ & $0.0000$ & $0.0000$ & $0.0000$ & $0.0000$ & $0.0000$ & $0.0000$ & $0.0000$ & $0.0000$ \\
                                   &\Fisher{} & $0.1400$ & $0.1345$ & $0.1249$ & $0.1188$ & $0.1129$ & $0.1181$ & $0.1316$ & $0.1532$ & $0.1811$ & $0.2054$ & $0.2214$ \\
                                   &\NTK{} & $0.0900$ & $0.0893$ & $0.0884$ & $0.0873$ & $0.0805$ & $0.0807$ & $0.0753$ & $0.0797$ & $0.0903$ & $0.1084$ & $0.1285$ \\
                                   &\RURK{} & $0.0600$ & $0.0435$ & $0.0397$ & $0.0448$ & $0.0506$ & $0.0713$ & $0.0880$ & $0.1247$ & $0.1510$ & $0.1658$ & $0.1765$ \\
    \midrule\midrule
    \multirow{4}{*}{CIFAR-10} & \Original{} & $0.1020$ & $0.1048$ & $0.1148$ & $0.1297$ & $0.1462$ & $0.1626$ & $0.1755$ & $0.1968$ & $0.2136$ & $0.2431$ & $0.2593$ \\
    & \Retrain{} & $0.0000$ & $0.0000$ & $0.0000$ & $0.0000$ & $0.0000$ & $0.0000$ & $0.0000$ & $0.0000$ & $0.0000$ & $0.0000$ & $0.0000$ \\
& \NGD{} & $0.1840$ & $0.1873$ & $0.2209$ & $0.3391$ & $0.4920$ & $0.6069$ & $0.6837$ & $0.7347$ & $0.7699$ & $0.7906$ & $0.8005$ \\
& \RURK{} & $0.1515$ & $0.1549$ & $0.1701$ & $0.1996$ & $0.2395$ & $0.2898$ & $0.3355$ & $0.3855$ & $0.4353$ & $0.4802$ & $0.5310$ \\
    \bottomrule
  \end{tabular}
  \end{adjustbox}
\end{table}

\begin{table}[h]
  \caption{\small 
  Unlearn accuracy of the perturbed forget samples on the Small CIFAR-5 and CIFAR-10 datasets over varying perturbation norms $\tau$.
  }
  \label{app:tab:ua}
  \begin{adjustbox}{max width=\linewidth}
  \centering
  \small
  \begin{tabular}{clccccccccccc}
    \toprule
    \multirow{2}{*}{Datasets} & \multicolumn{1}{c}{\multirow{2}{*}{Methods}} & \multicolumn{11}{c}{Gaussian Perturbation Norm $\tau$}                   \\
    \cmidrule(r){3-13}
     &     & $0.0000$ &$0.0031$ & $0.0063$ &	$0.0094$ &$0.0125$ & $0.0157$ &	$0.0188$ &	$0.0220$ &	$0.0251$ &	$0.0282$ & $0.0314$\\
    \midrule
    \multirow{5}{*}{Small CIFAR-5} & \Original{} & $0.0000$ & $0.0000$ & $0.0000$ & $0.0004$ & $0.0053$ & $0.0205$ & $0.0442$ & $0.0660$ & $0.0913$ & $0.1161$ & $0.1364$ \\
& \Retrain{} & $0.1300$ & $0.1273$ & $0.1277$ & $0.1336$ & $0.1502$ & $0.1665$ & $0.1855$ & $0.2082$ & $0.2417$ & $0.2746$ & $0.3119$ \\
& \Fisher{} & $0.0900$ & $0.0938$ & $0.1056$ & $0.1265$ & $0.1566$ & $0.2013$ & $0.2473$ & $0.3078$ & $0.3632$ & $0.4149$ & $0.4675$ \\
& \NTK{} & $0.0700$ & $0.0652$ & $0.0681$ & $0.0797$ & $0.0986$ & $0.1141$ & $0.1333$ & $0.1462$ & $0.1700$ & $0.1891$ & $0.2120$ \\
& \RURK{} & $0.1400$ & $0.1521$ & $0.1625$ & $0.1736$ & $0.1879$ & $0.2110$ & $0.2436$ & $0.2846$ & $0.3258$ & $0.3713$ & $0.4124$ \\
    \midrule\midrule
    \multirow{4}{*}{CIFAR-10} & \Original{} & $0.0000$ & $0.0000$ & $0.0000$ & $0.0010$ & $0.0061$ & $0.0178$ & $0.0351$ & $0.0599$ & $0.0896$ & $0.1253$ & $0.1660$ \\
& \Retrain{} & $0.1020$ & $0.1048$ & $0.1148$ & $0.1301$ & $0.1478$ & $0.1666$ & $0.1834$ & $0.2046$ & $0.2245$ & $0.2478$ & $0.2670$ \\
& \NGD{} & $0.1360$ & $0.1396$ & $0.1754$ & $0.3227$ & $0.5014$ & $0.6437$ & $0.7423$ & $0.8070$ & $0.8536$ & $0.8862$ & $0.9088$ \\
& \RURK{} & $0.1110$ & $0.1187$ & $0.1424$ & $0.1845$ & $0.2330$ & $0.2930$ & $0.3634$ & $0.4256$ & $0.4935$ & $0.5581$ & $0.6193$ \\
    \bottomrule
  \end{tabular}
  \end{adjustbox}
\end{table}
\end{document}